\documentclass{article}

\usepackage[nonatbib, final]{neurips_2023}

\usepackage[version=3, shorthands]{preamble}  %
\graphicspath{{figures/}{appendices/figures}} %

\usepackage{cancel}

\usepackage{thmtools,thm-restate}  %

\usepackage{graphbox}
\usetikzlibrary{positioning, fit, arrows.meta, cd, backgrounds, patterns, calc}

\tikzstyle{inputNode}=[draw,circle,minimum size=10pt,inner sep=0pt]
\tikzstyle{flowNode}=[draw, circle,minimum size=10pt,inner sep=0pt]
\tikzstyle{stateTransition}=[-stealth]
\tikzstyle{GraphNode}=[draw,circle,minimum size=10pt,inner sep=0.5pt]
\tikzstyle{GraphEdge}=[-stealth]

\tikzstyle{Node}=[draw, circle, inner sep=0.5pt]
\tikzstyle{Node}=[]
\tikzstyle{Node}=[draw,circle,minimum size=10pt,inner sep=0.6pt, thick]

\tikzstyle{German}=[draw,circle,minimum size=45pt,inner sep=1pt, thick]

\tikzcdset{arrow style=tikz, diagrams={>= stealth}}

\usepackage{algorithm,algpseudocode}

\usepackage{relsize}

\setlist[itemize]{leftmargin=*,itemsep=0em}
\setlist[enumerate]{leftmargin=*,itemsep=0em}

\usepackage{url}

\setlength{\parskip}{3pt plus 1.5pt}

\DeclareSIUnit{\microsecond}{\SIUnitSymbolMicro s}

\title{Causal normalizing flows: from theory to practice}

\newcommand{\affnum}[1]{{\normalsize\rm\textsuperscript{\,#1}}}
\newcommand{\affiliation}[2]{\normalsize\rm\textsuperscript{#1}#2}
\newcommand{\nname}[1]{{#1}}

\author{%
    \nname{Adri\'an~Javaloy}\affnum{1,}\thanks{Correspondence to \texttt{\href{mailto:ajavaloy@cs.uni-saarland.de}{ajavaloy@cs.uni-saarland.de}}.}   \hspace{5pt}
    \nname{Pablo~S\'anchez-Mart\'in}\affnum{1,2}  \hspace{5pt}
    \nname{Isabel~Valera}\affnum{1,3} \\[6pt]
    \affiliation{1}{Department of Computer Science of Saarland University, Saarbrücken, Germany} \\
    \affiliation{2}{Max Planck Institute for Intelligent Systems, Tübingen, Germany}\\
    \affiliation{3}{Max Planck Institute for Software Systems, Saarbrücken, Germany}\\
}

\begin{document}
	\doparttoc %
	\faketableofcontents %

    \maketitle

\begin{abstract}

In this work, we deepen on the use of normalizing flows for causal inference. %
Specifically, we first leverage recent results on non-linear ICA to show that causal models are identifiable from observational data given a causal ordering, and thus can be recovered using autoregressive normalizing flows (NFs).
Second, we analyse different  design and learning choices for \emph{causal normalizing flows} to capture  the underlying causal  data-generating process.  
Third, we describe how to implement the \textit{do-operator} in  causal NFs, and thus, how to answer interventional and counterfactual questions.
Finally,  in our experiments,  we validate our design and training choices through a comprehensive ablation study; compare causal NFs to other approaches for approximating causal models; and  empirically demonstrate that causal NFs can be used to address real-world problems---where mixed discrete-continuous data and partial knowledge on the causal graph is the norm. 
The code for this work can be found at \url{https://github.com/psanch21/causal-flows}.

\end{abstract}

\section{Introduction} \label{sec:intro}

\begin{wrapfigure}[17]{R}{0.45\textwidth}
    \centering
    \vspace{-2\baselineskip}
    \includegraphics[width=0.45\textwidth, keepaspectratio]{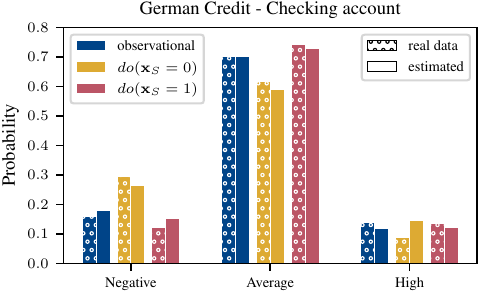}
    \caption{Observational and interventional distributions of the categorical variable \textit{checking account} of the German Credit dataset~\citep{GermanData}, and their estimated values according to a causal normalizing flow. $\rvx_S$ is a binary variable representing the users' sex.}
    \label{fig:preliminaries}
\end{wrapfigure}
Deep learning is increasingly used for causal reasoning, that is, for finding the underlying causal relationships among the observed variables (\emph{causal discovery}), and answering \emph{what-if} questions (\emph{causal inference}) from available data~\citep{pearl2009causality}.   
Our focus in this paper is to solve causal inference problems using only observational data and (potentially partial) knowledge on the causal graph of the underlying structural causal model (SCM).
This is exemplified in \cref{fig:preliminaries}, where \replace{1}{we estimated}{our proposed framework is able to estimate} the (unobserved) \add{1}{causal effect of externally intervening on the sensitive attribute (}red and yellow distributions\add{1}{)}\remove{1}{ using our proposed framework}, \emph{\add{1}{using\,}solely \replace{1}{from the}{observed data} \add{1}{(}blue distribution\add{1}{)} and partial information about the causal \replace{1}{graph}{relationship between features}}.

In this context, previous works have mostly relied on different deep neural networks (DNNs)---\eg, normalizing flows (NFs)~\citep{DBLP:conf/aaai/Balgi0D22,NasrEsfahany2023CounterfactualIO,9815055,pawlowski2020deep}, generative adversarial networks (GANs)~\citep{kocaoglu2018causalgan,Xia2022NeuralCM}, variational autoencoders (VAEs)~\citep{karimi2020algorithmic,Yang2020CausalVAEDR}, Gaussian processes (GPs)~\citep{karimi2020algorithmic}, or denoising diffusion probabilistic models (DDPMs)~\citep{Chao2023InterventionalAC}---to iteratively estimate the conditional distribution of each observed variable given its causal parents, thus using an independent DNN per observed variable. 
Hence, to predict the effect of an intervention in the causal data-generating process, these approaches fix the value of the intervened variables when computing the new value for their children. 
However, they may also suffer from error propagation---which worsens with long causal paths---and a high number of parameters, which is addressed in practice with ad-hoc parameter amortization techniques~\citep{pawlowski2020deep}.
Moreover, several approaches also rely on implicit distributions~\citep{kocaoglu2018causalgan,pawlowski2020deep,Chao2023InterventionalAC,Sanchez2022DiffusionCM}, and thus do not allow evaluating the learnt distribution. 

In contrast, and similar to~\citep{pmlr-v130-khemakhem21a, SnchezMartn2021VACA,  zevcevic2021relating, Sanchez2022DiffusionCM}, we here  aim at learning the full causal-generating process using a single DNN and, in particular, using a \emph{causal normalizing flow}. %
To this end, we first theoretically demonstrate that causal NFs are a natural choice to approximate a broad class of causal data-generating processes~(\cref{sec:identifiability}). Then, we design causal NFs that inherently satisfy the necessary conditions to capture the underlying causal dependencies~(\cref{sec:designing}), and introduce an implementation of the do-operator that allows us to efficiently solve causal inference tasks~(\cref{sec:interventions}). 
{Importantly, our causal NF framework allows us to deal with mixed continuous-discrete data and partial knowledge on the causal graph, which is key for real-world applications.}
Finally, we empirically validate our findings and show that causal NFs outperform competing methods also using a single DNN to approximate the causal data-generating process (\cref{sec:experiments}).

\paragraph{Related work}
To the best of our knowledge, the closest works to ours are \citep{pmlr-v130-khemakhem21a, SnchezMartn2021VACA,zevcevic2021relating}, as they all capture the whole causal data-generating process using a single DNN. 
Our approach generalizes the result from \citet{pmlr-v130-khemakhem21a}, which also relies on autoregressive normalizing flows (ANF), 
{but only considers affine ANFs and data with additive noise.}
In contrast, our work provides a tighter connection between ANFs and SCMs (affine or not), more general identifiability results, and sound ways to both embed causal knowledge in the ANF, and to apply the do-operator.
Another {relevant} line of works connect SCMs with GNNs~\citep{Sanchez2022DiffusionCM,zevcevic2021relating}, and despite {making little assumptions on the underlying SCM}, 
they lack identifiability guarantees, and 
{interventions on the GNN are performed}
by severing the graph, which we show in \cref{app:do_operator} may not work in general.
Nevertheless, {it is worth noting that} the way we use \graph in the network design (\cref{sec:designing}) is inspired by these works. 
\add{2}{Finally, contemporary works have also proposed to use \graph to mask the connections of an ANF~\citep{DBLP:conf/aaai/Balgi0D22,fan2023cf,9815055}, albeit without the general theoretical results and design characterization provided in this work.}

\section{Preliminaries and background} \label{sec:background}

\tikzcdset{scale cd/.style={every label/.append style={scale=#1},
    cells={nodes={scale=#1}}}}

\msg{Here we will talk about...}

\subsection{Structural causal models, interventions, and counterfactuals} \label{subsec:causality}

\msg{Definition of a SCM.}

A structural causal model (SCM)~\citep{pearl2009causality} is a tuple $\scm[{\funcxb,\distribution[\rvu]}]$ describing a data-generating process that transforms a set of \sizex exogenous random variables, $\rvu \sim \distribution[\rvu]$, into a set of \sizex (observed) endogenous random variables, $\rvx$, according to $\funcxb$. 
Specifically, the endogenous variables are computed as follows:
\begin{align}
    \rvu \coloneqq (\ervu_1, \ervu_2, \dots, \ervu_d) \sim \distribution[\rvu] \,, && \ervx_i = \funcx[i](\rvx_\parents{i}, \ervu_i) \,, && \text{for } i = 1, 2, \dots, \sizex \,. \label{eq:scm-def}
\end{align}
In other words, each \nth{i} component of \funcxb maps the \nth{i} exogenous variable $\ervu_i$ to the \nth{i} endogenous variable $\ervx_i$, given the subset of the endogenous variables that directly cause $\ervx_i$, $\rvx_\parents{i}$ (causal parents).

\msg{A SCM induces a causal graph through the functional dependencies of its causal laws.}

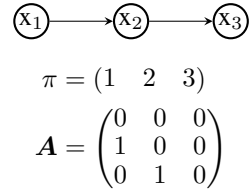
\begin{wrapfigure}[10]{R}{0.25\textwidth}
    \centering
    \vspace{-\baselineskip}
    \begin{subfigure}[b]{.25\textwidth}
        \centering
        \begin{tikzcd}
            |[Node]| {\ervx_1} \arrow[r] & |[Node]| {\ervx_2} \arrow[r] & |[Node]| {\ervx_3}
        \end{tikzcd}
    \end{subfigure} \\[-.5\baselineskip]
    \begin{align*}
         \pi &= \begin{pmatrix} 1 & 2 & 3\end{pmatrix} \\
         \graph &= 
         \begin{pmatrix}
            0 & 0 & 0 \\
            1 & 0 & 0 \\
            0 & 1 & 0
         \end{pmatrix} 
    \end{align*}
    \vspace{-\baselineskip}
    \caption{Causal graph, and its causal ordering $\pi$  and adjacency matrix $\graph$.} \label{subfig:example-causal-graph}
\end{wrapfigure} %

 An SCM also induces a causal graph, a powerful tool to reason about the causal dependencies of the system.
Namely, the causal graph of an SCM \scm[{\funcxb, \distribution[\rvu]}] is the directed graph that describes the functional dependencies of the causal mechanism. 
We can define the \emph{adjacency matrix of the causal graph} as $\graph \coloneqq \nabla_\rvx \funcxb(\rvx, \rvu) \neq \mathbf{0}$, where $\mathbf{0}$ is the constant zero function, and the comparisons are made elementwise.
Furthermore, the direct causes of the \nth{i} variable (\parents{i} in \cref{eq:scm-def}) are the \emph{parent} nodes of the \nth{i} node in \graph, and the \emph{ancestors} of this node (which we denoted by \ancestors{i}) are its (in)direct causes.
See \cref{subfig:example-causal-graph} for an example of a causal chain.

In the case that \graph is acyclic, we can pick a {causal ordering} describing which variables do \emph{not} cause others, and which ones \emph{may} cause them.
Namely, a permutation $\pi$ is said to be a \emph{causal ordering} of an SCM \scm if, for every $\ervx_i$ that directly causes $\ervx_j$, we have $\pi(i) < \pi(j)$. 
Note that this definition equals that of a topological ordering and, without loss of generality, we will assume throughout this work that the variables are sorted according to a causal ordering.

\msg{Talk about interventions and counterfactuals}

Importantly, besides describing the (observational) data-generating process,  %
SCMs enable \emph{causal inference} by allowing us to answer \textit{what-if} questions regarding: %
i)~how the distribution over the observed variables would be if we force a fixed value on one of them (\emph{interventional queries}); and ii)~what would have happened to a specific observation, if one of its dimensions would have taken a  different value (\emph{counterfactual queries}).

\paragraph{Structural equivalence}
\ajnote{I need to introduce the diameter somewhere.}

To reason about causal dependencies, we introduce the notion of structural equivalence.
We say that two matrices $\mS$ and $\mR$ are \emph{structurally equivalent}, denoted $\mS \equiv \mR$, if both matrices have zeroes exactly in the same positions. 
Similarly, we say that $\mS$ is \emph{structurally sparser} than $\mR$, denoted as $\mS \preceq \mR$, if whenever an element of $\mR$ is zero, the same element of $\mS$ is zero. 

\subsection{Autoregressive normalizing flows} \label{subsec:anfs}

\msg{Definition of normalizing flow}

Normalizing flows (NFs)~\citep{papamakarios2021normalizing} are a model family that express the probability density of a set of observations using the change-of-variables rule. 
Given an observed random vector $\rvx$ of size \sizex, a normalizing flow is a neural network with parameters $\thetab$ that takes $\rvx$ as input, and outputs
\begin{align}
    \flow(\rvx) \eqqcolon \rvu \sim \distribution[\rvu] && \text{with log-density} && \log p(\rvx) = \log p(\flow(\rvx)) + \log \abs{\det{\nabla_\rvx \flow(\rvx)}} \,, \label{eq:nf-def}
\end{align}
where \distribution[\rvu] is a base distribution that is easy to evaluate and sample from.
Since \cref{eq:nf-def} provides the log-likelihood expression, it naturally leads to the use of maximum likelihood estimation (MLE)~\citep{bishop2006pattern} for learning the network parameters $\thetab$.
\msg{Shortcomings: invertibility and log-det-jacobian}
While many approaches have been proposed in the literature~\citep{papamakarios2021normalizing}, here we focus on %
autoregressive normalizing flows (ANFs)~\citep{NIPS2016_ddeebdee,NIPS2017_6c1da886}.
Specifically, in ANFs the 
\nth{i} output of each layer $l$ of the network, denoted by $\ervz_i^l$, is computed as
\begin{equation}
    \ervz_i^{l} \coloneqq \tau_i^{l}(\ervz_i^{l-1}; \rvh_i^{l})\,, \quad \text{where} \quad \rvh_i^{l} \coloneqq c_i^{l}(\range[\rvz^{l-1}]{i-1}) \, ,  \label{eq:anf}
\end{equation}
and where $\tau_i$ and $c_i$ are termed the transformer and the conditioner, respectively. 
The transformer is a strictly monotonic function of $\ervz_i^{l-1}$, while the conditioner can be arbitrarily complex, yet it only takes the variables preceding $\ervz_i$ as input.
As a result, ANFs have triangular Jacobian matrices, $\nabla_\rvx \flow(\rvx)$.

\section{Causal normalizing flows} \label{sec:identifiability}

\msg{Problem statement, we have data coming from a SCM (maybe a causal graph), and want to learn a mapping u to x from observational data.}

\paragraph{Problem statement} 

Assume that we have a sequence of \iid observations $\mX = \{\vx_1, \vx_2, \dots, \vx_N \}$ generated according to an unknown SCM \scm, from which we have partial knowledge of its causal structure. 
Specifically, we know at least its causal ordering $\pi$, and at most the whole causal graph \graph. 
Our objective in this work is to design and learn an ANF $\flow$, with parameters $\thetab$, that captures \scm by maximizing the observational likelihood (MLE), \ie,
\begin{equation}
    \maximize_\thetab \frac{1}{N} \sum_{n=1}^N \Big[ \log p\left(\flow\left(\vx_n\right)\right) + \log \abs{\det{\nabla_\rvx \flow\left(\vx_n\right)}} \Big] \,, \label{eq:mle}
\end{equation}
and that can successfully answer interventional and counterfactual queries during deployment, thus enabling causal inference. 
We refer to such a model as a \emph{causal normalizing flow}.

\paragraph{Assumptions} We restrict the class of SCMs considered by making the following fairly common assumptions: 
i)~\emph{{\remove{1}{$C^1$-}diffeomorphic} data-generating process}, \ie, \funcxb is invertible, and both \funcxb and its inverse are\remove{1}{ continuously} differentiable; 
ii)~\emph{no feedback loops}, \ie, the induced causal graph is acyclic;
and iii)~\emph{causal sufficiency}, \ie, the \replace{1}{endogenous}{exogenous} variables are mutually independent, %
$p(\rvu) = \prod_i p(\rvu_i)$.

\msg{SCMs can be written as TMI maps}

\paragraph{SCMs as TMI maps}  

To achieve our objective, and bridge the gap between SCMs and normalizing flows, we resort to triangular monotonic increasing (TMI) maps, which are autoregressive functions whose \nth{i} component is strictly monotonic increasing \wrt* its \nth{i} input. 
TMI maps hold a number of useful properties, such as being closed under composition and inversions.
Conveniently, a layer of an ANF (\cref{eq:anf}) is a parametric TMI map that can approximate any other TMI map arbitrarily well, which makes ANFs also TMI maps approximators;\footnote{While it is a common to shuffle the inputs for each layer, we keep the same order across the network.} a fact that has been exploited in the past to prove that ANFs are universal density approximators~\citep{papamakarios2021normalizing}.

We now show that any SCM can be rewritten as a tuple $(\funcb, \distribution[\rvu]) \in \scmf \times \scmp$, where \scmf is the set of all TMI maps, and \distribution[\rvu] is the set of all fully-factorized distributions, $p(\rvu) = \prod_i p(\rvu_i)$. 
First, given an acyclic SCM \scm[{\funcxb, \distribution[\rvu]}] with $\funcxb: \sX \times \sU \rightarrow \sX$ as in \cref{eq:scm-def}, we can always unroll \funcxb by recursively replacing each $\ervx_i$ in the causal equation by its function \funcx[i] (see \cref{subfig:unrolled} for an example), obtaining an equivalent non-recursive function $\mathbf{\hat{f}}: \sU \rightarrow \sX$.
This function $\mathbf{\hat{f}}$ writes each $\ervx_i$ as a function of its exogenous ancestors $\rvu_\ancestors{i}$ and, since \scm is acyclic, $\mathbf{\hat{f}}$ is a triangular map.
For simplicity, assume that $\distribution[\rvu]$ is a standard uniform distribution. Then, following the causal ordering, we can apply a Darmois construction~\citep{darmois1952,HYVARINEN1999429} and replace each function $\hat{f}_i$ by the conditional quantile function of the variable $\ervx_i$ given $\rvx_\parents{i}$ (which depends on $\rvu_\ancestors{i}$) eventually arriving to a TMI map \funcb.
This procedure follows the proof for non-identifiability in ICA~\citep{HYVARINEN1999429}, but restricted to one ordering. 
The case for a general \distribution[\rvu] follows a similar construction, but using a Kn\"othe-Rosenblatt (KR) transport~\citep{Knothe1957ContributionsTT,10.2307/2236692} instead.

\msg{Identifiability results}

\paragraph{Isolating the exogenous variables}

Now that we have SCMs and causal NFs under the same family class---\ie, the family $\scmf \times \scmp$ of TMI maps with fully-factorized distributions---we leverage existing results on identifiability to show that we can %
find a causal NF $\flow$ such that the \nth{i} component of $\flow(\rvx)$ is a function of the true exogenous variable $\ervu_i$ that generated the observed data.
More precisely, note that, since we can rewrite \scm as an element of the family $\scmf \times \scmp$, identifying the true exogenous variables of an SCM \scm is equivalent to solving a non-linear ICA problem with TMI generators, for which \citet{Xi2022IndeterminacyIG} proved the following\add{1}{\xspace(re-stated to match our setting)}:

\begin{theorem}[Identifiability] \label{thrm:bloem}
    If two elements of the family $\scmf \times \scmp$ (as defined above) produce the same observational distribution, then the two data-generating \replace{1}{process}{processes} differ by an invertible, component-wise transformation of the variables $\rvu$. %
\end{theorem}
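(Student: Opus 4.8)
The plan is to reduce the statement to a uniqueness property of one-dimensional monotone transport. Let the two elements be $(\mathbf{f}, p)$ and $(\mathbf{g}, q)$, where $\mathbf{f},\mathbf{g}$ are TMI maps and $p,q$ are fully-factorized, and suppose the laws of $\mathbf{f}(\mathbf{u})$ with $\mathbf{u}\sim p$ and $\mathbf{g}(\mathbf{v})$ with $\mathbf{v}\sim q$ coincide. Since TMI maps are invertible and closed under composition and inversion, I would first form $\mathbf{T}\coloneqq \mathbf{g}^{-1}\circ\mathbf{f}$, which is again a TMI map and which pushes $p$ forward to $q$; in distribution, $\mathbf{v}=\mathbf{T}(\mathbf{u})$. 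The theorem then becomes the following cleaner claim: a TMI map transporting a product measure to a product measure must be \emph{component-wise}, i.e.\ $T_i(\mathbf{u})=T_i(u_i)$ for each $i$, and such a map is trivially invertible component-wise, which is exactly the asserted indeterminacy.

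I would prove this claim by induction on the coordinate $i$, exploiting triangularity. The base case $i=1$ is immediate, since $T_1(\mathbf{u})=T_1(u_1)$ already depends on $u_1$ alone. For the inductive step, assume $T_1,\dots,T_{i-1}$ are component-wise bijections, so that $(v_1,\dots,v_{i-1})$ determines $(u_1,\dots,u_{i-1})$. The $i$-th coordinate reads $v_i=T_i(u_1,\dots,u_i)$ and is monotone increasing in $u_i$. Fixing the preceding latents to any admissible value $\mathbf{a}=(a_1,\dots,a_{i-1})$, the conditional law of $v_i$ given $(u_1,\dots,u_{i-1})=\mathbf{a}$ is the pushforward of the marginal $p_i$ (independent of the rest, by factorization of $p$) under the increasing map $u_i\mapsto T_i(\mathbf{a},u_i)$. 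Because $q$ factorizes, $v_i$ is independent of $(v_1,\dots,v_{i-1})$, hence of $\mathbf{a}$, so this conditional law equals the fixed marginal $q_i$ for every $\mathbf{a}$.

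The crux is then the uniqueness of one-dimensional monotone transport: under the $C^1$-diffeomorphism and (connected, full) support assumptions, the quantile transport $F_{q_i}^{-1}\circ F_{p_i}$ is the \emph{unique} increasing map sending $p_i$ to $q_i$. Since every map $u_i\mapsto T_i(\mathbf{a},u_i)$ is increasing and sends $p_i$ to $q_i$, all of them coincide with this single map, independently of $\mathbf{a}$. Hence $T_i$ does not depend on $(u_1,\dots,u_{i-1})$ and is a function of $u_i$ alone, closing the induction. Assembling the coordinates shows $\mathbf{T}$ is an invertible component-wise transformation relating $\mathbf{u}$ and $\mathbf{v}$, as required.

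I expect the main obstacle to be making this uniqueness-of-transport step fully rigorous. One must ensure the relevant marginals have connected support and no atoms so that the quantile map is well defined and is the only increasing transport, and one must justify that conditioning on $(v_1,\dots,v_{i-1})$ is genuinely equivalent to conditioning on $(u_1,\dots,u_{i-1})$ via the triangular invertibility granted by the inductive hypothesis. These regularity issues are precisely what the diffeomorphism assumption is meant to handle, and they constitute the technical core of the argument established in \citet{Xi2022IndeterminacyIG}, on which I would ultimately rely.
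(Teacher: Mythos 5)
Your proof is correct in substance, but it takes a genuinely different route from the paper, because the paper offers no self-contained proof at all: in \cref{app:identifiabily-results} the authors simply state that \cref{thrm:bloem} is Proposition~5.2 of \citet{Xi2022IndeterminacyIG}, rewritten ``without changing its content'' to match their setting, and all technical work is delegated to that reference. What you do differently is reconstruct the underlying argument: you compose $\mathbf{T}\coloneqq\mathbf{g}^{-1}\circ\mathbf{f}$ to obtain a single TMI map transporting one fully-factorized measure to another, and then show by induction over coordinates---using factorization of both base measures, the equivalence of conditioning on $(u_1,\dots,u_{i-1})$ and on $(v_1,\dots,v_{i-1})$ granted by triangular invertibility, and the uniqueness of the one-dimensional increasing (quantile) transport---that $\mathbf{T}$ must be component-wise. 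This is essentially the classical uniqueness argument for the Knothe--Rosenblatt rearrangement, and it is sound modulo the regularity caveats you flag yourself: the marginals must be atomless so that the quantile map is the unique increasing transport ($p_i$-a.e.), the independence step pins down the conditional law only for almost every value of the preceding coordinates, and the resulting a.e.\ identity is upgraded to an identity on the support via the $C^1$-diffeomorphism assumption. (A small simplification is available: you do not actually need the inductive hypothesis that $T_1,\dots,T_{i-1}$ are component-wise to identify the conditioning $\sigma$-algebras; invertibility of the leading triangular block of $\mathbf{T}$ already suffices.) What the paper's citation buys is economy and the full measure-theoretic generality of \citet{Xi2022IndeterminacyIG}, including the observation-noise variant the authors later exploit for discrete data in \cref{app:discrete-data}; what your argument buys is a transparent explanation of \emph{why} triangularity plus factorized bases force component-wise indeterminacy, which meshes naturally with the Darmois and Kn\"othe--Rosenblatt constructions the paper already invokes in \cref{sec:identifiability} and reuses for the partial-knowledge extension in \cref{app:partial-knowledge}.
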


\msg{Corollary 2: These models can perform causal discovery on the unspecified relationships}

\begin{wrapfigure}[10]{r}{0.25\textwidth}
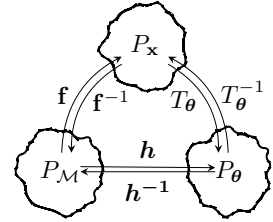

    \centering
    \vspace{-\baselineskip}
    \includestandalone[width=0.25\textwidth]{figures/identifiability}%
    \caption{\cref{thrm:bloem} as a commutative diagram}\label{fig:diagram}
\end{wrapfigure}
\cref{thrm:bloem} implies that, if we can find a causal NF $(\flow, \distribution[\thetab]) \in \scmf\times\scmp$ that matches the observational distribution generated by $\scm = (\funcb, \distribution[\scm]) \in \scmf\times\scmp$, then we know that the exogenous variables of the flow differ from the real ones by a function of each component independently, 
\ie, $\flow(\funcb(\rvu)) \sim \distribution[\thetab]$ with $\rvu\sim\distribution[\scm]$ and $\flow(\funcb(\rvu)) = \bm{h}(\rvu) = (h_1(\ervu_1), h_2(\ervu_2), \dots, h_\sizex(\ervu_\sizex))$, where each $h_i$ is an invertible function. 
\cref{fig:diagram} graphically illustrates \cref{thrm:bloem}. 
Furthermore, \cref{thrm:bloem} also implies that the functional dependencies of the causal NF must agree with that of the SCM, \ie, that \flow needs to be \emph{causally consistent} with \scm. We formally present this result in the following corollary (proof can be found in \cref{app:causal_nf}), where $\identity$ denotes the identity matrix:

\begin{restatable}[Causal consistency]{corollary}{causalconsistency} \label{cor:causal-consistency}
    If a causal NF $\flow$ isolates the exogenous variables of an SCM \scm, then $\nabla_\rvx \flow(\rvx) \equiv \identity - \graph$ and $\nabla_\rvu \flow^{-1}(\rvu) \equiv \identity + \sum_{n=1}^{\diam(\graph)} \graph^n$, where $\graph$ is the causal adjacency matrix of \scm. In other words, \flow is causally consistent with the true data-generating process, \scm.
\end{restatable}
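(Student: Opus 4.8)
The plan is to translate the hypothesis into an explicit factorization of $\flow$ and then read off the two Jacobian patterns by a chain-rule computation, reducing everything to a single structural fact about the abduction map of $\scm$. By the interpretation of \cref{thrm:bloem} spelled out above, ``$\flow$ isolates the exogenous variables'' means $\flow \circ \funcb = \bm{h}$, where $\funcb$ is the TMI representation of $\scm$ constructed in \cref{sec:identifiability} and $\bm{h}(\rvu) = (h_1(\ervu_1), \dots, h_\sizex(\ervu_\sizex))$ is component-wise invertible. Hence $\flow = \bm{h} \circ \funcb^{-1}$ and $\flow^{-1} = \funcb \circ \bm{h}^{-1}$. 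Since $\bm{h}$ acts coordinate-wise, $\nabla_\rvu \bm{h}$ and $\nabla_\rvu \bm{h}^{-1}$ are diagonal matrices with everywhere-nonvanishing diagonal (each $h_i$ is an invertible $C^1$ map). The first ingredient I would record is therefore the elementary observation that left- or right-multiplication by an invertible diagonal matrix preserves the zero pattern, i.e.\ $D A \equiv A$ and $A D \equiv A$ whenever $D$ is diagonal with nonzero diagonal; this lets me discard the $\bm{h}$ factors after applying the chain rule.

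The core of the argument is the structural identity $\nabla_\rvx \funcb^{-1} \equiv \identity - \graph$. I would obtain it from the abduction map of the SCM. Inverting the structural equation $\ervx_i = \funcx[i](\rvx_\parents{i}, \ervu_i)$ in its last argument (possible since the process is $C^1$-diffeomorphic, so $\ervu_i \mapsto \ervx_i$ at fixed parents is a diffeomorphism, and strictly monotone in the TMI representation) writes $\ervu_i$ as a function of $\ervx_i$ and $\rvx_\parents{i}$ only; equivalently, it is the conditional-CDF form $\ervu_i = F_{\ervx_i \mid \rvx_\parents{i}}(\ervx_i)$ of the Darmois construction. Thus, writing $\funcb^{-1}=(g_1,\dots,g_\sizex)$, row $i$ of $\nabla_\rvx \funcb^{-1}$ can be nonzero only in columns $\{i\} \cup \parents{i}$. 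Implicit differentiation of $\ervx_i = \funcx[i](\rvx_\parents{i}, g_i(\rvx))$ gives $\partial g_i/\partial \ervx_i = (\partial \funcx[i]/\partial \ervu_i)^{-1} \neq 0$ and $\partial g_i/\partial \ervx_j = -(\partial \funcx[i]/\partial \ervx_j)(\partial \funcx[i]/\partial \ervu_i)^{-1}$, which fails to be the zero function exactly when $\partial \funcx[i]/\partial \ervx_j \neq \mathbf{0}$, i.e.\ exactly when $j \in \parents{i}$. Since $\graph_{ij}\neq 0 \iff j\in\parents{i}$ and $\graph$ has zero diagonal, this yields $\nabla_\rvx \funcb^{-1} \equiv \identity - \graph$, and multiplying on the left by the diagonal $\nabla_\rvu\bm{h}$ gives the first claim $\nabla_\rvx \flow \equiv \identity - \graph$.

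For the second claim I would work with the forward (unrolled) map rather than inverting, to avoid cancellation subtleties. As noted in \cref{sec:identifiability}, the unrolled map writes each $\ervx_i$ as a genuine function of its exogenous ancestors $\rvu_\ancestors{i}$ together with $\ervu_i$, so $\partial \funcb_i/\partial \ervu_j \neq \mathbf{0}$ exactly when $i=j$ or $j\in\ancestors{i}$; hence $\nabla_\rvu \funcb$ has support equal to the reflexive–transitive closure of the parent relation. Interpreting $\graph$ as the adjacency matrix, $[\graph^n]_{ij}\neq 0$ iff there is a directed path of length $n$ from $j$ to $i$, and since the variables are in causal order $\graph$ is strictly lower triangular, hence nilpotent, so $\identity + \sum_{n=1}^\infty \graph^n = (\identity - \graph)^{-1}$ is a finite sum whose support is that same reflexive–transitive closure. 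Therefore $\nabla_\rvu \funcb \equiv \identity + \sum_{n=1}^\infty \graph^n$, and right-multiplying by the diagonal $\nabla_\rvu\bm{h}^{-1}$ gives $\nabla_\rvu \flow^{-1} \equiv \identity + \sum_{n=1}^\infty \graph^n$.

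The step I expect to be the main obstacle is pinning down ``$\equiv$'' rather than merely ``$\preceq$'' in the core fact, i.e.\ showing that every parent genuinely appears in the abduction map and that nothing cancels to the zero function. The inclusion $\nabla_\rvx \funcb^{-1} \preceq \identity - \graph$ is immediate from which variables each coordinate is allowed to depend on, but the reverse inclusion requires the genuine-dependence statements above (each $\partial \funcx[i]/\partial \ervx_j$ with $j\in\parents{i}$ is not identically zero, and the unrolled map depends non-trivially on each ancestral $\ervu_j$). Handling the second claim through the forward map is precisely what lets me invoke the clean reachability structure and sidestep the sign cancellations that could otherwise arise from inverting $\identity - \graph$ entrywise.
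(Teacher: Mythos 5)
Your proposal is correct and follows essentially the same route as the paper's proof: both factor $\flow = \bm{h}\circ\funcb^{-1}$ (so $\flow^{-1} = \funcb\circ\bm{h}^{-1}$), observe that the diagonal Jacobian of $\bm{h}$ preserves sparsity patterns, and then read off $\nabla_\rvx \funcb^{-1} \equiv \identity - \graph$ from the abduction map and $\nabla_\rvu \funcb \equiv (\identity - \graph)^{-1} = \identity + \sum_{n=1}^\infty \graph^n$ from the unrolled/forward map, using nilpotence of $\graph$. The only difference is one of detail: you make explicit, via implicit differentiation and the reachability argument, the genuine-dependence ($\equiv$ versus $\preceq$) steps that the paper's proof simply asserts.
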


\add{1}{
A sketch of the proof goes as follows: since \cref{fig:diagram} is a commutative diagram, we can write the result of \flow and $\flow^{-1}$ in terms of the true $\funcb$, $\bm{h}$, and their inverses. Then, we can use the chain rule to compute their Jacobian  matrices, and since $\bm{h}$ has a diagonal Jacobian matrix, it preserves the structure of the Jacobian matrices of $\funcb$ and its inverse.
}%
To sum up, we have shown that \emph{causal NFs are a natural choice to estimate an unknown SCM} by showing that: i)~both SCMs and causal NFs  fall within the same family $\scmf\times\scmp$; ii)~any two elements of this family with identical observational distributions are causally consistent; and iii)~they differ by an invertible component-wise transformation. %

\subsection{Causal NFs for real-world problems} \label{subsec:real-world}

\msg{Intro phrase + appendix.}

To bring theory closer to practice, we need to extend causal NFs to handle mixed discrete-continuous data and partial knowledge on the causal graph, which are common properties of  real-world problems. Due to space limitations, we provide here a brief explanation, and formalize these ideas in \cref{app:nf_real_world}.

\msg{Discrete data.}

\paragraph{Discrete data} 
To extend our results to also account for discrete data, we take advantage of the general model considered by \citet{Xi2022IndeterminacyIG} that includes  observational noise (independent of the exogenous variables), and consider a continuous version of the observed discrete variables by adding to them independent noise $\epsilon \in [0,1]$  (\eg, from a standard uniform), such that the real distribution is still recoverable. 
Intuitively, our approach assumes that discrete variables  correspond to the integer part of  (noisy) continuous variables generated according to an SCM fulfilling our assumptions, such that both our theoretical and practical insights still apply. 

\msg{Partial knowledge. We quickly talk about that we can leverage partial information as shown in the exps, and that we have theoretical and practical implementations in the appendix.}

\paragraph{Partial knowledge} 
While we rarely know the entire causal graph \graph, we often have a good grasp on causal relationships between a subset of observed variables---\eg, sex and age are not causally related---while missing the rest. 
When only partial knowledge on the graph is available---i.e., we only know  the causal relationship between a subset of the observed variables,  we can instead work with a modified acyclic graph $\tilde\graph$ obtained by finding the strongly connected components as in~\citep{doi:10.1137/0201010}, where subsets of variables with unknown causal relationships are treated as a block (see \cref{sec:use-cases} for an example).
This allows us to reuse our theoretical results for known parts of the graph,  thus generalizing the \emph{block identifiability} results from~\citet{von2021self}.

\section{Effective design of causal normalizing flows} \label{sec:designing}

\begin{figure}[t]
    \begin{subfigure}[b]{0.15\textwidth}
        \centering
        $\displaystyle \rvx = \linearmap \rvx + \mI \rvu$
    \end{subfigure}
    \hfill %
    \begin{subfigure}[b]{0.35\textwidth}
        \centering
        $\displaystyle \rvx = \linearmap_3 (\linearmap_2 ( \linearmap_1 \rvu))$
    \end{subfigure} %
    \hfill %
    \begin{subfigure}[b]{0.2\textwidth}
        \centering
        $\displaystyle \rvx = (\linearmap^2 + \linearmap + \mI)\rvu$
    \end{subfigure} %
    \hfill %
    \begin{subfigure}[b]{0.15\textwidth}
        \centering
        $\displaystyle \rvu = (\mI - \linearmap)\rvx$
    \end{subfigure}
    \\[5pt]
    \begin{subfigure}[b]{0.15\textwidth}
        \centering
        \begin{tikzcd}[every matrix/.append style={name=m}, 
            row sep=15pt, scale cd=1.0,
        ]
            |[Node, "\rvu" above]| {} \arrow[r, "1" above] \arrow[r, shift right, red, dashed] & |[Node, "\rvx" above]| {} \arrow[d, "2" right] \arrow[d, shift right, red, dashed] \\
            |[Node]| {} \arrow[r, "1" above] & |[Node]| {} \arrow[d, "3" right] \arrow[d, shift right, red, dashed] \\
            |[Node]| {} \arrow[r, "1" above] & |[Node]| {}
        \end{tikzcd}
        \caption{Recursive.} \label{subfig:recursive}
    \end{subfigure} %
    \hfill %
    \begin{subfigure}[b]{0.35\textwidth}
        \centering
        \begin{tikzcd}[every matrix/.append style={name=m}, 
            row sep=15pt, scale cd=1.0,
        ]
            |[Node, "\rvu" above]| {} \arrow[r, "1" above] \arrow[r, shift right, red, dashed] \arrow[dr, "0" above] \arrow[dr, shift right, red, dashed] & |[Node]| {} \arrow[dr, "2" above] \arrow[dr, shift right, red, dashed] \arrow[r, "1" above] & |[Node]| {} \arrow[dr, "0" above] \arrow[r, "1" above] & |[Node, "\rvx" above]| {} \\
            |[Node]| {} \arrow[r, "1" above] \arrow[dr, "0" above] & |[Node]| {} \arrow[dr, "0" above] \arrow[dr, shift right, red, dashed]  \arrow[r, "1" above] \arrow[r, shift right, red, dashed] & |[Node]| {} \arrow[dr, "3" above] \arrow[dr, shift right, red, dashed] \arrow[r, "1" above] & |[Node]| {} \\
            |[Node]| {} \arrow[r, "1" above] & |[Node]| {} \arrow[r, "1" above] & |[Node]| {} \arrow[r, "1" above] \arrow[r, shift right, red, dashed] & |[Node]| {}
        \end{tikzcd}
        \caption{Unrolled.} \label{subfig:unrolled}
    \end{subfigure} %
    \hfill %
    \begin{subfigure}[b]{0.2\textwidth}
        \centering
        \begin{tikzcd}[every matrix/.append style={name=m}, 
            row sep=15pt, scale cd=1.0,
        ]
            |[Node, "\rvu" above]| {} \arrow[r, "1" above, pos=0.1] \arrow[dr, "2" above] \arrow[ddr, "6" above, pos=0.4] \arrow[ddr, shift right, red, dashed] & |[Node, "\rvx" above]| {} \\
            |[Node]| {} \arrow[r, "1" above, pos=0.1] \arrow[dr, "3" above, pos=0.4] & |[Node]| {} \\
            |[Node]| {} \arrow[r, "1" above, pos=0.1] & |[Node]| {}
        \end{tikzcd}
        \caption{Compacted.} \label{subfig:compacted}
    \end{subfigure} %
    \hfill %
    \begin{subfigure}[b]{0.15\textwidth}
        \centering
        \begin{tikzcd}[every matrix/.append style={name=m}, 
            row sep=15pt, scale cd=1.0,
        ]
            |[Node, "\rvu" above]| {} \arrow[r, "1" above, pos=0.7, leftarrow] & |[Node, "\rvx" above]| {} \arrow[dl, "-2" above] \\
            |[Node]| {} \arrow[r, "1" above, pos=0.7, leftarrow] & |[Node]| {} \arrow[dl, "-3" above]  \\
            |[Node]| {} \arrow[r, "1" above, pos=0.7, leftarrow] & |[Node]| {}
        \end{tikzcd}
        \caption{Inverted.} \label{subfig:inverted}
    \end{subfigure}
    \caption{Example of the linear SCM $\begin{smallmatrix}\{\ervx_1 \coloneqq \ervu_1 \,; & \ervx_2 \coloneqq 2\ervx_1 + \ervu_2 \,; & \ervx_3 \coloneqq 3\ervx_2 + \ervu_3\}\end{smallmatrix}$ written (a)~in its usual recursive formulation; (b)~without recursions, with each step made explicit; (c)~without recursions, as a single function; and (d) writing $\rvu$ as a function of $\rvx$. The red dashed arrows show the influence of $\ervu_1$ on $\ervx_3$ for all equations from $\rvu$ to $\rvx$, with the compacted version exhibiting shortcuts (see \cref{sec:designing}). Note that\add{1}{\,in the linear case we have $\graph \coloneqq \linearmap \neq \mathbf{0}$, and that} $\linearmap_1, \linearmap_2, \linearmap_3 \preceq \linearmap + \mI$ are any three matrices such that their product equals $\linearmap^2 + \linearmap + \mI$.}
    \label{fig:example-1}
\end{figure}

\msg{While any model as in \S3 is identifiable, this is hardly achieved. In this section, we study how to effectively bring theory to practice.}

We showed in \cref{sec:identifiability} that causal NFs are a natural choice to learn the underlying SCM generating the data. Importantly, \cref{thrm:bloem} assumes that we can find a causal NF whose observational distribution perfectly matches  the true data distribution (according to  the underlying SCM).  
In practice,  however, reaching the optimal parameters may be tricky as: i) we only have access to a finite amount of training data; and ii) the optimization process for causal NFs (like for any neural network) may converge to a local optima. 
In this section, we analyse different design choices for  causal NFs to  guide the optimization towards solutions that do not only provide an accurate fit of the observational distribution, but allow us to also accurately answer to interventional and counterfactual queries. 

\msg{We start off with a running linear example and show the three forms. Given an SCM (recursive), we can unroll the graph (deductive) or take its inverse (abductive).}

Let us start with an illustrative example. Suppose that we are given the linear SCM in \cref{subfig:recursive}, and 
we want to write the SCM equations as a TMI map to approximate them with a causal NF.  
As discussed in~\cref{sec:identifiability}, we can unroll the causal equations (\cref{subfig:unrolled})---resulting in a composition of functions structurally \replace{1}{sparser than}{as sparse as} $\identity + \graph$. %
 These functions can be compacted into a single transformation (\cref{subfig:compacted}), such that each $\ervx_i$ depends on its ancestors, $\rvu_\ancestors{i}$. {However, note that in this step \emph{shortcuts} appear, making direct and indirect causal paths in this representation indistinguishable---in our example, the indirect causal path from  $\ervu_1$ to $\ervx_3$ present in \cref{subfig:recursive} and \cref{subfig:unrolled} does not go anymore through the path that generates $\ervx_2$, but instead via a \emph{shortcut} that directly connects $\ervu_1$ to $\ervx_3$.} 
Alternatively, %
we can invert the equations to write $\rvu$ as a function of $\rvx$ (\cref{subfig:inverted}), which is structurally equivalent to $\mI - \graph$.

\msg{This can be done in any SCM. The DAG assumption means that the recursion steps are finite, the jacobian triangular, and that each i-th component is invertible given its parents.}

We remark that the above steps can be applied to any considered acyclic SCM (refer to \cref{app:design} for a more detailed discussion).
In particular, we can unroll the equations in a finite number of steps, and we can similarly reason about the causal dependencies through the Jacobian matrices of the generators, $\nabla_\rvx \flow(\rvx)$ and $\nabla_\rvu \flow^{-1}(\rvu)$. 
Moreover, note that the diffeomorphic assumption implies that we can invert the causal equations.
Next, inspired by the different representations of an SCM (exemplified in \cref{fig:example-1}), we consider the following design choices for causal NFs:

\msg{Our objective is to model the causal system. Assuming that we have knowledge about the causal graph, we are going to explore two ways of modelling the system.}

\msg{As we want to model the data-generating process, we first explore a deductive model, which takes u and produces x.}

\msg{Moreover, we know that the information from u to x does not flow in any way, but according to the causal graph.}

\msg{Description of the model.}

\paragraph{Generative model}

The first architecture imitates the unrolled equations (\cref{subfig:unrolled}), \ie, the causal NF is defined as a function from $\rvu$ to $\rvx$.
Importantly, when full knowledge of the causal graph $\graph$ is assumed, we also replicate the structural sparsity per layer by adequately masking the flow with $\mI + \graph$. 
In this way, the information from $\rvu$ to $\rvx$ is restricted to flow as if we were unrolling the causal model~\citep{SnchezMartn2021VACA} and, as a result, the output of the \nth{l-1} layer of the causal NF is given by:\footnote{The order has been reversed \wrt the causal NF definition from \cref{eq:anf}.}
\begin{equation}
    \ervz_i^{l-1} = \tau_i(\ervz_i^l; \rvh_i^{l-1})\,, \quad \text{where} \quad \rvh_i^{l-1} = c_i(\rvz^l_\parents{i}) \,.  \label{eq:generative-model}
\end{equation}
Note that, by restricting each layer such that $\nabla_{\rvz^l} \bm{\tau}(\rvz^l) \equiv \mI + \graph$, there cannot exist shortcuts at the optima. 
For example, in \cref{fig:example-1}, the (indirect) information of $\ervu_1$ to generate $\ervx_3$ by first generating $\ervx_2$ needs to go through the middle nodes in \cref{subfig:unrolled}. 
However, as shown by \citet{SnchezMartn2021VACA}, we need at least $L = \diam(\graph)$ layers in our causal NF to avoid shortcuts and thus differentiate between the different direct and indirect causal paths connecting a pair of observed variables. %

\msg{Case without causal graph.}

In contrast, if we only know the causal ordering, %
then the causal NF will need to rule out the spurious correlations by learning during training the necessary zeroes to fulfil causal consistency (\cref{cor:causal-consistency}), \ie, such that $\nabla_\rvx \flow(\rvx) \equiv \identity + \graph$ and $\nabla_\rvu \flow^{-1}(\rvu) \equiv \identity + \sum_{n=1}^{\diam(\graph)} \graph^n$.

\msg{(If there is space) Show the running linear example.}

\msg{We take a different approach now and try to model the abductive model, which takes observations and produces their causes (akin to the abduction step).}

\msg{This is a really natural choice. As seen in the running example, inverting the recursive equations leads to an x to u mapping that follows the same causal graph structure.}

\msg{Description of the model.}

\paragraph{Abductive model} Reminiscent to the abduction step~\citep{Pearl2012TheDR}, another natural choice is to model the inverse equations of the SCM as in \cref{subfig:inverted}, hence building a causal NF from $\rvx$ to $\rvu$. 
Under a known causal graph, we can again use extra masking in the causal NF to force each layer $l$ to be structurally equivalent to $\mI - \graph$, such that
\begin{equation}
    \ervz_i^l = \tau_i(\ervz_i^{l-1}; \rvh_i^l)\,, \quad \text{where} \quad \rvh_i^l = c_i(\rvz_\parents{i}^{l-1}) \, . \label{eq:abductive-model}
\end{equation}
Remarkably, this architecture is capable of capturing all indirect dependencies of $\rvu$ on $\rvx$\remove{1}{~with no shortcuts at the optima}, even with a single layer.
This is a result of the autoregressive nature of   the ANFs used here to build causal NFs, as they compute the inverse sequentially.
In the example of \cref{fig:example-1}, the indirect influence of $\ervu_1$ on $\ervx_3$ via $\ervx_2$ has to necessarily generate $\ervx_2$ first (\cref{subfig:recursive}).
\msg{Case with no causal graph.}
Similar to the previous architecture, in the absence of a causal graph (i.e., when only the causal ordering is known), the causal NF will need to rely on optimization to discard all spurious correlations. %

\ajnote{I need to talk about training and sampling speed somewhere.}

\subsection{Necessary  conditions}
\label{subsec:necessary}
\msg{Designing an effective causal normalizing flow requires three properties: expressivity, correct functional dependencies, and the absence of shortcuts.}

We next analyse the necessary conditions for the design of a causal NF to be able to accurately approximate  and manipulate an SCM. 
A summary of the analysis can be found in \cref{tab:summary-models}.

\msg{Expressivity. This is important to ensure that we can approximate any causal system. For all our cases, autoregressive normalizing flows are universal density approximators. We can increase expressive by adding layers, or more parameters to the layers.} 

\paragraph{Expressiveness} 
The least restrictive condition is that the causal NF should be able to reach the optima and, 
as mentioned in \cref{sec:identifiability}, a single ANF layer (\cref{eq:anf}) is a universal TMI approximator~\citep{papamakarios2021normalizing}.

\begin{add*}{1}
\paragraph{Identifiability} 
In order to perform interventions as we describe later in \cref{sec:interventions}, we need the causal NF to isolate the exogenous variables, so that we can associate them with their respective endogenous variables. As we saw in \cref{sec:identifiability}, if the causal NF is expressive enough, and \emph{if it follows a valid causal ordering \wrt the true causal graph \graph}, then \cref{thrm:bloem} ensures that we can isolate the exogenous variables up to elementwise transformations.
\end{add*}

\msg{Functional dependencies. We need that the normalizing flow shares the same dependencies as the actual system. Even though this is in theory guaranteed by \S3, we rely on the optimization process.}

\msg{The deductive model (\S4.2) ensures that the dependencies of x wrt u are preserved in there are as many layers as the diameter of the causal graph (vaca). We reduce the search space for u wrt x.}

\msg{The abductive model (\S4.3) ensures that both dependencies are right when $L=1$. When $L>1$ we reduce the search space, but rely on optimization.}

\paragraph{Causal consistency} 
As stated in \cref{cor:causal-consistency}, the causal NF needs to share the causal dependencies of the SCM at the optima, meaning that their Jacobian matrices need to be structurally equivalent,
\ie,  $\nabla_\rvx \flow(\rvx) \equiv \identity - \graph$ (\cref{subfig:inverted}), and $\nabla_\rvu \flow^{-1}(\rvu) \equiv \sum_{n=1}^{\diam(\graph)} \graph^n + \identity$ (\cref{subfig:compacted}).
Given  the (partial) causal graph  \graph, the generative model in \cref{eq:generative-model} 
by design holds $\nabla_\rvu \flow^{-1}(\rvu) \equiv \sum_{n=1}^{\diam(\graph)} \graph^n + \identity$ for any sufficient number of layers $L \geq \diam \graph$ \add{1}{(see~\citep[Prop. 1]{SnchezMartn2021VACA})}, however, %
there might still exist spurious paths from $\rvx$ to $\rvu$. %
Similarly, the abductive model in \cref{eq:abductive-model}, while may not remove all spurious paths from $\rvx$ to $\rvu$ if $L > 1$, ensures causal consistency when $L=1$.  %
\msg{For those cases where functional dependencies are not guaranteed, we propose to explicitly regularize the model through an auxiliary loss.} 
In  cases where the selected architecture for the causal NF does not ensure causal consistency by design, but we have access to the causal graph, we can use this extra information to regularize our MLE problem as %
\begin{equation}
    \minimize_\thetab \quad \E[\rvx]{ - \log p(\flow(\rvx)) + \norm{\nabla_\rvx \flow(\rvx) \odot (\mathbf{1} - \graph)}_2} \, , \label{eq:auxiliary-loss}
\end{equation}
where $\mathbf{1}$ is a matrix of ones, thus penalizing spurious correlations from $\rvx$ to $\flow(\rvx)$.

\msg{Shortcuts. It is important that indirect dependencies are not modelled as direct dependencies. The deductive model ensures this by using the graph $\mathcal{A}$, the abductive through the iterative inverse.}

\remove{1}{\paragraph{Causal path preservation} 
In order to perform interventions with a causal NF (see \cref{sec:interventions}), the NF does not only need to be causally consistent, but the causal dependencies from $\rvu$ to $\rvx$ also need to follow the same paths as the true causal model.
That is, the causal NF needs to be \emph{causal path preserving}.
If we impose a causal ordering (\eg, using an ANF), longer-than-needed dependencies are impossible, \eg, $\ervu_1$ cannot influence $\ervx_2$ through $\ervx_3$ in \cref{subfig:unrolled} due to the network structure.
We are left with shortcuts as the only possible deviation from the causal paths. 
As stated before, both the generative (\cref{eq:generative-model}) and abductive (\cref{eq:abductive-model}) models avoid shortcuts \wrt the given \graph by carefully controlling how the information flows,
being hence well-suited for causal inference tasks.
}

\Cref{tab:summary-models} summarizes the discussed properties of the considered design choices. 
Remarkably, the abductive model with a single layer (similar to \cref{subfig:inverted}) \emph{enjoys all the necessary properties of a causal NF by design}. 
That is, the abductive model with $L=1$ is expressive, \add{1}{and\xspace}causally consistent\remove{1}{, and causal path-preserving} \wrt the provided \add{1}{causal graph\xspace}\graph, greatly simplifying the optimization process.

\begin{add*}{1}
\textbf{Remark.} It is not straightforward why abductive models can be causally consistent by design, but generative models cannot. The answer lies on the structure of the problem. Intuitively, this is a consequence of the mapping $\rvx\rightarrow\rvu$ being structurally sparser ($\ervu_i$ depends on \parents{i}, see \cref{subfig:inverted}) than that of $\rvu\rightarrow\rvx$ ($\ervx_i$ depends on \ancestors{i}, see \cref{subfig:compacted}). Therefore, ensuring causal consistency from $\rvu$ to $\rvx$ does not necessarily imply causal consistency from $\rvx$ to $\rvu$.
\end{add*}

\begin{table}[t]
	\setlength{\tabcolsep}{10pt} %
	\centering
	\caption{Summary of the considered design choices, their induced properties, and their time complexity for density evaluation and sampling. Generative models design their forward pass as $\rvu\rightarrow\rvx$, and abductive models as $\rvx\rightarrow\rvu$. See \cref{sec:designing} for an in-depth discussion.} \label{tab:summary-models}
	\resizebox{\textwidth}{!}{
		\begin{tabular}{r@{\hskip 0pt}lcccCC} \toprule
			\multicolumn{3}{c}{Design Choices}  & \multicolumn{2}{c}{Model Properties} & \multicolumn{2}{c}{Time Complexity} \\ \cmidrule(lr){1-3} \cmidrule(lr){4-5} \cmidrule(lr){6-7}
			\multicolumn{2}{c}{\multirow{2}{*}{\shortstack{Network Type}}} & \multirow{2}{*}{\shortstack{Causal\\Asumption}} & \multicolumn{2}{c}{Causal Consistency} & \multirow{2}{*}{Sampling} & \multirow{2}{*}{Evaluation} \\ \cmidrule(lr){4-5}
			& & & $\rvu\rightarrow\rvx$ & $\rvx\rightarrow\rvu$ & & \\ \midrule
			\ldelim\{{2}{*}[$\rvu\rightarrow\rvx$] & Generative & Ordering  & \xmark & \xmark & \mathcal{O}(L) & \mathcal{O}(\sizex L) \\
			& Generative & Graph $\graph$ & \cmark & \xmark & \mathcal{O}(L) & \mathcal{O}(\sizex L) \\
			\ldelim\{{3}{*}[$\rvx\rightarrow\rvu$] & Abductive & Ordering  & \xmark & \xmark & \mathcal{O}(\sizex L) & \mathcal{O}(L) \\
			& Abductive ($L>1$) & Graph $\graph$ & \xmark & \xmark & \mathcal{O}(\sizex L) & \mathcal{O}(L) \\
			& Abductive ($L=1$) & Graph $\graph$ & \cmark & \cmark & \mathcal{O}(\sizex L) & \mathcal{O}(L) \\
			\bottomrule
		\end{tabular}
	}
\end{table}

\section{Do-operator: enabling interventions and counterfactuals} \label{sec:interventions}

\msg{Another goal of our work is being able to perform interventions, enabling interventional and counterfactual operations}

In this section, we propose an implementation of the \textit{do-operator} well-suited for causal NFs, such that we can evaluate the effect of interventions and counterfactuals~\citep{pearl2009causality}. 
The \textit{do-operator}~\citep{Pearl2012TheDR}, denoted as $\doop(\ervx_\intervention = \alpha)$, is a mathematical operator that simulates a physical intervention on an SCM \scm, inducing an alternative model $\intervene\scm$ that fixes the observational value $\ervx_\intervention = \alpha$, and thus removes any causal dependency on $\ervx_\intervention$. 
Usually, the do-operator is implemented by yielding an SCM $\intervene\scm = ({\intervene\funcxb, \distribution[\rvu]})$ result of replacing the \nth{i} component of \funcxb with a constant function, $\intervene{\funcx[i]} \coloneqq \alpha$. 
Unfortunately, this implementation of the do-operator only works for the recursive representation of the SCM (\cref{subfig:recursive}), thus not generalizing to the different architecture designs of causal NFs discussed in \cref{sec:designing} the previous section. 

\msg{We propose to instead modify the latent distribution}

We instead propose to manipulate the SCM by modifying the exogenous distribution \distribution[\rvu], while keeping the causal equations $\funcxb$ untouched. %
Specifically, an intervention $\doop(\ervx_\intervention = \alpha)$ updates  \distribution[\rvu], restricting the set of plausible $\rvu$ to those that yield the intervened value $\alpha$. 
We define the intervened SCM as $\intervene\scm = (\funcxb, \intervene{\distribution[\rvu]})$, where the density of $\intervene{\distribution[\rvu]}$ is of the form %
\begin{equation}
    \intervene{p}(\rvu) = \delta\left(\left\{\funcx[i](\rvx_\parents{i}, \ervu_i) = \alpha\right\}\right) \cdot \prod_{j\neq\intervention}  p_j(\ervu_j), %
    \label{eq:intervene-density}
\end{equation}
and where $\delta$ is the Dirac delta located at the unique value of $\ervu_i$ that yields $\ervx_\intervention = \alpha$ after applying the causal mechanism \funcx[i]. 
This approach resembles the one proposed for soft interventions~\citep{10.1086/525638} and backtracking counterfactuals~\citep{von2022backtracking}. Moreover, as shown in \cref{app:do_operator}, it can be generalized for non-bijective causal equations.
 Note also that \cref{eq:intervene-density} is well-defined only if the set $\{ \rvu\sim\distribution[\rvu] \,|\, \funcxb_\intervention(\vx, \rvu) = \alpha \}$ is non-empty, 
 \ie, if the intervened variable takes a \emph{plausible} value  (i.e., with positive density in the original causal model). 
Remarkably, this implementation works directly on the distribution of the exogenous variables and can be applied to any SCM representation (see \cref{fig:example-1}), and therefore any of the architectures for the causal NF in \cref{sec:designing}. %

\msg{Practical implementation and optimality assumption}

\msg{By freezing the outputs associated with one layer, we are removing one source of noise and making the parent noise variables have zero effect on that path.}

\paragraph{Implementation details}

We take advantage of the autoregressive nature of causal NFs, and generate samples from \cref{eq:intervene-density} by: i)~obtaining the exogenous variables $\rvu \coloneqq \flow(\rvx)$; ii)~replacing the observational value by its intervened value, $\ervx_\intervention \coloneqq \alpha$; and iii)~by setting $\ervu_\intervention$ to the value of the \nth{\intervention} component of $\flow(\rvx)$.
If the causal NF has successfully isolated the exogenous variables (\cref{sec:identifiability}), and it preserves the true causal paths (\cref{sec:designing}), then %
the causal NF  ensures that  $\ervx_\intervention = \alpha$ %
independently of the value of its ancestors, since  $\ervu_\intervention$ can be seen in \cref{eq:intervene-density} as a deterministic function of the given $\alpha$ and the value of its parents (and therefore its ancestors). 
We provide further details and the step-by-step algorithms to compute interventions and counterfactuals with causal NFs in \cref{app:do_operator}.

\section{Empirical evaluation} \label{sec:experiments}

In this section, we empirically validate the insights from \cref{sec:designing}, and compare causal NFs with previous works. %
Additional results and in-depth descriptions can be found in \cref{app:extra_results}.

\subsection{Ablation study}  \label{sec:ablation-study}

\paragraph{Experimental setup}
We evaluate every network combination described in \cref{tab:summary-models} on 
a 4-chain SCM (which has diameter 3 and a very sparse Jacobian) and assess the extent to which these models: 
i)~capture the observational distribution, using $\KLop{p_{\scm}}{p_\thetab}$; 
ii)~remain causally consistent \wrt the original SCM, measured via $\mathcal{L}(\nabla_\rvx \flow(\rvx)) \coloneqq \norm{\nabla_\rvx \flow(\rvx) \cdot (\mathbf{1} - \graph)}_2$ from \cref{eq:auxiliary-loss}; 
and iii)~perform at interventional tasks, such as estimating the Average Treatment Effect (ATE)~\citep{pearl2009causal} and computing counterfactuals, both of which we measure with the RMSE \wrt the original SCM.
Every experiment is repeated 5 times, %
and every causal NF uses Mask Autoregressive Flows (MAFs)~\citep{NIPS2017_6c1da886} as layers.

\paragraph{Results} 

\Cref{fig:ablation_design} shows the result for different design choices of causal NFs. Specifically, we show: network design (generative $\rvu \rightarrow \rvx$ vs. abductive $\rvx \rightarrow \rvu$), causal knowledge (ordering vs. graph), number of layers $L$, and whether to use MLE with regularization (\cref{eq:mle} vs. \ref{eq:auxiliary-loss}). 

First, we see in \cref{fig:ablation_full} (top) that, as expected, the generative models ($\rvu \rightarrow \rvx$) using the causal graph cannot capture the SCM with $L < \diam \graph = 3$. %
Furthermore, we observe that abductive models $\rvx \rightarrow \rvu$ (\cref{fig:ablation_full}, bottom) accurately fit the observational distribution, and that embedding the causal graph in the architecture significantly improves the ATE estimation. 

Second,  we now compare the two network designs in  \cref{fig:ablation_summary}, and  observe that in general abductive models results in more accurate estimates of the observational distribution, as well as of interventional and counterfactual queries.
Finally, we observe that regularization works well in all cases, yet it renders useless for the abductive model with $L=1$ and knowledge on the graph, since it is causally consistent by design.
In summary, our experiments confirm that, despite its simplicity, the causal abductive model with $L=1$ outperforms the rest of design choices. As a consequence, in the following sections we will stick to this particular design choice, and refer to it as causal NF.

\begin{figure}
    \centering
     \includegraphics[width=0.33\textwidth]{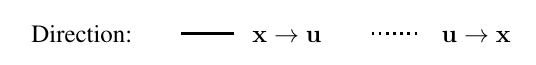}  
    \includegraphics[width=0.66\textwidth]{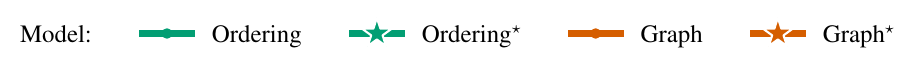}   \\

    \begin{subfigure}[b]{0.70\textwidth}
        \centering
        \begin{subfigure}[b]{\textwidth}
            \includegraphics[width=0.3\textwidth, keepaspectratio]{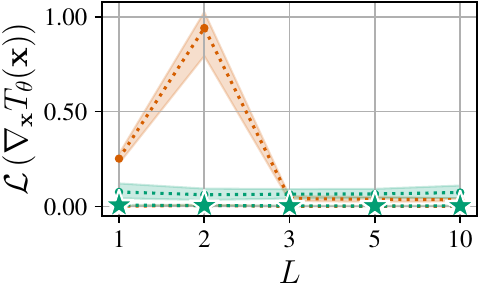} %
            \hfill %
            \includegraphics[width=0.3\textwidth, keepaspectratio]{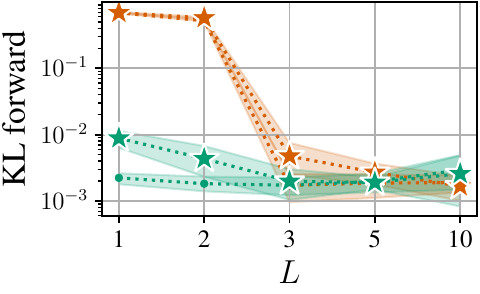} %
            \hfill %
            \includegraphics[width=0.3\textwidth, keepaspectratio]{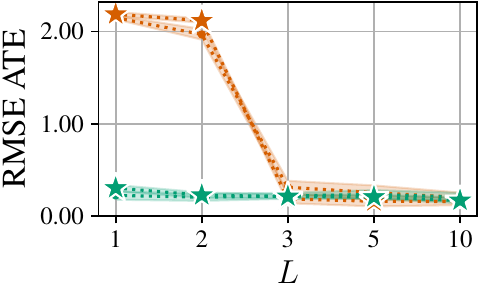} %
        \end{subfigure}%
        \\
        \begin{subfigure}[b]{\textwidth}
            \includegraphics[width=0.3\textwidth, keepaspectratio]{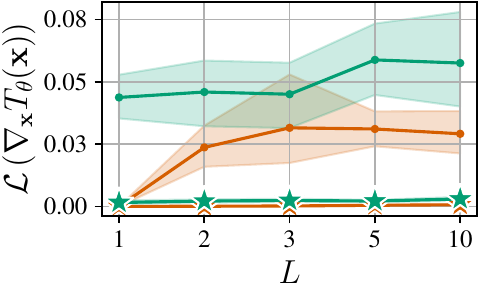} %
            \hfill %
            \includegraphics[width=0.3\textwidth, keepaspectratio]{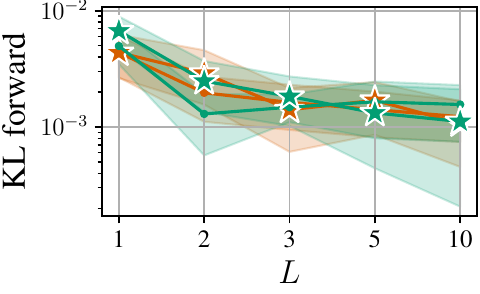} %
            \hfill %
            \includegraphics[width=0.3\textwidth, keepaspectratio]{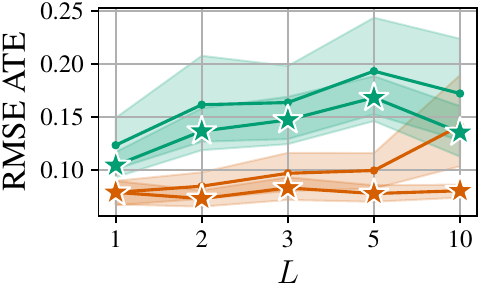} %
        \end{subfigure}%
        \caption{Ablation study on a 4-node chain SCM, as we vary the number of layers $L$. Top: generative models ($\rvu\rightarrow\rvx$). Bottom: abductive models ($\rvx\rightarrow\rvu$).} \label{fig:ablation_full}
  \end{subfigure}%
  \hfill %
  \begin{subfigure}[b]{0.25\textwidth}
    \centering
    \includegraphics[width=0.75\textwidth, keepaspectratio]{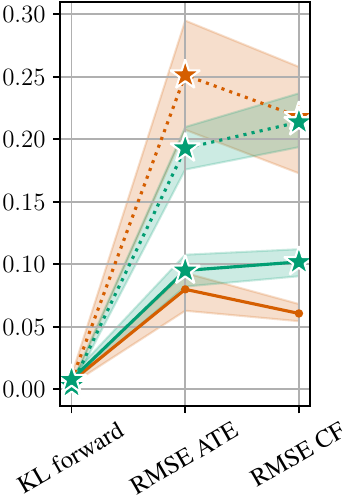}
    \caption{Comparison of the two best models of each row.}
    \label{fig:ablation_summary}
  \end{subfigure}
  \caption{Ablation of different choices of the causal NF to be causally consistent, and capture the observational and interventional distributions. The use of regularization on the Jacobian (\cref{eq:auxiliary-loss}) is indicated with the $\star$ superscript. The abductive causal NF with information on \graph and $L=1$ outperforms the rest of models across all metrics, demonstrating its efficacy and simplicity.} \label{fig:ablation_design}
  \vspace{-\baselineskip}
\end{figure}

\subsection{Non-linear SCMs} \label{subsec:comparisons}

\msg{Performance + time complexity}

\paragraph{Experimental setup} 

We compare our causal NF (causal, abductive, and with $L=1$) with two relevant works: i)~CAREFL~\citep{pmlr-v130-khemakhem21a}, an abductive NF with knowledge on the causal ordering and affine layers; and ii)~VACA~\citep{SnchezMartn2021VACA}, a variational auto-encoding GNN with knowledge on the graph. 
For fair comparison, every model uses the same budget for hyperparameter tuning, our causal NF uses affine layers, and CAREFL has been modified to use the proposed do-operator from \cref{sec:interventions} (as the original implementation only works in root nodes).
We increase the complexity of the SCMs and consider: 
i)~\triangl, a 3-node SCM with a dense causal graph; 
ii)~\largebd~\citep{Geffner2022DeepEC}, a 9-node SCM with non-Gaussian \distribution[\rvu] and made out of two chains with common initial and final nodes; and
iii)~\simpson~\citep{Geffner2022DeepEC}, a 4-node SCM simulating a Simpson's paradox~\citep{simpson1951interpretation}, where the relation between two variables changes if the SCM is not properly approximated.

\paragraph{Results}

The results are summarized in \Cref{tab:model_performance}. In a  nutshell: \emph{the proposed causal NF outperforms both CAREFL and VACA in terms of performance and computational efficiency.}
VACA shows poor performance, and is considerably slower due to the complexity of GNNs. %
Our causal NF outperforms CAREFL in counterfactual estimation tasks with identical observational fitting, showing once more the importance of being causally consistent.
Even more, our causal NF is also quicker than CAREFL, as best-performing CAREFL architectures have in general more than one layer.

\begin{table}[t]
    \setlength{\tabcolsep}{8pt} %
    \centering
    \caption{Comparison, on three non-linear SCMs, of the proposed causal NF, VACA~\citep{SnchezMartn2021VACA}, and CAREFL~\citep{pmlr-v130-khemakhem21a} with the do-operator proposed in \cref{sec:interventions}. Results averaged over five runs.} 
    \label{tab:model_performance}
    \resizebox{\textwidth}{!}{
        \begin{tabular}{ll RRR RRR } \toprule
        & & \multicolumn{3}{c}{\text{Performance}} & \multicolumn{3}{c}{\text{Time Evaluation (\si{\microsecond})}} \\ \cmidrule(lr){3-5} \cmidrule(lr){6-8}
        Dataset & Model & \multicolumn{1}{c}{$\KLoperator$} & \multicolumn{1}{c}{ATE$_{\text{RMSE}}$} & \multicolumn{1}{c}{CF$_{\text{RMSE}}$} & \multicolumn{1}{c}{Training} & \multicolumn{1}{c}{Evaluation} & \multicolumn{1}{c}{Sampling} \\
        \midrule

        \multirow{3}{*}{\shortstack{\triangl \\ \textsc{nlin}\\ \citep{SnchezMartn2021VACA}}} & \ours & \mathbf{0.00_{0.00}} & \mathbf{0.12_{0.03}} & \mathbf{0.13_{0.02}} & \mathbf{0.52_{0.07}} & \mathbf{0.58_{0.07}} & \mathbf{1.07_{0.12}} \\
        &CAREFL$^\dag$ & \mathbf{0.00_{0.00}} & \mathbf{0.12_{0.03}} & 0.17_{0.03} & \mathbf{0.57_{0.18}} & \mathbf{0.83_{0.26}} & \mathbf{1.68_{0.62}} \\
        & VACA & 7.71_{0.60} & 4.78_{0.01} & 4.19_{0.04} & 28.82_{1.21} & 23.00_{0.55} & 70.65_{3.70} \\
        \midrule

        \multirow{3}{*}{\shortstack{\largebd \\ \textsc{nlin}\\ \citep{Geffner2022DeepEC}}} & \ours & \mathbf{1.51_{0.04}} & \mathbf{0.02_{0.00}} & \mathbf{0.01_{0.00}} & \mathbf{0.52_{0.10}} & \mathbf{0.60_{0.17}} & \mathbf{3.05_{0.66}} \\
        &CAREFL$^\dag$ & \bfseries 1.51_{0.05} & 0.05_{0.01} & 0.08_{0.01} & \bfseries 0.84_{0.47} & 1.18_{0.17} & 8.25_{1.29} \\
        & VACA & 53.66_{2.07} & 0.39_{0.00} & 0.82_{0.02} & 164.92_{11.10} & 137.88_{15.72} & 167.94_{25.75} \\
        \midrule

        \multirow{3}{*}{\shortstack{\simpson \\ \textsc{symprod} \\\citep{Geffner2022DeepEC}}} & \ours & \mathbf{0.00_{0.00}} & \mathbf{0.07_{0.01}} & \mathbf{0.12_{0.02}} & \mathbf{0.59_{0.17}} & \mathbf{0.60_{0.11}} & \mathbf{1.51_{0.30}} \\
        &CAREFL$^\dag$ & \mathbf{0.00_{0.00}} & 0.10_{0.02} & 0.17_{0.04} & \mathbf{0.49_{0.15}} & 0.81_{0.19} & 1.91_{0.33} \\
        & VACA & 13.85_{0.64} & 0.89_{0.00} & 1.50_{0.04} & 49.26_{4.09} & 37.78_{3.41} & 79.20_{14.60} \\

        \bottomrule
        \end{tabular}
    }
    \vspace{-\baselineskip}
\end{table}

\section{Use-case: fairness auditing and classification} \label{sec:use-cases}

To show the potential practical impact of our work, we follow the fairness use-case of \citet{SnchezMartn2021VACA} on the German Credit dataset~\citep{GermanData}\add{1}{---a dataset from the UCI repository where the likelihood of individuals repaying a loan is predicted based on a small set of features, including sensitive attributes such as their sex}. Extra details and results appear in \cref{app:use_case}.

\paragraph{Experimental setup}

As proposed by \citet{chiappa2019pathscounterffair}, we use a partial graph which groups the 7 discrete features of the dataset in 4 different blocks with known causal relationships, putting in practice the results from \cref{subsec:real-world}.
For the causal NF, we use the abductive model with a single non-affine neural spline layer~\citep{durkan2019neural}.
Our ultimate goal is to train a causal NF that captures well the underlying SCM, and use it to train and evaluate classifiers that predict the (additional) binary feature \textit{credit risk}, while remaining counterfactually fair \wrt the binary variable \textit{sex}, $\ervx_S$.

In this setting, we call a binary classifier $\myclf: \sX \rightarrow \{0, 1\}$ counterfactually fair~\citep{kusner2017counterfactualfairness} if, for all possible factual values $\rvx^\factual\in\sX$, the counterfactual unfairness remains zero. That is, if we have that $\E[\rvx^\factual]{\distribution (\myclf (\rvx^\cfactual ) = 1 \,|\, \doop(\ervx_S=1), \rvx^\factual )  - \distribution (\myclf (\rvx^\cfactual ) = 1 \,|\, \doop(\ervx_S=0), \rvx^\factual )} =0$,
where $\rvx^\cfactual$ is a counterfactual sample coming from the distribution $\distribution(\rvx^\cfactual\, |\, \doop(\ervx_S = s), \rvx^\factual)$, for $s=0,1$.

Following \citet{SnchezMartn2021VACA}, we audit:
a model  that takes all observed variables (\emph{full}); 
an \emph{unaware} model that leaves the sensitive attribute $\ervx_S$ out; 
a fair model  that only considers non-descendant variables of $\ervx_S$ (\emph{fair} $\rvx$);
and, to demonstrate the ability to {learn a counterfactually fair classifier}, we include a classifier that takes $\rvu = \flow(\rvx)$ as input, but leaves $\ervu_S$ out (\emph{fair} $\rvu$).

\paragraph{Results}

\Cref{tab:cf_fairness} summarizes the performance and unfairness of the classifiers, using logistic regression~\citep{cox1958regression} and SVMs~\citep{cortes1995support}. %
Here, we observe that by taking the non-sensitive exogenous variables from the causal NF, the obtained classifiers achieve comparable or better accuracy than the rest of the classifiers, while at the same time being counterfactually fair.
Moreover, the estimations of unfairness obtained with the causal NF match our expectations~\citep{kusner2017counterfactualfairness}, with $\clffull$ being the most unfair, followed by $\clfaware$ and the two fair models. 
With this use-case, we demonstrate that \emph{Causal NFs may indeed  be a valuable asset for real-world causal inference problems.}

\begin{table}[t]
    \setlength{\tabcolsep}{5pt} %
    \centering
    \caption{Accuracy, F1-score, and counterfactual unfairness of the audited classifiers. Causal NFs enable both fair classifiers and accurate unfairness metrics. Results are averaged on five runs.}
    \label{tab:cf_fairness}
    \resizebox{\textwidth}{!}{
    \begin{tabular}{l RRRR RRRR} \toprule 
        & \multicolumn{4}{c}{\text{Logistic classifier}} & \multicolumn{4}{c}{\text{SVM classifier}} \\ \cmidrule(lr){2-5} \cmidrule(lr){6-9}
        & \multicolumn{1}{c}{\text{full}} & \multicolumn{1}{c}{\text{unaware}} & \multicolumn{1}{c}{\text{fair}  $\rvx$} & \multicolumn{1}{c}{\text{fair} $\rvu$} & \multicolumn{1}{c}{\text{full}} & \multicolumn{1}{c}{\text{unaware}} & \multicolumn{1}{c}{\text{fair}  $\rvx$} & \multicolumn{1}{c}{\text{fair} $\rvu$}   \\ \midrule 
        f1  & 72.28_{6.16} & 72.37_{4.90} & 59.66_{8.57} & 73.08_{4.38} & 76.04_{2.86} & 76.80_{5.82} & 68.28_{5.74} & 77.39_{1.52} \\
        accuracy  & 67.00_{3.83} & 66.75_{2.63} & 54.75_{5.91} & 66.50_{3.70} & 69.50_{3.11} & 71.00_{3.83} & 59.25_{2.99} & 69.75_{1.26} \\
        unfairness  & 5.84_{2.93} & 2.81_{0.72} & 0.00_{0.00} & 0.00_{0.00} & 6.65_{2.45} & 2.78_{0.40} & 0.00_{0.00} & 0.00_{0.00} \\
        \bottomrule 
    \end{tabular}
    }
\end{table}

\section{Concluding remarks} \label{sec:conclusions}

\msg{Summary}

In this work, we have shown---both theoretically and empirically---that causal NFs are a natural choice to learn a broad class of causal data-generating processes in a principled way.
Specifically, we have proven that causal NFs can match the observational distribution of an underlying SCM, and that in doing so the ANF needs to be causally consistent.
However, as limited data availability and local optima may hamper reaching these solutions in practice,  we have explored different network designs, %
exploiting the available knowledge on the causal graph. 
Moreover, we have provided causal NFs with a {do-operator} to efficiently solve causal inference tasks.
Finally, we have empirically validated our findings, and demonstrated that our causal NF framework: i)~outperforms competing methods; and ii)~can deal with mixed data and partial knowledge on the causal graph. %

\paragraph{Practical limitations} 

Despite considering a broad class of SCMs, we have made several assumptions that, while being standard, may not hold in some application scenarios. 
With regard to our causal assumptions, the presence of unmeasured hidden confounders may break our causal sufficiency assumption; mismatches between the true causal graph (\eg, it may contain cycles) and our assumed graph/ordering may lead to poor estimates of interventional and counterfactual queries; and the non-bijective true causal dependencies may invalidate our theoretical and thus practical findings.
Besides, we have focused on MLE estimation for learning the causal NF. However,  MLE does not test the independency of the exogenous variables during training, which would also break our causal sufficiency assumption. 

\paragraph{Future work} We firmly believe that our work opens a number of interesting directions to explore. 
Naturally, we would like to address current limitations by, \eg, using interventional data to address the existence of hidden confounders~\citep{ilse2021combining,NasrEsfahany2023CounterfactualIO}, explore alternative losses other than MLE (\eg, flow matching~\citep{lipman2022flow}).
Moreover, it would be exciting to see causal NFs applied to other problems such as causal discovery~\citep{Geffner2022DeepEC}, fair decision-making~\citep{kusner2017counterfactualfairness}, or neuroimaging~\citep{10.3389/fninf.2011.00013}, among others.
However, we would like to stress that, in the above contexts, it would be essential to validate the suitability of our framework (\eg, using experimental data) to prevent potential harms.

\section*{Acknowledgements}

We would like to thank Batuhan Koyuncu and Jonas Klesen for their invaluable feedback\add{1}{, as well as the anonymous reviewers whose feedback helped us improve this manuscript}.
\add{1}{This project is funded by DFG grant 389792660 as part of \href{https://perspicuous-computing.science}{TRR~248 -- CPEC}}.
Pablo S\'anchez Mart\'in thanks the German Research Foundation through the Cluster of Excellence “Machine Learning -- New Perspectives for Science”, EXC 2064/1, project number 390727645 for generous funding support. 
The authors also thank the International Max Planck Research School for Intelligent Systems (IMPRS-IS) for supporting Pablo S\'anchez Mart\'in.

	\bibliographystyle{myplainnat}
	\bibliography{references}

     \setlength{\parskip}{5pt plus 1.5pt minus 1.5pt}

	\clearpage
	\appendix
	\renewcommand{\partname}{}  %
	\part{Appendix} %
    
	\parttoc %

    \newpage
    \section{Theory of causal normalizing flows}
\label{app:causal_nf}

\subsection{Identifiability results} \label{app:identifiabily-results}

First, we provide a more detailed explanation on the connection between the results from \cref{sec:identifiability} and those from the work of \citet{Xi2022IndeterminacyIG}.
We consider it important to clarify that the definition of identifiability that we use is the same as \citep[Def. 2]{Xi2022IndeterminacyIG}. Specifically, this definition is one better suited for deep learning models, which is concerned with recovering the variables $\rvu$ and \emph{one} parametrization that perfectly matches the original generator. In other words, with this definition we aim to recover one parametrization of a neural network which provides the generator function, but not the \emph{exact} parametrization of the generator that generated the data.

We also want to clarify that \cref{thrm:bloem} from the main paper corresponds to \citep[Prop. 5.2]{Xi2022IndeterminacyIG}, which we rewrote (without changing its content) to plain English and to match our particular setting.
We now provide the proof for \cref{cor:causal-consistency}:

\causalconsistency*

\begin{proof}
    Assume that we have a flow \flow that does indeed isolate the exogenous variables, meaning that the \nth{i} output of the flow, $\flow(\rvx)_i$, is related with the true exogenous variable, $\ervu_i$, by an invertible function that only depends on it.
    
    As explained in \cref{sec:identifiability}\remove{1}{)}, this means that for a variable $\rvu\sim\distribution[\scm]$, we have that $\flow(\funcb(\rvu)) \sim \distribution[\thetab]$ and $\flow(\funcb(\rvu)) = \bm{h}(\rvu) = (h_1(\ervu_1), h_2(\ervu_2), \dots, h_\sizex(\ervu_\sizex))$.

    But we know the true generator, whose \nth{i} exogenous variable is given by $\ervu_i = \func[i]^{-1}(\rvx_\parents{i}, \ervx_i)$ (the inverse of \func[i] w.r.t $\ervu_i$) and, putting all together,
    \begin{equation}
        \flow(\rvx)_i = h_i(\ervu_i) = h_i(\func[i]^{-1}(\rvx_\parents{i}, \ervx_i)) \,,
    \end{equation}
    which is a function of \emph{only} the parents of $\ervx_i$ and $\ervx_i$ itself.

    If we call $\rvu = \funcb^{-1}(\rvx) \coloneqq (\func[1]^{-1}(\rvx_\parents{1}, \ervx_1), \func[2]^{-1}(\rvx_\parents{2}, \ervx_2), \dots, \func[\sizex]^{-1}(\rvx_\parents{\sizex}, \ervx_\sizex))$ the inverse of the SCM \scm that writes $\rvu$ as a function of $\rvx$ (see \cref{app:design}\add{1}{\xspace for an example}), then it is clear that
    \begin{equation}
        \nabla_\rvx \flow(\rvx) = \nabla_\rvx (\bm{h} \circ \funcb^{-1})(\rvx) = \nabla_\rvu \bm{h}(\rvu) \cdot \nabla_\rvx \funcb^{-1}(\rvx) = \mD \cdot \nabla_\rvx \funcb^{-1}(\rvx) \equiv \mI - \graph \,,
    \end{equation}
    where $\mD$ is a diagonal matrix and $\graph$ the adjacency matrix of the causal graph induced by \scm.

    Similarly, $\flow^{-1}(\bm{h}(\rvu)) = \rvx = \funcb(\rvu)$ and $\flow^{-1}(\bm{h}(\rvu))_i = \ervx_i = \func[i](\rvu_\ancestors{i}, \ervu_i)$, which again implies that:
    \begin{equation}
        \nabla_{\tilde\rvu} \flow^{-1}(\tilde\rvu) \equiv \left(\mI - \graph\right)^{-1} = \mI + \sum_{n=1}^{\diam(\graph)} \graph^n \,,
    \end{equation}
    where\add{1}{\xspace we call $\tilde\rvu = \bm{h}(\rvu)$ the variable $\rvu$ recovered by \flow, and where} we have omitted $\bm{h}$ as its Jacobian matrix is diagonal. Note that the infinite sum above vanishes at $n = \diam\graph$ since $\graph$ is triangular with diagonal zero.
\end{proof}

\subsection{Extension to real-world settings}
\label{app:nf_real_world}

\subsubsection{Discrete data} \label{app:discrete-data}

In this section, we describe how to extend the results presented in the main text for the case where one observed variable, $\ervx_i$, is discrete. 
To this end, we restate the more general data-generative process assumed by \citet{Xi2022IndeterminacyIG}, which we used for the theoretical part of the manuscript.

Following the notation of the manuscript, say that we have a data-generating process without recursions, \ie*, we have a function $\funcb$ that maps $\rvu$ to $\rvx$. 
Let us assume, without loss of generality, that only the \nth{i} observed variable 
is discrete, and let us focus on the way this variable is generated, dropping the subindex $i$ along the way to avoid clutter.
Now, \citet{Xi2022IndeterminacyIG} additionally consider the existence of a fixed noise distribution $\distribution[\noise]$ and mechanism $g$, such that the observed variable $\ervx_i$ is generated as,
\begin{align}
    \rvu := \left(\ervu_1, \ervu_2, \dots, \ervu_\sizex\right) \sim\distribution[\rvu]\,, && \noise \sim \distribution[\noise]\,, && \ervx_i = g(\func[i](\rvu_\ancestors{i}, \ervu_i), \noise) \,,
\end{align}
with $\noise \indep \rvu$, and where they study the noiseless case under the following assumption: if for two generative processes with $\noise_a \deq \noise_b$, then $g(\func[i](\rvu_a), \noise_a) \deq g(\func[i](\rvu_b), \noise_b) $ if and only if $\func[i](\rvu_a) \deq \func[i](\rvu_b)$, where $\deq$ denotes equal in distribution.

Just as we do with the rest of variables, we also make the assumption that the observed variable $\tilde\ervx_i$ is the transformation of a continuous exogenous variable, $\ervu_i$, with a function $\tilde{\func[i]}$ that fulfils our assumptions (\ie, that $\tilde{\func[i]}$ is a diffeomorphism) that has undergone a quantization process, \ie, $\ervx_i = \func[i](\rvu_\ancestors{i}, \ervu_i) \coloneqq \floor{\tilde{\func[i]}(\rvu_\ancestors{i}, \ervu_i)}$.
Therefore, it is clear that \func[i] is no longer bijective, as we are clamping real numbers into integers, and that the observational distribution of $\ervx_i$ is discrete.

We take advantage of the noise assumption above, and dequantize the observed variable $\ervx_i$ by assuming an additive noise mechanism such that $\ervx_i \coloneqq \func[i](\rvu_\ancestors{i}, \ervu_i) + \epsilon$, with $\epsilon$ distributed between the unit interval with any continuous distribution (we take in our experiments $\distribution[\noise] = \mathcal{U}(0,1)$). 
With this process: i)~we have made $\tilde\ervx_i$ again a continuous random variable, as the sum of independents discrete and continuous random variables is a continuous random variable; and ii)~the original distribution of the noiseless observed variables is always recoverable $\distribution(\tilde\ervx_i = c) = \distribution(c \leq \ervx_i \leq c+1)$.

More importantly, all the theoretical insights from the work of \citet{Xi2022IndeterminacyIG} can still be used, working with the noisy case rather than the noiseless one. 
Indeed, as for their analysis they assume a single $\rvu$ in the domain of the generator, we can merge the generator and noise mechanism $g\circ\func[i]: \sR \rightarrow \sR$ (rather than $g\circ\func[i]: \sR \times [0, 1] \rightarrow \sR$), by mapping the non-injective part of $\rvu$ to $\epsilon$ itself,
\ie, by using the function $(g\circ\func[i])(\rvu_\ancestors{i}, \ervu_i) = \floor{\tilde{\func[i]}(\rvu_\ancestors{i}, \ervu_i)} + F_\noise^{-1}(\tilde{\func[i]}(\rvu_\ancestors{i}, \ervu_i) - \floor{\tilde{\func[i]}(\rvu_\ancestors{i}, \ervu_i)})$, where $F_\noise^{-1}$ is the quantile function of \distribution[\noise]. 
This new function is a diffeomorphism almost everywhere, as it is a composition of a.e. diffeomorphisms, and we have effectively replaced the noise variable by the floating part of $\tilde{\func[i]}(\rvu_\ancestors{i}, \ervu_i)$ before quantization.
Moreover, note that if the noise is uniformly distributed, $\distribution[\noise] = \mathcal{U}(0, 1)$, we have that $g\circ\func[i] = \tilde{\func[i]}$ (if $\ervx_i$ were discretized by taking its integer part).

In short, by adding noise to discrete variables while keeping them recoverable, we can learn a mapping between continuous variables that learns a version of the generator function before the observed values were somehow discretized. Importantly, the observed discrete distribution is always recoverable, independently of whether we learn the (unknown and unrecoverable) underlying continuous distribution before being discretized.

\subsubsection{Partial knowledge} \label{app:partial-knowledge}

In this section, we explain how to expand our framework to settings in which we have partial information about the causal graph of \scm. 
That is, we know the causal ordering $\pi$ (so we know that half of the causal relationships, the upper diagonal), and we are certain about some other causal relationships (edges on \graph), but not all of them.

To this end, first let us first introduce the way we deal with partial knowledge, and then clarify the theoretical implications that it has with respect to the theory introduced in \cref{sec:interventions}.

\paragraph{The method} Similar to \cref{sec:designing}, let us motivate the method with an illustrative example. Suppose that we are given an SCM such as the one in \cref{subfig:partial-case}, where we know all relationships but the one between $\ervx_2$ and $\ervx_3$. 
Note that, in this case, we lack even information about the causal ordering. 
Indeed, there are three possible outcomes: 
i)~the edge $\ervx_2 \rightarrow \ervx_3$ could exist (\cref{subfig:partial-1}); 
ii)~the edge $\ervx_3 \rightarrow \ervx_2$ could exist (\cref{subfig:partial-2}); 
or iii)~both could exist simultaneously (\cref{subfig:partial-3}), and hence there is a confounder between them.
However, we do not know which of the three options is the correct one.

Let us switch now to \cref{fig:example-partial-solution}. To solve the original problem (\cref{subfig:initial-case}, one natural approach is to group the nodes with unknown relationships---assuming that all unknown edges may exist---and maximize the observed likelihood (\cref{subfig:fully-block}).
This, effectively, is equivalent to applying an ANF to the known relationships, and using a general-Jacobian NF to learn the joint of the block variables.
However, if we want to keep using exclusively ANFs (\cref{subfig:ar-block}), we can learn the joint distribution within the blocks with an ANF using a fixed ordering (which it can always do, as it is a universal density approximator~\citep{papamakarios2021normalizing}). The only subtle detail here is that, in that case, we need to increase the granularity of all inter-block edges from node- to block-wise relationships, assuming that an edge exists if it exists for at least one of the elements of the block.
To see why this is necessary, assume that the real graph of the example is \cref{subfig:partial-2}, yet we use an ANF with the ordering $\pi = (1\ 2\ 3\ 4)$ and the graph \graph without inter-block modifications (\ie, the adjacency matrix of \cref{subfig:partial-1}). In that case, we would have that $\ervx_4$ depends on $\ervx_3$ through $\ervx_2$. However, a causally consistent NF \wrt \graph would not be able to model that dependency, and thus $\ervx_4$ would depend on $\ervu_1$, $\ervu_2$, and $\ervu_4$ but not on $\ervu_3$.
With this approach, the ANF can model every case from \cref{fig:example-partial}, and it would need to remove the extra spurious relationships through optimization.

Therefore, to reuse our existing results from \cref{sec:interventions}, the method that we have adopted is the one from \cref{subfig:ar-block}, which can be described in the following steps:
\begin{enumerate}
    \item Run Tarjan's algorithm~\citep{doi:10.1137/0201010} to group all nodes by their SCCs (note that, unlike in the given example, there could be more than one cluster of unknown relationships).
    \item Choose an ordering that is consistent with the known inter-SCC edges, fixing the edges within the SCCs.
    \item Move from node-edges to SCC-edges. In practice, this means introducing edges between every pair of edges of two SCCs, if there exists at least one edge between them.
    \item Solve the MLE problem with an ANF as in the main manuscript.
\end{enumerate}

\begin{figure}[t]
    \begin{subfigure}[b]{0.24\textwidth}
        \centering
        $$\begin{pmatrix}
            0 & 0 & 0 & 0 \\
            1 & 0 & \red{?} & 0 \\
            1 & \red{?} & 0 & 0 \\
            0 & 1 & 0 & 0 
        \end{pmatrix}$$
    \end{subfigure}
    \hfill %
    \begin{subfigure}[b]{0.24\textwidth}
        \centering
        $$\begin{pmatrix}
            0 & 0 & 0 & 0 \\
            1 & 0 & \red{0} & 0 \\
            1 & \red{1} & 0 & 0 \\
            0 & 1 & 0 & 0 
        \end{pmatrix}$$
    \end{subfigure} %
    \hfill %
    \begin{subfigure}[b]{0.24\textwidth}
        \centering
        $$\begin{pmatrix}
            0 & 0 & 0 & 0 \\
            1 & 0 & \red{1} & 0 \\
            1 & \red{0} & 0 & 0 \\
            0 & 1 & 0 & 0 
        \end{pmatrix}$$
    \end{subfigure} %
    \hfill %
    \begin{subfigure}[b]{0.24\textwidth}
        \centering
        $$\begin{pmatrix}
            0 & 0 & 0 & 0 \\
            1 & 0 & \red{1} & 0 \\
            1 & \red{1} & 0 & 0 \\
            0 & 1 & 0 & 0 
        \end{pmatrix}$$
    \end{subfigure}
    \\[5pt]
    \begin{subfigure}[b]{0.24\textwidth}
        \centering
        \begin{tikzcd}[every matrix/.append style={name=m}, 
            row sep=15pt, scale cd=1.0,
        ]
            & |[Node]| {\ervx_3} & \\
            |[Node]|  {\ervx_1} \arrow[r] \arrow[ur] & |[Node]| {\ervx_2} \arrow[u, "?" right, dash, dashed, red] \arrow[r] & |[Node]| {\ervx_4} 
        \end{tikzcd}
        \caption{Partial knowledge.} \label{subfig:partial-case}
    \end{subfigure} %
    \hfill %
    \begin{subfigure}[b]{0.24\textwidth}
        \centering
        \begin{tikzcd}[every matrix/.append style={name=m}, 
            row sep=15pt, scale cd=1.0,
        ]
            & |[Node]| {\ervx_3} & \\
            |[Node]|  {\ervx_1} \arrow[r] \arrow[ur] & |[Node]| {\ervx_2} \arrow[u, red] \arrow[r] & |[Node]| {\ervx_4} 
        \end{tikzcd}
        \caption{Option 1.} \label{subfig:partial-1}
    \end{subfigure} %
    \hfill %
    \begin{subfigure}[b]{0.24\textwidth}
        \centering
        \begin{tikzcd}[every matrix/.append style={name=m}, 
            row sep=15pt, scale cd=1.0,
        ]
            & |[Node]| {\ervx_3} & \\
            |[Node]|  {\ervx_1} \arrow[r] \arrow[ur] & |[Node]| {\ervx_2} \arrow[u, red, leftarrow] \arrow[r] & |[Node]| {\ervx_4} 
        \end{tikzcd}
        \caption{Option 2.} \label{subfig:partial-2}
    \end{subfigure} %
    \hfill %
    \begin{subfigure}[b]{0.24\textwidth}
        \centering
        \begin{tikzcd}[every matrix/.append style={name=m}, 
            row sep=15pt, scale cd=1.0,
        ]
            & |[Node]| {\ervx_3} & \\
            |[Node]|  {\ervx_1} \arrow[r] \arrow[ur] & |[Node]| {\ervx_2} \arrow[u, red, leftrightarrow] \arrow[r] & |[Node]| {\ervx_4} 
        \end{tikzcd}
        \caption{Option 3.} \label{subfig:partial-3}
    \end{subfigure}
    \caption{Example of an SCM with partial knowledge about the causal graph (a) and possible outcomes: (b) in the actual SCM only the edge $\ervx_2 \rightarrow \ervx_3$ exists; (c) only the edge $\ervx_3 \rightarrow \ervx_2$ exists; (d) both edges exist (and therefore there exists a confounder between them).}
    \label{fig:example-partial}
\end{figure}
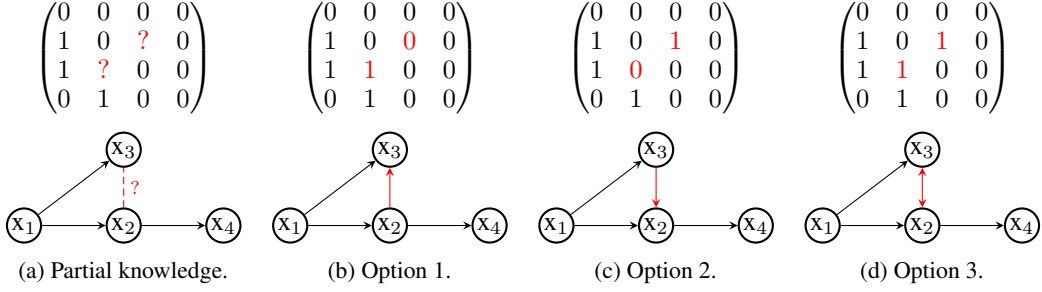

\newcommand{\green}[1]{\textcolor{greendark}{#1\xspace}}

\begin{figure}[t]
    \begin{subfigure}[b]{0.32\textwidth}
        \centering
        $$\begin{pmatrix}
            0 & 0 & 0 & 0 \\
            1 & 0 & \red{?} & 0 \\
            1 & \red{?} & 0 & 0 \\
            0 & 1 & 0 & 0 
        \end{pmatrix}$$
    \end{subfigure}
    \hfill %
    \begin{subfigure}[b]{0.32\textwidth}
        \centering
        $$\begin{pmatrix}
            0 & 0 & 0 & 0 \\
            \green{1} & 0 & \red{1} & 0 \\
            \green{1} & \red{1} & 0 & 0 \\
            0 & \green{1} & \green{0} & 0 
        \end{pmatrix}$$
    \end{subfigure}
    \hfill %
    \begin{subfigure}[b]{0.32\textwidth}
        \centering
        $$\begin{pmatrix}
            0 & 0 & 0 & 0 \\
            \green{1} & 0 & \red{0} & 0 \\
            \green{1} & \red{1} & 0 & 0 \\
            0 & \green{1} & \green{1} & 0 
        \end{pmatrix}$$
    \end{subfigure}
    \\[5pt]
    \begin{subfigure}[b]{0.32\textwidth}
        \centering
        \begin{tikzcd}[every matrix/.append style={name=m}, 
            row sep=15pt, scale cd=1.0,
        ]
            & |[Node]| {\ervx_3} & \\
            |[Node]|  {\ervx_1} \arrow[r] \arrow[ur] & |[Node]| {\ervx_2} \arrow[u, "?" right, dash, dashed, red] \arrow[r] & |[Node]| {\ervx_4} 
        \end{tikzcd}
        \caption{Partial-knowledge SCM.} \label{subfig:initial-case}
    \end{subfigure} %
    \hfill %
    \begin{subfigure}[b]{0.32\textwidth}
        \centering
        \begin{tikzcd}[every matrix/.append style={name=m}, 
            row sep=.5pt, scale cd=1.0,
            /tikz/execute at end picture={
                \node (large) [rectangle, draw, fit=(A1) (A2), greendark] {};
                \node (R3) [rectangle, draw, fit=(A3)] {};
                \node (R0) [rectangle, draw, fit=(A0)] {};
            }
        ]
            & |[Node, alias=A1]| {\ervx_3} \arrow[d, red, dashed, leftrightarrow] \arrow[dr] & \\
            |[Node, alias=A0]|  {\ervx_1} \arrow[dr] \arrow[ur] & & |[Node, alias=A3]| {\ervx_4} \\
            & |[Node, alias=A2]| {\ervx_2} &
        \end{tikzcd}
        \caption{Solved with a general NF.} \label{subfig:fully-block}
    \end{subfigure} %
    \hfill
    \begin{subfigure}[b]{0.32\textwidth}
        \centering
        \begin{tikzcd}[every matrix/.append style={name=m}, 
            row sep=.5pt, scale cd=1.0,
            /tikz/execute at end picture={
                \node (large) [rectangle, draw, fit=(A1) (A2), greendark] {};
                \node (R3) [rectangle, draw, fit=(A3)] {};
                \node (R0) [rectangle, draw, fit=(A0)] {};
                \draw[->] (large) -- (R3);
                \draw[->] (R0) -- (large);
            }
        ]
            & |[Node, alias=A1]| {\ervx_3} \arrow[d, red, dashed, leftarrow] & \\
            |[Node, alias=A0]|  {\ervx_1} & & |[Node, alias=A3]| {\ervx_4} \\
            & |[Node, alias=A2]| {\ervx_2} &
        \end{tikzcd}
        \caption{Solved with an ANF.} \label{subfig:ar-block}
    \end{subfigure} %
    \caption{Illustrative example (same as in \cref{fig:example-partial}) applying our method for partial information. First, we apply Tarjan's algorithm~\citep{doi:10.1137/0201010} to find the SCCs of the graph (rectangles) and build a new DAG where each node is a subset of the original nodes. If, for the SCCs, we use an NF with a general Jacobian matrix (b), we keep the individual edges and treat each SCC as a block. If we instead use an ANF (c), we pick an arbitrary order within each SCC, and merge the individual edges into SCC-wide edges. Red represents intra-SCC edges, and green inter-SCC edges. See \cref{app:partial-knowledge} for more details.}
    \label{fig:example-partial-solution}
\end{figure}
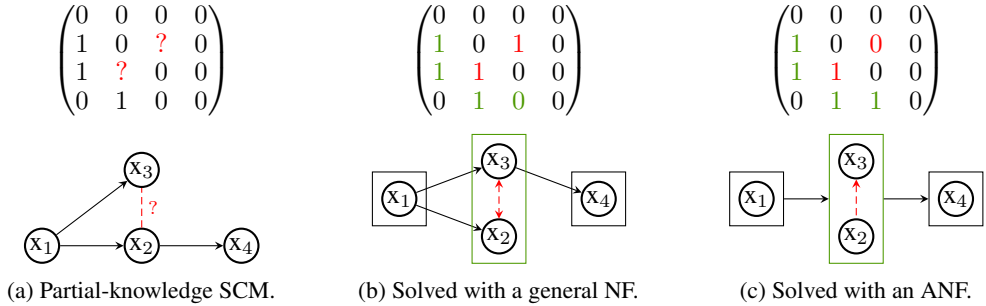

\paragraph{The theory} Very conveniently, the method described above fits almost-perfectly into the already-covered theory in \cref{sec:identifiability}. 
To see this, note that our identifiability theory and following results required a fixed ordering, but it does not need to be the causal one: we can always find an equivalent SCM following the selected ordering by applying KR transports as described in \cref{sec:identifiability}. 
Therefore, the technical implications of both \cref{thrm:bloem} and \cref{cor:causal-consistency} still apply to this fixed ordering, we only need to re-state their implications with respect to the true causal data-generating process \scm.

To this end, we only need to note that all possible graphs, once reduced into a DAG using the partition of SCCs (as in \cref{subfig:ar-block}), are exactly equal. 
In other words, every possible graph shares \emph{the same causal dependencies between SCCs} with the other possible graphs.
If we start treating them like block, calling $\{\ervx_i\} \subset \mS_i \subset \{1, 2, \dots, \sizex\}$ the SCC of the \nth{i} node, and \parents{\mS_i} and \ancestors{\mS_i} the parents and ancestors of every node in the SCC $\mS_i$, then it is clear that we can write for every graph $\tilde\graph$ its observed variables as $\ervx_i = \tilde{\func[i]}(\rvx_\parents{\mS_i}, \tilde\rvu_{\mS_i})$ and, more importantly, we can write its ``exogenous'' variables as a function of the true ones, \ie, $\tilde\rvu_{\mS_i} = \KR(\rvu_{\mS_i})$, where $\KR$ is the Kn\"othe-Rosenblatt transport~\citep{Knothe1957ContributionsTT,10.2307/2236692}. Note also that it does not depend on the data, \ie, $\tilde\ervu_{\mS_i} \indep \rvx | \ervu_{\mS_i}$.

\begin{theorem}[Identifiability -- Partial knowledge] \label{thrm:partial}
    If an element of $\scmf \times \scmp$, and another from $\tilde\scmf \times \scmp$, where $\scmf$ and $\tilde\scmf$ are TMI maps with different intra-SCC orders (see above), generate the same observational distribution, then the two processes differ by an invertible, SCC-wise transformation of the variables $\rvu$.
\end{theorem}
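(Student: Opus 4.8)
The plan is to reduce \cref{thrm:partial} to the already-established identifiability result \cref{thrm:bloem} by reasoning at the level of strongly connected components (SCCs) rather than individual variables. The central observation, spelled out just before the statement, is that the two data-generating processes---one drawn from $\scmf \times \scmp$ and one from $\tilde\scmf \times \scmp$---share \emph{exactly the same inter-SCC dependency structure}, and differ only in the arbitrary intra-SCC orderings they use. So the whole difficulty is to show that this intra-SCC freedom collapses into a single, block-wise reparametrization of the exogenous variables. First I would contract each SCC $\mS_i$ into a super-node, turning both processes into block-triangular maps over the shared SCC-DAG. Writing each observed variable as $\ervx_i = \tilde{\func[i]}(\rvx_\parents{\mS_i}, \tilde\rvu_{\mS_i})$, the inter-block functional dependencies are identical for the two processes by construction, since every admissible graph reduces to the same DAG once SCCs are contracted (\cref{subfig:ar-block}).

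Next I would normalize the intra-SCC orderings. Because an ANF is a universal density approximator, the joint conditional law within each SCC can be represented under \emph{any} internal ordering, and passing from one internal ordering to another is exactly a Kn\"othe--Rosenblatt transport $\KR$ acting on the variables of that SCC alone. Crucially, this transport is invertible and depends only on $\rvu_{\mS_i}$ (it satisfies $\tilde\rvu_{\mS_i} \indep \rvx \mid \rvu_{\mS_i}$), so it can neither create nor destroy any cross-SCC dependency. Applying the appropriate $\KR$ to each process, I bring both into the family $\scmf \times \scmp$ equipped with one common canonical ordering, at the cost only of an SCC-wise invertible reparametrization of each process's exogenous variables.

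With both processes now in the same family and producing the same observational distribution, I would invoke \cref{thrm:bloem} directly: their reparametrized exogenous variables differ by an invertible, component-wise map $\bm{h}$. Composing the two $\KR$ transports with $\bm{h}$---each of which acts within a single SCC---yields an invertible transformation that mixes variables only within each block, \ie\ an SCC-wise transformation, which is precisely the claim.

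I expect the main obstacle to be the rigorous justification of the second step: one must verify that re-ordering inside an SCC genuinely amounts to an SCC-local, data-independent, invertible transport that leaves the inter-SCC TMI structure intact, so that the reparametrized maps still lie in $\scmf \times \scmp$ under the common ordering and \cref{thrm:bloem} applies verbatim. Establishing that the transports introduce no spurious dependencies across blocks---and in particular that the granularity bump from node-edges to SCC-edges is exactly what absorbs the unknown intra-block causal orientation---is where the real content of the argument lies.
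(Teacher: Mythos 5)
Your proposal is correct and follows essentially the same route as the paper: use a Kn\"othe--Rosenblatt transport to re-order within each SCC (an invertible, SCC-local, data-independent reparametrization of the exogenous variables), invoke \cref{thrm:bloem} on the two processes once they lie in a common family, and compose the component-wise map $\bm{h}$ with the SCC-wise $\KR$ transports to obtain the claimed SCC-wise transformation. The only cosmetic difference is that you transport \emph{both} processes to a canonical ordering, whereas the paper transports just one of them into the other's family ($\tilde\rvu_{\mS_i} = (\bm{h}_{\mS_i}\circ \KR_{\mS_i})(\rvu_{\mS_i})$), which saves one transport but changes nothing substantive.
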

\begin{proof}
    Call $\scm$ and $\tilde\scm$ the elements from $\scmf\times\scmp$ and $\tilde\scmf\times\scmp$, respectively.

    W.l.o.g. pick $\scm$, and apply a KR transport to write it down as another element of $\tilde\scmf\times\scmp$, call it $\hat\scm$, with identical observational distribution as both $\scm$ and $\tilde\scm$. 
    
    Using \cref{thrm:bloem}, we know that the elements of $\tilde\scm$ and $\hat\scm$ differ by an invertible, component-wise transformation $\bm{h}$. Moreover, we can write $\hat\rvu_{\mS_i}$ as a function of $\rvu_{\mS_i}$, as argued above, where $\{i\} \subset \mS_i \subset \{1, 2, \dots, \sizex \}$ are the indexes of the SCC that contains $\ervu_i$.
    Putting it all together:
    \begin{equation}
        \tilde\ervu_i = h_i(\hat\ervu_i) = h_i(\KR_i(\rvu_{\mS_i})) \,, 
    \end{equation}
    and, in vectorial form, for each SCC $\mS_i$,
    \begin{equation}
        \tilde\rvu_{\mS_i} = \bm{h}_{\mS_i}(\KR_{\mS_i}(\rvu_{\mS_i})) = (\bm{h}_{\mS_i}\circ \KR_{\mS_i})(\rvu_{\mS_i})
    \end{equation}    
\end{proof}

\begin{corollary}[Causal consistency -- Partial knowledge] \label{cor:partial}
    If a causal NF $\flow$, with partial knowledge of the causal graph, SCC-wise isolates the exogenous variables of an SCM \scm, then \flow is causally consistent with the true data-generating process, \scm, with respect to each SCC.
\end{corollary}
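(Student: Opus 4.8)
The plan is to reproduce the argument of \cref{cor:causal-consistency} almost verbatim, but with every ``component-wise'' statement promoted to a ``block-wise'' (SCC-wise) one, using the partition $\{\mS_1, \mS_2, \dots\}$ into strongly connected components and the coarsened DAG $\tilde\graph$ in place of $\graph$. First I would unpack the hypothesis: by \cref{thrm:partial}, SCC-wise isolation means that, for $\rvu\sim\distribution[\scm]$ and $\funcb$ the TMI form of \scm, the flow output restricted to the SCC $\mS_i$ is an invertible map of the true exogenous block alone, $\flow(\funcb(\rvu))_{\mS_i} = \bm{h}_{\mS_i}(\rvu_{\mS_i})$, with $\bm{h}_{\mS_i}$ invertible and independent of the other SCCs.

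Next I would rewrite $\rvu_{\mS_i}$ through the inverse SCM. As in \cref{app:design}, the acyclic structure of the coarsened graph lets us express each exogenous block as a function of its own endogenous block and of the endogenous blocks of its parent SCCs, $\rvu_{\mS_i} = \funcb^{-1}(\rvx)_{\mS_i}$, a function of $\rvx_{\mS_i}$ and $\rvx_{\parents{\mS_i}}$ only. Composing $\flow = \bm{h}\circ\funcb^{-1}$ and differentiating by the chain rule gives
\begin{equation}
    \nabla_\rvx \flow(\rvx) = \nabla_\rvu \bm{h}(\rvu)\cdot\nabla_\rvx\funcb^{-1}(\rvx) \,,
\end{equation}
where, crucially, $\nabla_\rvu \bm{h}$ is block-diagonal with respect to the SCC partition, each diagonal block being the Jacobian of an invertible map and hence nonsingular. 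Because left-multiplication by a nonsingular block-diagonal matrix neither creates nor annihilates off-diagonal blocks, $\nabla_\rvx \flow(\rvx)$ inherits exactly the block-sparsity pattern of $\nabla_\rvx\funcb^{-1}(\rvx)$, which by construction is block-structurally equivalent to $\identity - \tilde\graph$. The inverse direction is symmetric: from $\flow^{-1}\circ\bm{h} = \funcb$ one obtains $\nabla_\rvu\flow^{-1}(\rvu)$ block-structurally equivalent to $(\identity-\tilde\graph)^{-1} = \identity + \sum_{n=1}^{\infty}\tilde\graph^{n}$, the sum truncating at the diameter of the coarsened DAG since $\tilde\graph$ is block-triangular with zero diagonal blocks. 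This is precisely causal consistency read off the quotient graph, i.e. with respect to each SCC.

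The main obstacle is making rigorous the two places where ``block'' replaces ``component''. First, I must argue that the block-diagonal $\nabla_\rvu\bm{h}$ preserves the block-sparsity pattern \emph{exactly} rather than merely up to inclusions---this is the structural-equivalence analogue of the diagonal-matrix step in \cref{cor:causal-consistency}, and it hinges on each diagonal block being full rank so that no cancellation can zero out a nonzero block. Second, I must pin down the meaning of ``causally consistent with respect to each SCC'': within a single SCC we imposed an arbitrary intra-block ordering, so the flow's intra-block Jacobian is dense-triangular rather than symmetric, and the claim should be understood at the level of the coarsened graph $\tilde\graph$, where each SCC is a single node and any dense-triangular intra-block pattern is, by definition, structurally equivalent to a full block. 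Once these two identifications are fixed, the proof is a line-by-line transcription of \cref{cor:causal-consistency}.
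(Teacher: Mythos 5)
Your proposal is correct and takes essentially the same route as the paper: the paper's proof is explicitly a transcription of \cref{cor:causal-consistency} onto the reduced (SCC-coarsened) graph, invoking \cref{thrm:partial} to write $\flow(\rvx)_i = \bm{h}(\rvu_{\mS_i}) = \bm{h}(\func[\mS_i](\rvx_\parents{\mS_i}, \rvx_{\mS_i}))$ and concluding that the gradients agree with those of \scm{} at the block level. The two clarifications you flag as obstacles---that a nonsingular block-diagonal Jacobian preserves block sparsity exactly, and that consistency is asserted only with respect to the coarsened graph $\tilde\graph$ rather than between individual nodes---are precisely the points the paper leaves implicit in the chain-rule step and states as a remark immediately after the proof.
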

\begin{proof}
    The proof is identical to the one for \cref{cor:causal-consistency}, but using arguments with respect to the reduced graph after grouping all nodes in their respective SCCs. 

    Specifically, we can write using \cref{thrm:partial} the output of the flow as a function of the exogenous variables, $\flow(\rvx)_i = \bm{h}(\rvu_{\mS_i})$, and using the true causal generator, we have
    \begin{equation}
        \flow(\rvx)_i = \bm{h}(\rvu_{\mS_i}) = \bm{h}(\func[\mS_i](\rvx_\parents{\mS_i}, \rvx_{\mS_i})) \,,
    \end{equation}
    and hence the gradients agree with those from \scm, \emph{when looking at the reduced graph}.
\end{proof}

\Cref{thrm:partial} and \cref{cor:partial} provide analogues to those results from the main manuscript. 
It is important to note, however, that causal consistency refers to the causal relationships between SCCs \emph{as a whole}, \ie, not at the causal relationships between individual nodes.
The reason behind this is the same as why we had to introduced spurious edges in \cref{subfig:ar-block}: as we fix an ordering within each SCC, we may not be able to model indirect dependencies of nodes of one SCC to another unless we artificially introduce shortcuts.
As such, \emph{every result from the main paper holds}, if we treat each SCC (or block) as a whole. 
That is, when we reason about SCCs instead of individual nodes (note that if the whole causal graph is known every SCC contains a single node), we can safely talk about SCC-identifiability, causal SCC-consistency, and we can perform interventions and compute counterfactuals on SCCs.

    \section{The multiple representations of SCMs}
\label{app:design}

\subsection{Illustrative example} \label{app:illustrative-example}

\begin{figure}[t]
    \begin{subfigure}[b]{0.15\textwidth}
        \centering
        $\displaystyle \rvx = \linearmap \rvx + \mI \rvu$
    \end{subfigure}
    \hfill %
    \begin{subfigure}[b]{0.35\textwidth}
        \centering
        $\displaystyle \rvx = \linearmap_3 (\linearmap_2 ( \linearmap_1 \rvu))$
    \end{subfigure} %
    \hfill %
    \begin{subfigure}[b]{0.2\textwidth}
        \centering
        $\displaystyle \rvx = (\linearmap^2 + \linearmap + \mI)\rvu$
    \end{subfigure} %
    \hfill %
    \begin{subfigure}[b]{0.15\textwidth}
        \centering
        $\displaystyle \rvu = (\mI - \linearmap)\rvx$
    \end{subfigure}
    \\[5pt]
    \begin{subfigure}[b]{0.15\textwidth}
        \centering
        \begin{tikzcd}[every matrix/.append style={name=m}, 
            row sep=15pt, scale cd=1.0,
        ]
            |[Node, "\rvu" above]| {} \arrow[r, "1" above] \arrow[r, shift right, red, dashed] & |[Node, "\rvx" above]| {} \arrow[d, "2" right] \arrow[d, shift right, red, dashed] \\
            |[Node]| {} \arrow[r, "1" above] & |[Node]| {} \arrow[d, "3" right] \arrow[d, shift right, red, dashed] \\
            |[Node]| {} \arrow[r, "1" above] & |[Node]| {}
        \end{tikzcd}
        \caption{Recursive.} \label{subfig:app-recursive}
    \end{subfigure} %
    \hfill %
    \begin{subfigure}[b]{0.35\textwidth}
        \centering
        \begin{tikzcd}[every matrix/.append style={name=m}, 
            row sep=15pt, scale cd=1.0,
        ]
            |[Node, "\rvu" above]| {} \arrow[r, "1" above] \arrow[r, shift right, red, dashed] \arrow[dr, "0" above] \arrow[dr, shift right, red, dashed] & |[Node]| {} \arrow[dr, "2" above] \arrow[dr, shift right, red, dashed] \arrow[r, "1" above] & |[Node]| {} \arrow[dr, "0" above] \arrow[r, "1" above] & |[Node, "\rvx" above]| {} \\
            |[Node]| {} \arrow[r, "1" above] \arrow[dr, "0" above] & |[Node]| {} \arrow[dr, "0" above] \arrow[dr, shift right, red, dashed]  \arrow[r, "1" above] \arrow[r, shift right, red, dashed] & |[Node]| {} \arrow[dr, "3" above] \arrow[dr, shift right, red, dashed] \arrow[r, "1" above] & |[Node]| {} \\
            |[Node]| {} \arrow[r, "1" above] & |[Node]| {} \arrow[r, "1" above] & |[Node]| {} \arrow[r, "1" above] \arrow[r, shift right, red, dashed] & |[Node]| {}
        \end{tikzcd}
        \caption{Unrolled.} \label{subfig:app-unrolled}
    \end{subfigure} %
    \hfill %
    \begin{subfigure}[b]{0.2\textwidth}
        \centering
        \begin{tikzcd}[every matrix/.append style={name=m}, 
            row sep=15pt, scale cd=1.0,
        ]
            |[Node, "\rvu" above]| {} \arrow[r, "1" above, pos=0.1] \arrow[dr, "2" above] \arrow[ddr, "6" above, pos=0.4] \arrow[ddr, shift right, red, dashed] & |[Node, "\rvx" above]| {} \\
            |[Node]| {} \arrow[r, "1" above, pos=0.1] \arrow[dr, "3" above, pos=0.4] & |[Node]| {} \\
            |[Node]| {} \arrow[r, "1" above, pos=0.1] & |[Node]| {}
        \end{tikzcd}
        \caption{Compacted.} \label{subfig:app-compacted}
    \end{subfigure} %
    \hfill %
    \begin{subfigure}[b]{0.15\textwidth}
        \centering
        \begin{tikzcd}[every matrix/.append style={name=m}, 
            row sep=15pt, scale cd=1.0,
        ]
            |[Node, "\rvu" above]| {} \arrow[r, "1" above, pos=0.7, leftarrow] & |[Node, "\rvx" above]| {} \arrow[dl, "-2" above] \\
            |[Node]| {} \arrow[r, "1" above, pos=0.7, leftarrow] & |[Node]| {} \arrow[dl, "-3" above]  \\
            |[Node]| {} \arrow[r, "1" above, pos=0.7, leftarrow] & |[Node]| {}
        \end{tikzcd}
        \caption{Inverted.} \label{subfig:app-inverted}
    \end{subfigure}
    \caption{Example of the linear SCM $\begin{smallmatrix}\{\ervx_1 \coloneqq \ervu_1 \,; & \ervx_2 \coloneqq 2\ervx_1 + \ervu_2 \,; & \ervx_3 \coloneqq 3\ervx_2 + \ervu_3\}\end{smallmatrix}$ written (a)~in its usual recursive formulation; (b)~without recursions, with each step made explicit; (c)~without recursions, as a single function; and (d) writing $\rvu$ as a function of $\rvx$. The red dashed arrows show the influence of $\ervu_1$ on $\ervx_3$ for all equations from $\rvu$ to $\rvx$, with the compacted version exhibiting shortcuts (see \cref{sec:designing}). Note that\add{1}{\,in the linear case we have $\graph \coloneqq \linearmap \neq \mathbf{0}$, and that} $\linearmap_1, \linearmap_2, \linearmap_3 \preceq \linearmap + \mI$ are any three matrices such that their product equals $\linearmap^2 + \linearmap + \mI$.}
    \label{fig:app-example-1}
\end{figure}

In this section, we delve a bit deeper into the illustrative example from the main paper (\cref{sec:designing}), and write down the maths to understand how are these different models equivalent. For convenience to the reader, we have replicated \cref{fig:example-1} in the appendix as \cref{fig:app-example-1}.

\paragraph{Recursive SCM} 

As explained in the manuscript, the usual way of describing an SCM \scm is by providing its recursive equations. In our example (\cref{subfig:app-recursive}), we have a linear SCM of the form 
\begin{equation}
    \begin{cases}
        \ervx_1 = \ervu_1 \\
        \ervx_2 = 2\ervx_1 + \ervu_2 \\
        \ervx_3 = 3\ervx_2 + \ervu_3
    \end{cases}\, ,
\end{equation}
which we can compactly write as $\rvx = \linearmap\rvx + \mI\rvu$.
However, the recursive equations are not the most convenient ones, as they entail solving the system iteratively according to its causal dependencies.

\paragraph{Unrolled SCM} Instead, we can write the equations as a function from $\rvu$ to $\rvx$ directly. To do this, we can proceed and unroll the equations:
\begin{equation}
    \begin{cases}
        \ervx_1 = \ervu_1 \\
        \ervx_2 = 2\ervx_1 + \ervu_2 \\
        \ervx_3 = 3\ervx_2 + \ervu_3 
    \end{cases} 
    \Rightarrow
    \begin{cases}
        \ervx_1 = \ervu_1 \\
        \ervx_2 = 2\ervu_1 + \ervu_2 \\
        \ervx_3 = 3(2\ervx_1 + \ervu_2) + \ervu_3 
    \end{cases}
    \Rightarrow
    \begin{cases}
        \ervx_1 = \ervu_1 \\
        \ervx_2 = 2\ervu_1 + \ervu_2 \\
        \ervx_3 = 3(2\ervu_1 + \ervu_2) + \ervu_3 
    \end{cases} \,, \label{eq:example-unrolled}
\end{equation}
which we can write as a multi-step function:
\begin{equation}
    \begin{cases}
        \ervz^1_1 = \ervu_1 \\
        \ervz^1_2 = \ervu_2 \\
        \ervz^1_3 = \ervu_3 
    \end{cases} 
    \Rightarrow
    \begin{cases}
        \ervz^2_1 = \ervz^1_1 \\
        \ervz^2_2 = 2\ervz^1_1 + \ervz^1_2 \\
        \ervz^2_3 = \ervz^1_3
    \end{cases}
    \Rightarrow
    \begin{cases}
        \ervx_1 = \ervz^2_1 \\
        \ervx_2 = \ervz^2_2 \\
        \ervx_3 = 3\ervz^2_2 + \ervz^2_3 
    \end{cases} \,,
\end{equation}
that we can once again compactly write as a series of linear operations $\rvx = \linearmap_3 (\linearmap_2 ( \linearmap_1 \rvu))$. 
Note that the matrices $\linearmap_1,\linearmap_2,\linearmap_3$ are not unique, and that they are valid as long as they are at most as sparse as \linearmap, and produce the same final output. 

\paragraph{Compacted SCM} Another natural step here is to compress this sequence of linear equations into a single operation. That is, use directly the linear operation described at the end of \cref{eq:example-unrolled}:
\begin{equation}
    \begin{cases}
        \ervx_1 = \ervu_1 \\
        \ervx_2 = 2\ervu_1 + \ervu_2 \\
        \ervx_3 = 6\ervu_1 + 3\ervu_2 + \ervu_3 
    \end{cases} \,.
\end{equation}
And we can derive the same result in vectorial form:
\begin{align*}
    \rvx &= \linearmap \rvx + \mI \rvu 
    \Rightarrow
    \rvx = \linearmap (\linearmap \rvx + \mI \rvu ) + \mI \rvu 
    \Rightarrow
    \rvx = \linearmap (\linearmap ( \linearmap \rvx + \mI \rvu) + \mI \rvu ) + \mI \rvu 
    \Rightarrow \\
    \rvx &= \cancelto{0}{\linearmap^3} \rvx + \linearmap^2 \rvu + \linearmap \rvu + \mI \rvu = (\linearmap^2 + \linearmap + \mI) \rvu \,.
\end{align*}
Unfortunately, as described in \cref{sec:designing}, in this form we cannot longer distinguish indirect paths: we have collapsed all paths into direct paths. 
Even worse, if there were more than one path between two given nodes, we have combined their contributions into a single path, making it quite difficult to disentangle.
This effect can be seen as analogous to the common process in cryptography for sharing secrets: if you have two primes (paths), it is fairly easy to multiply them and obtain their product, but if you have their product (collapsed paths), performing prime factorization of the number is prohibitive.

\paragraph{Inverted SCM} Finally, we can take any of these different representations and invert the equations to go from $\rvx$ to $\rvu$. In this case, it is easier to work with the original equations:
\begin{equation}
    \begin{cases}
        \ervx_1 = \ervu_1 \\
        \ervx_2 = 2\ervx_1 + \ervu_2 \\
        \ervx_3 = 3\ervx_2 + \ervu_3
    \end{cases}
    \Rightarrow
    \begin{cases}
        \ervu_1 = \ervx_1 \\
        \ervu_2 = \ervx_2 - 2\ervx_1  \\
        \ervu_3 = \ervx_3 - 3\ervx_2
    \end{cases} \,,
\end{equation}
and, in vectorial form: 
\begin{equation}
    \rvx = \linearmap\rvx + \mI \rvu \Rightarrow \rvu = (\mI - \linearmap)\rvx \,.
\end{equation}

As discussed in the main manuscript, this turns out to be a really convenient SCM representation to work with, as\remove{1}{: i)}~we can obtain the exogenous variables in one go\replace{1}{; and ii)~even with a single layer, all indirect paths are preserved}{\,while being causally consistent}.

\subsection{Non-linear SCM representations}

In this section, we discuss how the same representation and reasoning about the causal relationships of a linear SCM from \cref{app:illustrative-example} can be to the general case.

To this end, assume that we have a non-linear SCM \scm of the form $\rvx = \funcxb(\rvx, \rvu)$ where, to ease the reader, imagine that it has the same causal graph as the linear example, so that the reader can use \cref{fig:app-example-1} as a reference again, \ie, assume that
\begin{align}
     \left(\nabla_\rvx \funcxb(\rvx, \rvu) \neq \bm{0} \right) = \begin{pmatrix}
         0 & 0 & 0 \\
         1 & 0 & 0 \\
         0 & 1 & 0 
     \end{pmatrix} && \text{and} && \left(\nabla_\rvu \funcxb(\rvx, \rvu) \neq \bm{0} \right) = \begin{pmatrix}
         1 & 0 & 0 \\
         0 & 1 & 0 \\
         0 & 0 & 1
     \end{pmatrix} \,,
\end{align}
where $\bm{0}$ is the constant zero function with the same domain and codomain as the Jacobian matrices.

\paragraph{Recursive SCM} Already extensively discussed. This is the representation an SCM is given as.

\paragraph{Unrolled SCM} Just as before, we can unroll the equations by having multiple functions $\rvz^l = \funcb_l(\rvz^{l-1})$, and in each one we unroll those equations for which we already know the non-recursive equation of its parents, leaving all the other fixed (identity functions). As before, we can write these multiple layers in different ways, as long as they produce the same final function (after composing them, $\funcb_1 \circ \funcb_2 \circ \dots \circ \funcb_L = \funcb$), and that they respect the causal dependencies provided by $\mI + \graph$.
This is similar to the linear example, and what we used to design the generative model in \cref{eq:generative-model} of \cref{sec:designing}.

\paragraph{Collapsed SCM} Just as before, we can expand all the different layers and write a (probably complex) formula that encompasses all changes in a single step. It is easy to show that the composition of these functions has, in general, a Jacobian matrix structurally equivalent to $\mI + \sum_l^{\diam(\graph)} \graph^l$. Specifically, their composition will be of the form $\prod_l (\mI + \graph)$.

\paragraph{Reverse SCM} Since we assume that each $\funcx[i](\rvx_\parents{i}, \ervu_i)$ is bijective with respect to $\ervu_i$, we can always compute its inverse to obtain $\ervu_i$ as a function of the observed values, $\ervu_i = \funcx[i]^{-1}(\rvx_\parents{i}, \ervx_i)$. Clearly, its Jacobian matrix will be structurally equivalent to $\mI + \graph \equiv \mI - \graph$.

Therefore, we can always reason as we did with the linear case, but using Jacobian matrices to talk about causal dependencies between variables, and possibly having a complex and/or non-closed formulation of the generative/abductive functions.

    \section{Do-operator: interventions and counterfactuals}
\label{app:do_operator}

\subsection{Definition and algorithms}

In this section, we extend on the do-operator implementation described in \cref{sec:interventions}, and provide the step-by-step algorithms to perform interventions (\cref{alg:intervention}) and compute counterfactuals (\cref{alg:counterfactual}). 

\paragraph{Semantics} Recalling \cref{sec:interventions}, the \textit{do-operator}~\citep{Pearl2012TheDR}, denoted as $\doop(\ervx_\intervention = \alpha)$, is defined as a mathematical operator that simulates a physical intervention on an SCM \scm, inducing an alternative model $\intervene\scm$ that fixes the observational value $\ervx_\intervention = \alpha$, and thus removes any causal dependency on $\ervx_\intervention$. 
However, the definition does not describe the specifics on how to implement such an operation.

\paragraph{Usual implementation}
Traditionally, we are given the recursive representation of an SCM (\cref{subfig:app-recursive}), as discussed in \cref{app:design}. 
As such, the do-operator $\doop(\ervx_i = \alpha)$ is usually carried out by replacing the \nth{i} equation, \ie, the \nth{i} component of \funcxb, with a constant function. That is, by doing $\intervene{\funcx[i]} \coloneqq \alpha$. 
This yields an intervened SCM $\intervene\scm = ({\intervene\funcxb, \distribution[\rvu]})$ reflecting the data-generating process after such an intervention.
Unfortunately, this implementation of the do-operator is quite specific to the recursive representation of the SCM (\cref{subfig:recursive}), and does not translate well to the other equivalent representations discussed in \cref{sec:designing} and \cref{app:design}. %
The reason for this is that these representations compute the observational values $\ervx$ as a vector function  of $\rvu$, without the iterative sampling process that goes through the intervened value that we replace. 
The only exception is the abductive model with knowledge on the causal graph and $L=1$ (see \cref{sec:designing}), as it corresponds to the exact same case of \cref{subfig:app-inverted}. If we have $L>1$ however, whether this implementation works or not comes down to the specific implementation to compute the inverse of the network.

\paragraph{Proposed implementation}
As discussed in \cref{sec:designing}, we instead propose to manipulate the SCM by modifying the exogenous distribution \distribution[\rvu], while keeping the causal generator $\funcxb$ untouched. 
Specifically, an intervention $\doop(\ervx_\intervention = \alpha)$ updates  \distribution[\rvu], to have positive density mass on only those values that, when transformed to endogenous variables, the intervened variable yields the intervened value, $\ervx_i = \alpha$, while keeping the rest of distributions unaltered.

In other words, we define the intervened SCM as $\intervene\scm = (\funcxb, \intervene{\distribution[\rvu]})$, where the density of the updated distribution $\intervene{\distribution[\rvu]}$ is of the form 
\begin{equation}
    \intervene{p}(\rvu) \propto p(\rvu) \cdot \delta_{\left\{\funcxb_\intervention(\vx, \rvu) = \alpha\right\}}(\rvu) \,,
\end{equation}
and where the distributions of the rest of variables remain the same.
Using the acyclic assumption, we know that the only way of altering the value of $\ervx_i$ without altering those of its parents is through $\ervu_i$ and, using the causal sufficiency assumption, we can squeeze the Dirac delta directly in the distribution of the \nth{i} exogenous variable, such that:
\begin{align}
    \intervene{p}(\rvu) = \intervene{p_\intervention}(\ervu_\intervention| \rvu_{j\neq\intervention}) \cdot \prod_{j\neq\intervention}  p_j(\ervu_j) \,, && \text{with} && \intervene{p_\intervention}(\ervu_\intervention| \rvu_{j\neq\intervention}) \propto p_\intervention(\ervu_\intervention) \cdot \delta_{\left\{\funcxb_\intervention(\vx, \rvu) = \alpha\right\}}(\rvu) \,. 
\end{align}
    
In the case we consider, where all the generators are bijective given the parent nodes, the set $\delta_{\left\{\funcxb_\intervention(\vx, \rvu) = \alpha\right\}}(\rvu)$ contains a single element, and therefore in the main paper we simply write the \nth{i} density as $\intervene{p_\intervention}(\ervu_\intervention| \rvu_{j\neq\intervention}) = \delta_{\left\{\funcxb_\intervention(\vx, \rvu) = \alpha\right\}}(\rvu)$. Note, as discussed in the main paper, that the density at this point should be positive, in other words, the element that yields $\alpha$ (and therefore $\alpha$) should be a plausible value.

Since this implementation does not make any assumption at all in the functional form of the generator, but directly works on the distribution of the exogenous variables, it can be implemented on any SCM representation (see \cref{fig:app-example-1} in \cref{app:illustrative-example}) and, hence, on any of the architectures for the causal NF in~\cref{sec:designing}. 
Notice, however, that in order to properly work, that the data-generative process should be causally consistent \replace{1}{(\ie, it isolates $\rvu$)}{(so that changes are properly propagated), and that it should properly isolate $\rvu$ (so that $\ervu_i$ accounts only for the stochasticity of $\ervx_i$ not explained by its parents)}\remove{1}{ and causal path-preserving (\ie, without shortcuts) with respect to the original SCM}.

\paragraph{Algorithms} The step-by-step algorithms to perform interventions and compute counterfactuals, using the described algorithm, are presented in \cref{alg:intervention} and \cref{alg:counterfactual}, respectively. It follows the same description as the one given at the end of \cref{sec:interventions}, and the only difference between both algorithms is the way that we obtain samples from the observed distribution (generated vs. given).

\begin{algorithm}[th]
    \caption{Algorithm to sample from the interventional distribution, $\distribution(\rvx \,|\, \doop(\ervx_\intervention = \alpha))$.} \label{alg:intervention}
    \begin{algorithmic}[1]
        \Function{SampleIntervenedDist}{$\intervention$, $\alpha$}
            \State $\rvu \sim \distribution[\rvu]$
            \State $\rvx \gets \flow^{-1}(\rvu)$ \Comment{Sample a value from the observational distribution.}
            \State $\ervx_\intervention \gets \alpha$ \Comment{Set $\ervx_\intervention$ to the intervened value $\alpha$.}
            \State $\ervu_\intervention \gets \flow(\rvx)_\intervention$ \Comment{Change the \nth{i} value of $\rvu$.}
            \State $\rvx \gets \flow^{-1}(\rvu)$
            \State \Return $\rvx$ \Comment{Return the intervened sample.}
        \EndFunction
    \end{algorithmic}
\end{algorithm}

\begin{algorithm}[th]
    \caption{Algorithm to sample from the counterfactual distribution, $\distribution(\rvx^\cfactual \,|\, \doop(\ervx_\intervention = \alpha), \rvx^\factual)$.} \label{alg:counterfactual}
    \begin{algorithmic}[1]
        \Function{GetCounterfactual}{$\rvx^\factual$, $\intervention$, $\alpha$}
            \State $\rvu \gets \flow(\rvx^\factual)$ \Comment{Get $\rvu$ from the factual sample.}
            \State $\ervx_\intervention^\factual \gets \alpha$ \Comment{Set $\ervx_\intervention$ to the intervened value $\alpha$.}
            \State $\ervu_\intervention \gets \flow(\rvx^\factual)_\intervention$ \Comment{Change the \nth{\intervention} value of $\rvu$.}
            \State $\rvx^\cfactual \gets \flow^{-1}(\rvu)$
            \State \Return $\rvx^\cfactual$ \Comment{Return the counterfactual value.}
        \EndFunction
    \end{algorithmic}
\end{algorithm}

\paragraph{Theoretical results} Here, we briefly discuss way this method works, \ie, why the proposed implementation removes every dependency from the descendants with respect to the ancestors that go through the intervened value, as it is not directly obvious.

To see why, take the usual recursive representation of an SCM in the illustrative example from \cref{app:illustrative-example} (\cref{subfig:app-recursive}), and assume that we do $\doop(\ervx_2 = \alpha)$, where $\ervx_2 = 2\ervx_1 + \ervu_2$ in this example. 
By updating the density $p_2(\ervu_2)$, we have basically fixed the value of $\ervu_2$ to be the only one that keeps $\ervx_2 = \alpha$ given $\ervx_1$, \ie, $\ervu_2 = \alpha - 2\ervx_1$ (this can be clearly seen in \cref{subfig:app-inverted}). 
If we now compute the dependency of $\ervx_3$ on $\ervx_1$, we get
\begin{equation}
    \frac{\d \ervx_3}{\d \ervx_1} = \frac{\partial \ervx_3}{\partial \ervx_2} \frac{\d \ervx_2}{\d \ervx_1} = \frac{\partial \ervx_3}{\partial \ervx_2} \left( \frac{\partial \ervx_2}{\partial \ervx_1} + \frac{\partial \ervx_2}{\partial \ervu_2} \frac{\d \ervu_2}{\d \ervx_1} \right) = \frac{\partial \ervx_3}{\partial \ervx_2} \left( 2 + 1 \cdot (-2)\right) = 0 \,. \label{eq:infl-example}
\end{equation}
In layman's terms, the value of $\ervu_2$ is chosen such that it fixes the value of $\alpha$, countering any influence that the parents could have on $\ervx_2$ (or any of its intermediate values), and consequently in any of its descendants.

The general case can be similarly proven.
Suppose that we do $\doop(\ervx_2 = \alpha)$, and that we want to compute the indirect influence of an ancestor, $\ervx_1$, on a descendant, $\ervx_3$, passing through $\ervx_2$.
Since we are fixing the value of $\ervu_2$ (the input of the network) to produce an observed value $\ervx_2$ (the output of the network) of $\alpha$, we can use implicit differentiation to compute the influence of $\ervu_1$ (and therefore $\ervx_1$) on $\ervx_2$ via $\ervu_2$:
\begin{equation}
    \alpha = \ervx_2({\ervu_1}, \ervu_2) \xRightarrow{\d \ervu_1} 0 = \frac{\partial \ervx_2}{\partial \ervu_1} + \frac{\partial \ervx_2}{\partial \ervu_2} \frac{\d \ervu_2}{\d \ervu_1} \Rightarrow \frac{\partial \ervx_2}{\partial \ervu_2} \frac{\d \ervu_2}{\d \ervu_1} = -\frac{\partial \ervx_2}{\partial \ervu_1} \,,
\end{equation}
and, similar to \cref{eq:infl-example}, any \emph{indirect} influence of the ancestor, $\ervu_1$, on the descendant, $\ervx_3$, through this intermediate variable, $\ervx_2$, cancels out:
\begin{equation}
    \frac{\partial \ervx_3}{\partial \ervx_2} \frac{\d \ervx_2}{\d \ervu_1} = \frac{\partial \ervx_3}{\partial \ervx_2} \left( \frac{\partial \ervx_2}{\partial \ervu_1} + \frac{\partial \ervx_2}{\partial \ervu_2} \frac{\d \ervu_2}{\d \ervu_1} \right) = \frac{\partial \ervx_3}{\partial \ervx_2} \cdot 0 = 0 \,.
\end{equation}

We have proven this result not only theoretically, but also empirically through the accurate interventions computed for all the experiments from \cref{sec:experiments} and \cref{app:extra_results}. 
Moreover, this lack of correlation between ancestors and descendants through the intervened variables is clearly shown in pair plots such as the ones in \cref{figapp:pair_plot_chain5,figapp:pair_plot_simpson}.

\subsection{Interventions in previous works}

We now put our implementation of the do-operator (see \cref{sec:interventions} and \cref{app:do_operator}) into context, by describing how the methods compared in \cref{sec:experiments}, namely CAREFL~\citep{pmlr-v130-khemakhem21a} and VACA~\citep{SnchezMartn2021VACA}, proposed to perform interventions with their models.

\paragraph{CAREFL~\citep{pmlr-v130-khemakhem21a}} Two different algorithms were proposed to sample from an interventional distribution in CAREFL: i)~a sequential algorithm which mimics the usual implementation of the do-operator with the recursive representation of the SCM; and ii)~a parallel algorithm that samples the counterfactual in a single call.
While the first algorithm works, the parallel one---which is the one actually implemented---only works when intervening on root nodes.

This second algorithm for $\doop(\ervx_2 = \alpha)$ is described as follows (see Alg. 2 in \citep{pmlr-v130-khemakhem21a}): i)~sample $\rvu$ from \distribution[\rvu]; ii)~set $\ervu_i$ to the \nth{i} value obtained by applying the flow, \flow, to an observation with $\ervx_i = \alpha$ and $\ervx_j = 0$ for $j\neq i$; and iii)~return the value obtained by $\flow^{-1}$ with $\rvu$.

While this algorithm resembles the one we proposed, the proposed method does not have into account that the value of $\ervu_i$ to fix $\alpha$ \emph{does depend} on the observed values of its parents, which is clear by looking at the linear illustrative example from \cref{subfig:app-inverted}. 
As a consequence, the algorithm only works when the node has no parents, which is why we replaced it by the one we proposed for the comparisons in \cref{sec:experiments} and \cref{app:extra_results}.

\paragraph{VACA~\citep{SnchezMartn2021VACA}} Based on GNNs, the approach for intervening on VACA is completely reminiscent to the traditional implementation. Specifically, the authors propose to sever those edges in every layer of the GNN whose endpoints fall in the path generating the intervened variable, so that the ancestors have no way to influence it by design.

While the previous statement is true: ancestors cannot influence the intervened variable nor its descendants, here we argue that this process would require us to ``recalibrate'' the model, as the middle computations after an intervention change in more complex ways than removing the ancestors from the equation, while keeping the rest unchanged.

To see this, consider the following non-linear triangle SCM:
\begin{equation}
    \begin{cases}
        \ervx_1 = \ervu_1 \\
        \ervx_2 = \ervx_1^2 \ervu_2 \\
        \ervx_3 = 2\ervx_1 + \frac{\ervx_2}{\ervx_1} + \frac{\ervx_2}{\ervx_1^2} + \ervu_3
    \end{cases}\,
\end{equation}
which VACA could learn with two layers through the following operations:
\begin{align}
    \begin{cases}
        \ervz_1 = \ervu_1 \\
        \ervz_2 = \ervu_1 \ervu_2 \\
        \ervz_3 = \ervu_1 + \ervu_2 + \ervu_3
    \end{cases} && %
    \begin{cases}
        \ervx_1 = \ervz_1 \\
        \ervx_2 = \ervz_1\ervz_2 \\
        \ervx_3 = \ervz_1 + \ervz_2 + \ervz_3
    \end{cases} \,.
\end{align}

Now, if we were to intervene with $\doop(\ervx_2 = \alpha)$, the real SCM would yield:
\begin{equation}
    \begin{cases}
        \ervx_1 = \ervu_1 \\
        \ervx_2 = \alpha \\
        \ervx_3 = 2\ervx_1 + \frac{\alpha}{\ervx_1} + \frac{\alpha}{\ervx_1^2} + \ervu_3
    \end{cases}\,
\end{equation}
while VACA would yield:
\begin{align}
    \begin{cases}
        \ervz_1 = \ervu_1 \\
        \ervz_2 = \alpha \\
        \ervz_3 = \ervu_1 + \alpha + \ervu_3
    \end{cases} && %
    \begin{cases}
        \ervx_1 = \ervz_1 \\
        \ervx_2 = \alpha \\
        \ervx_3 = \ervz_1 + \alpha + \ervz_3
    \end{cases} \Rightarrow 
    \begin{cases}
        \ervx_1 = \ervu_1 \\
        \ervx_2 = \alpha \\
        \ervx_3 = 2\ervx_1 + 2\alpha + \ervu_3
    \end{cases} \,,
\end{align}
where we can clearly see that the expression for $\ervx_3$ is different. 
In contrast, our causal NF would keep the generator as it is, and set $\ervu_2$ to $\alpha/\ervx_1^2$, yielding the correct value.

    \clearpage
\newpage
\section{Experimental details and extra results}
\label{app:extra_results}

In this section, we complement the description of the experimental section from \cref{sec:experiments}, and provide the reader with additional results in the following subsections. 
First, we describe the details common to every experiment, and delve into the specifics of each experiment in their respective subsections.

\paragraph{Hardware}
Every individual experiment shown in this paper ran on a single CPU with \SI{8}{\giga\byte} of RAM.
To run all experiments, we used a local computing cluster with an automatic job assignment system, so we cannot ensure the specific CPU used for each particular experiment. However, we know that every experiment used one of the following CPUs picked randomly given the demand when scheduled: AMD EPYC 7702 64-Core Processor, AMD EPYC 7662 64-Core Processor, Intel(R) Xeon(R) CPU E5-2698 v4 @ 2.20GHz, or Intel(R) Xeon(R) CPU E5-2690 v4 @ 2.60GHz.

\paragraph{Training and evaluation methodology}

For every experiment, we generated with using synthetic SCM a dataset with \num{20000} training samples, \num{2500} validation samples, and \num{2500} test samples. 
We ran every model for \num{1000} epochs, and the results shown in the manuscript correspond to the test set evaluation at the last epoch. 
For the optimization, we used Adam~\citep{Kingma2014AdamAM} with an initial learning rate of \num{0.001}, and reduce the learning rate with a decay factor of \num{0.95} when it reaches a plateau longer than \num{60} epochs. 
For hyperparameter tuning, we always perform a grid search with similar budget, and select the best hyperparameter combination according to validation loss, reporting always results from the test dataset in the manuscript. 
Every experiment is repeated \num{5} times, and we show averages and standard deviations.

\paragraph{Datasets}

This section provides all the information of the SCMs employed in the empirical evaluation of \cref{sec:experiments} of the main paper, and the following subsections. 
The exogenous variables always follow a standard normal distribution $\mathcal{N} (0, 1)$, except for \largebd, where a uniform distribution $\mathcal{U} (0, 1)$ is used instead.
Subsequently, we define the \num{12} SCMs employed---encompassing both linear and non-linear equations---and we additionally provide their causal graph in \cref{figapp:causal_graphs}. 

Let us first define the softplus operation as $s(x) = \log \left( 1.0  + e^x \right)$.

\textbf{\chainthree[\lin]}:
\begin{align}
    \funcx[1](\ervu_1) &=  \ervu_1 \\
    \funcx[2](\ervx_1, \ervu_2) &=  10 \cdot \ervx_1 - \ervu_2  \\
     \funcx[3](\ervx_2, \ervu_3) &= 0.25 \cdot \ervx_2 + 2 \cdot  \ervu_3
\end{align}

\textbf{\chainthree[\nonlinear]}:
\begin{align}
    \funcx[1](\ervu_1) &=  \ervu_1 \\
    \funcx[2](\ervx_1, \ervu_2) &=  e^{ \ervx_1/2}  +  \ervu_2/4  \\
     \funcx[3](\ervx_2, \ervu_3) &= \frac{(\ervx_2 - 5)^3}{15}  + \ervu_3
\end{align}

\textbf{\chainfour[\lin]}:
\begin{align}
    \funcx[1](\ervu_1) &=  \ervu_1 \\
    \funcx[2](\ervx_1, \ervu_2) &= 5 \cdot \ervx_1 - \ervu_2 \\
    \funcx[3](\ervx_2, \ervu_3) &= -0.5 \cdot \ervx_2  - 1.5 \cdot \ervu_3 \\
    \funcx[4](\ervx_3, \ervu_4) &=   \ervx_3 +  \ervu_4
\end{align}

\textbf{\chainfive [\lin]}:
\begin{align}
    \funcx[1](\ervu_1) &=  \ervu_1 \\
    \funcx[2](\ervx_1, \ervu_2) &= 10 \cdot \ervx_1 - \ervu_2 \\
    \funcx[3](\ervx_2, \ervu_3) &= 0.25 \cdot \ervx_2  + 2 \cdot \ervu_3 \\
    \funcx[4](\ervx_3, \ervu_4) &=   \ervx_3 +  \ervu_4 \\
    \funcx[5](\ervx_4, \ervu_5) &=   -\ervx_4 +  \ervu_5
\end{align}

\textbf{\collider [\lin]}:
\begin{align}
    \funcx[1](\ervu_1) &=  \ervu_1 \\
    \funcx[2](\ervu_2) &=2 -  \ervu_2  \\
     \funcx[3](\ervx_1, \ervx_2, \ervu_3) &= 0.25 \cdot \ervx_2 - 0.5 \cdot \ervx_1 + 0.5 \cdot \ervu_3
\end{align}

\textbf{\fork [\lin]}:
\begin{align}
    \funcx[1](\ervu_1) &=  \ervu_1 \\
    \funcx[2](\ervu_2) &= 2 - \ervu_2 \\
    \funcx[3](\ervx_1, \ervx_2, \ervu_3) &= 0.25 \cdot \ervx_2 - 1.5 \cdot \ervx_1 + 0.5 \cdot \ervu_3\\
    \funcx[4](\ervx_3, \ervu_4) &=   \ervx_3 +  0.25 \cdot \ervu_4 
\end{align}

\textbf{\fork [\nonlinear]}:
\begin{align}
    \funcx[1](\ervu_1) &=  \ervu_1 \\
    \funcx[2](\ervu_2) &= \ervu_2 \\
    \funcx[3](\ervx_1, \ervx_2, \ervu_3) &=  \frac{4}{1 + e^{-\ervx_1 - \ervx_2}} - \ervx_2^2 +  0.5 \cdot \ervu_3\\
    \funcx[4](\ervx_3, \ervu_4) &=   \frac{20}{1 + e^{0.5\cdot\ervx_3^2 - \ervx_3 }} + \ervu_4 
\end{align}

\textbf{\largebd [\nonlinear]}:  Let us define
\begin{align}
\layer(x, y) &= \softpls(x + 1) + \softpls(0.5 + y) - 3.0 \,,
\end{align}
and let us call $\cdfinv(\mu, b, x)$ the quantile function of a Laplace distribution with location $\mu$, scale $b$, evaluated at $x$.
Then the structural equations are
\begin{align}
    \funcx[1](\ervu_1) &=   \softpls(1.8 \cdot \ervu_1) - 1 \\
    \funcx[2](\ervx_1, \ervu_2) &=   0.25 \cdot \ervu_2 + \layer(\ervx_1, 0) \cdot 1.5 \\
    \funcx[3](\ervx_1, \ervu_3) &=  \layer(\ervx_1, \ervu_3) \\
    \funcx[4](\ervx_2, \ervu_4) &=  \layer(\ervx_2, \ervu_4) \\
    \funcx[5](\ervx_3, \ervu_5) &=   \layer(\ervx_3, \ervu_5) \\
    \funcx[6](\ervx_4, \ervu_6) &=  \layer(\ervx_4, \ervu_6) \\
    \funcx[7](\ervx_5, \ervu_7) &=   \layer(\ervx_5, \ervu_7) \\
    \funcx[8](\ervx_6, \ervu_8) &=  0.3 \cdot \ervu_8 + (\softpls(\ervx_6 + 1) - 1) \\
    \funcx[9](\ervx_7,\ervx_8, \ervu_9) &=  \cdfinv\left(-\softpls\left(\frac{\ervx_7 \cdot 1.3 + \ervx_8}{3} + 1\right) + 2, 0.6, \ervu_9\right)
\end{align}

\textbf{\simpson[\nonlinear]}: 
\begin{align}
    \funcx[1](\ervu_1) &=  \ervu_1 \\
    \funcx[2](\ervx_1, \ervu_2) &=s(1 - \ervx_1) + \sqrt{3/20} \cdot\ervu_2\\
    \funcx[3](\ervx_1, \ervx_2, \ervu_3) &= \tanhfn{2\cdot\ervx_2} + 1.5 \cdot \ervx_1 -1 + \tanhfn{\ervu_3} \\
    \funcx[4](\ervx_3, \ervu_4) &=  \frac{\ervx_3 - 4}{5} + 3 + \frac{1}{\sqrt{10}} \cdot \ervu_4 
\end{align}

\textbf{\simpson[\symprod]}:
\begin{align}
    \funcx[1](\ervu_1) &=  \ervu_1 \\
    \funcx[2](\ervx_1, \ervu_2) &=2 \cdot \tanhfn{2\cdot\ervx_1} +  \frac{1}{\sqrt{10}}  \cdot\ervu_2\\
    \funcx[3](\ervx_1, \ervx_2, \ervu_3) &= 0.5 \cdot \ervx_1 \cdot \ervx_2 + \frac{1}{\sqrt{2}}  \cdot  \ervu_3 \\
    \funcx[4](\ervx_1, \ervu_4) &=  \tanhfn{1.5\cdot\ervx_1} + \sqrt{\frac{3}{10}} \cdot \ervu_4 
\end{align}

\textbf{\triangl[\lin]}: 
\begin{align}
    \funcx[1](\ervu_1) &=  \ervu_1 + 1 \\
    \funcx[2](\ervx_1, \ervu_2) &=  10 \cdot \ervx_1  -  \ervu_2  \\
     \funcx[3](\ervx_1, \ervx_2, \ervu_3) &=  0.5 \cdot \ervx_2 + \ervx_1  + \ervu_3
\end{align}

\textbf{\triangl[\nonlinear]}:
\begin{align}
    \funcx[1](\ervu_1) &=  \ervu_1 + 1 \\
    \funcx[2](\ervx_1, \ervu_2) &=  2 \cdot \ervx_1^2  +  \ervu_2  \\
     \funcx[3](\ervx_1, \ervx_2, \ervu_3) &=  \frac{20}{1 + e^{-\ervx_2^2 + \ervx_1}} + \ervu_3
\end{align}

\begin{figure}[h]
  \centering
  \begin{subfigure}[b]{0.32\textwidth}
  \centering
     \includegraphics[width=\textwidth]{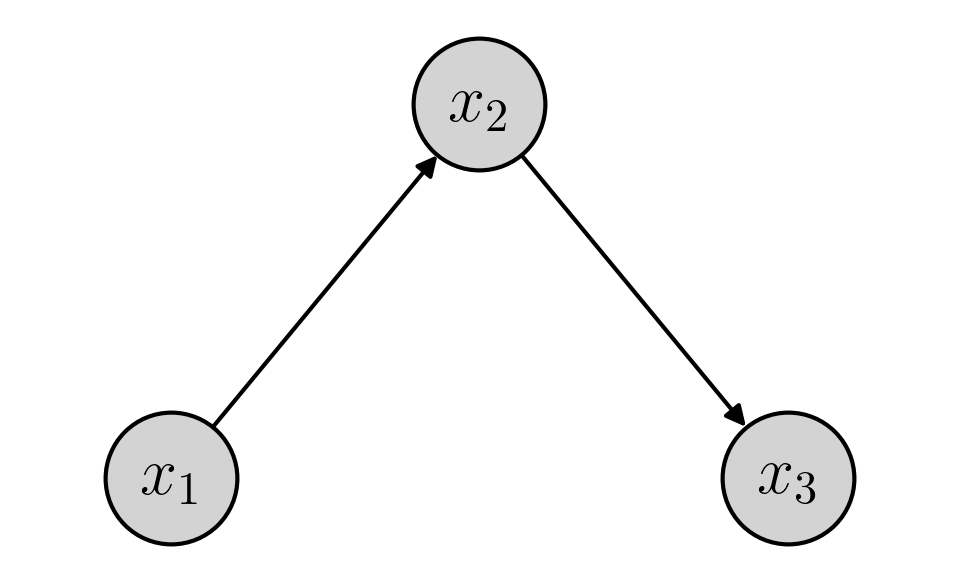}
     \caption{\chainthree}
  \end{subfigure}
  \begin{subfigure}[b]{0.32\textwidth}
  \centering
     \includegraphics[width=\textwidth]{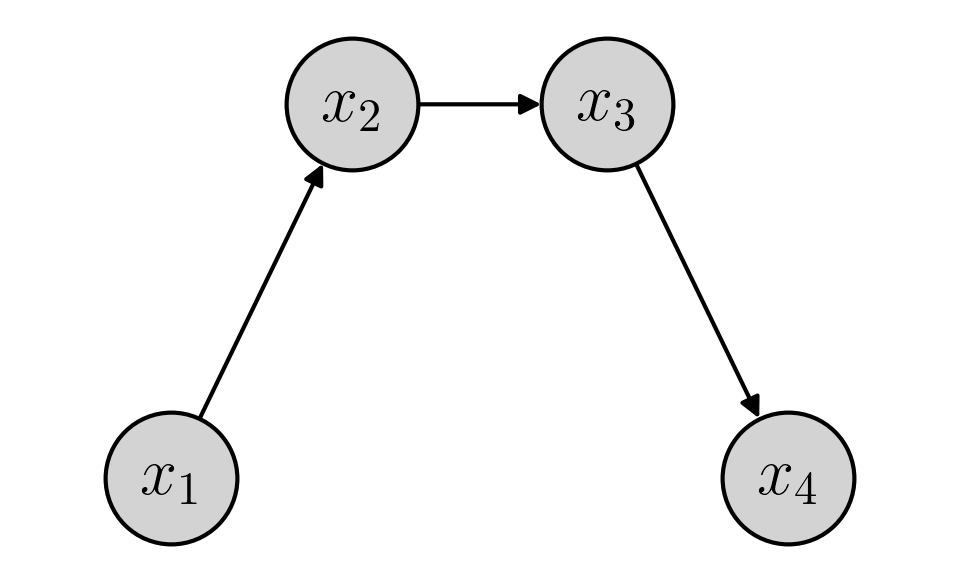}
     \caption{\chainfour}
  \end{subfigure}
   \begin{subfigure}[b]{0.32\textwidth}
  \centering
     \includegraphics[width=\textwidth]{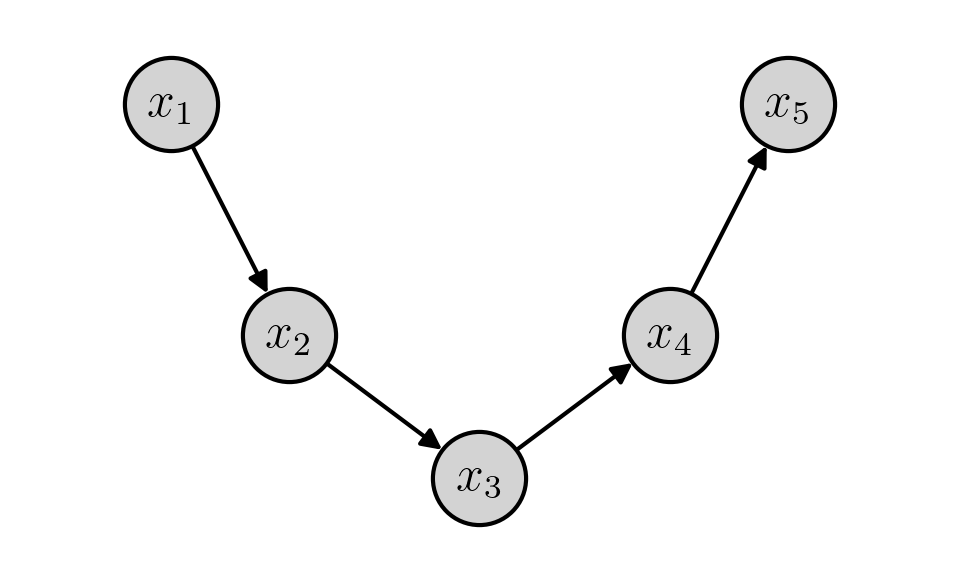}
     \caption{\chainfive}
  \end{subfigure}
  \\
  \begin{subfigure}[b]{0.32\textwidth}
  \centering
     \includegraphics[width=\textwidth]{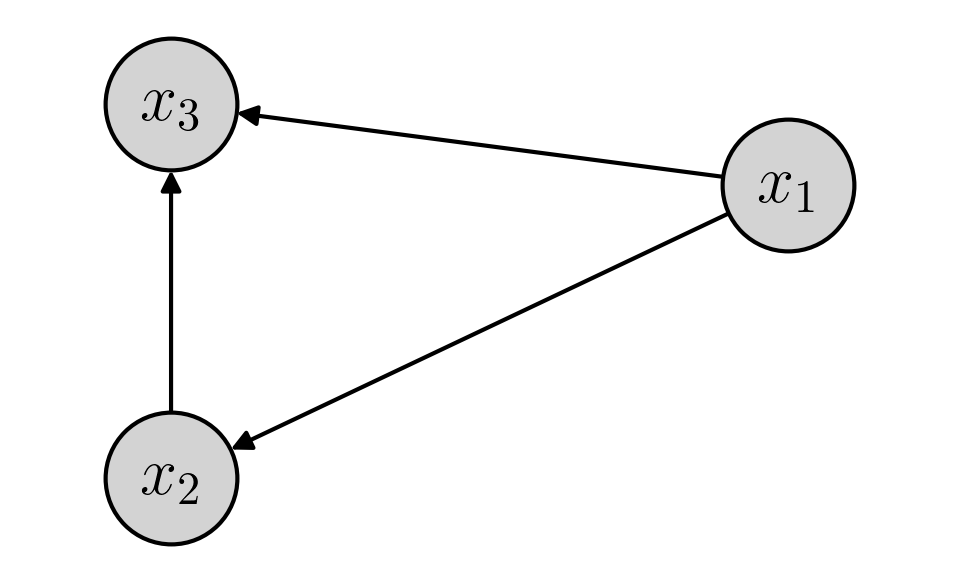}
     \caption{\triangl}
  \end{subfigure}
  \begin{subfigure}[b]{0.32\textwidth}
  \centering
     \includegraphics[width=\textwidth]{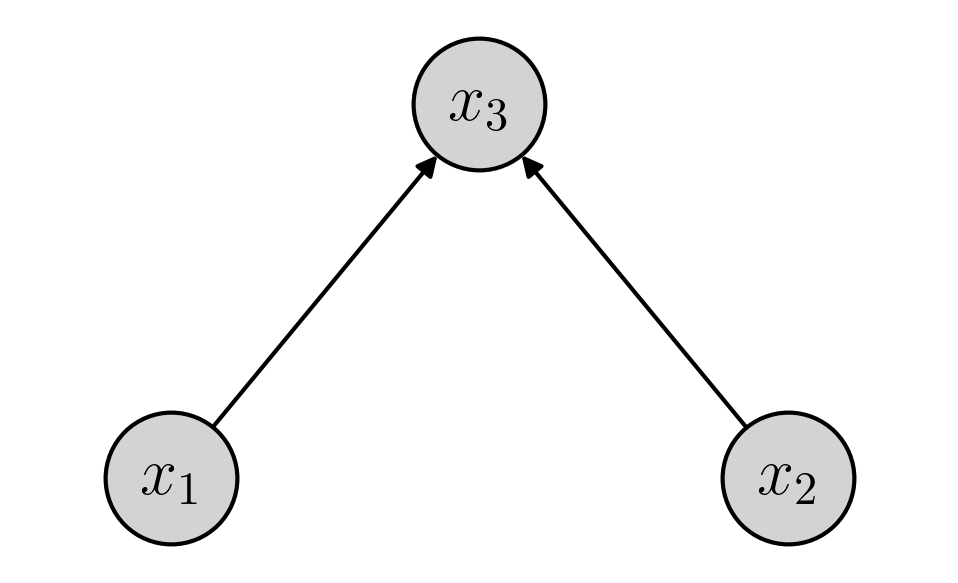}
     \caption{\collider}
  \end{subfigure}
   \begin{subfigure}[b]{0.32\textwidth}
  \centering
     \includegraphics[width=\textwidth]{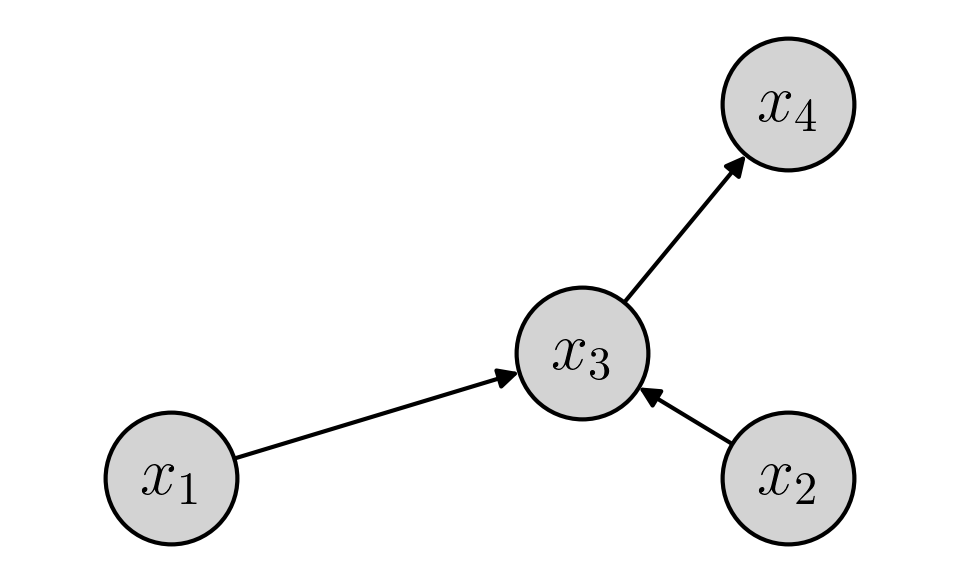}
     \caption{\fork}
  \end{subfigure}
  \\
  \begin{subfigure}[b]{0.32\textwidth}
  \centering
     \includegraphics[width=\textwidth]{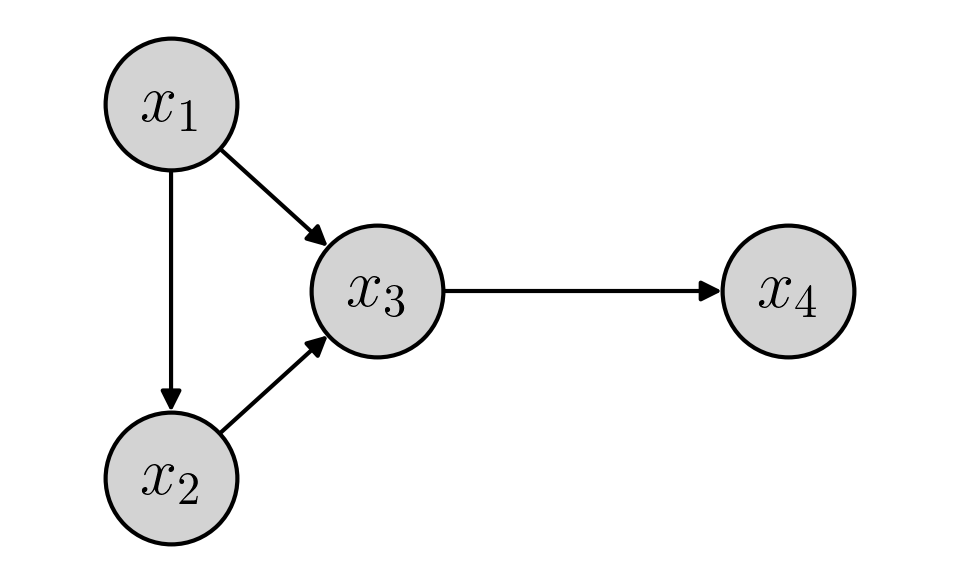}
     \caption{\simpson\ [\nonlinear]}
  \end{subfigure}
  \begin{subfigure}[b]{0.32\textwidth}
  \centering
     \includegraphics[width=\textwidth]{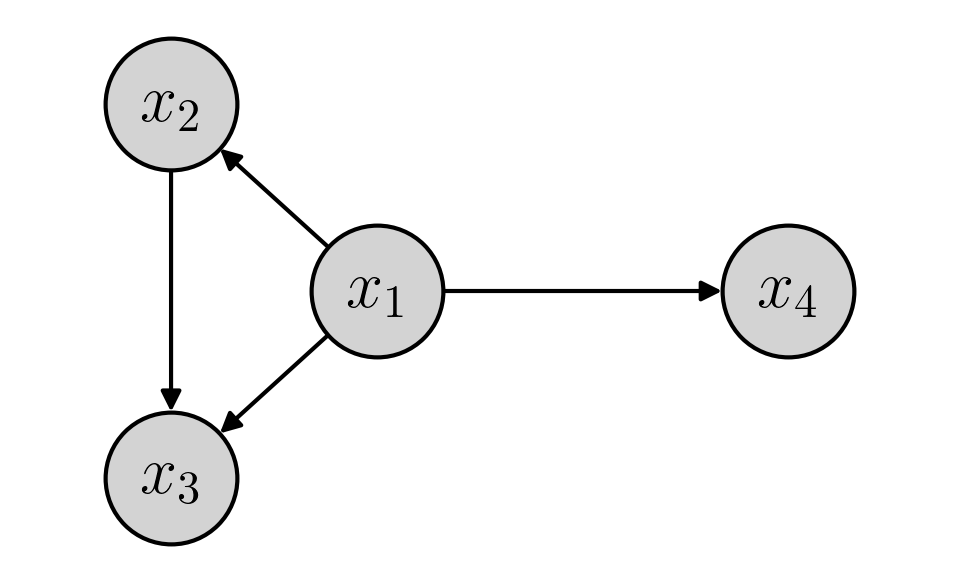}
     \caption{\simpson\ [\symprod]}
  \end{subfigure}
   \begin{subfigure}[b]{0.32\textwidth}
  \centering
     \includegraphics[width=\textwidth]{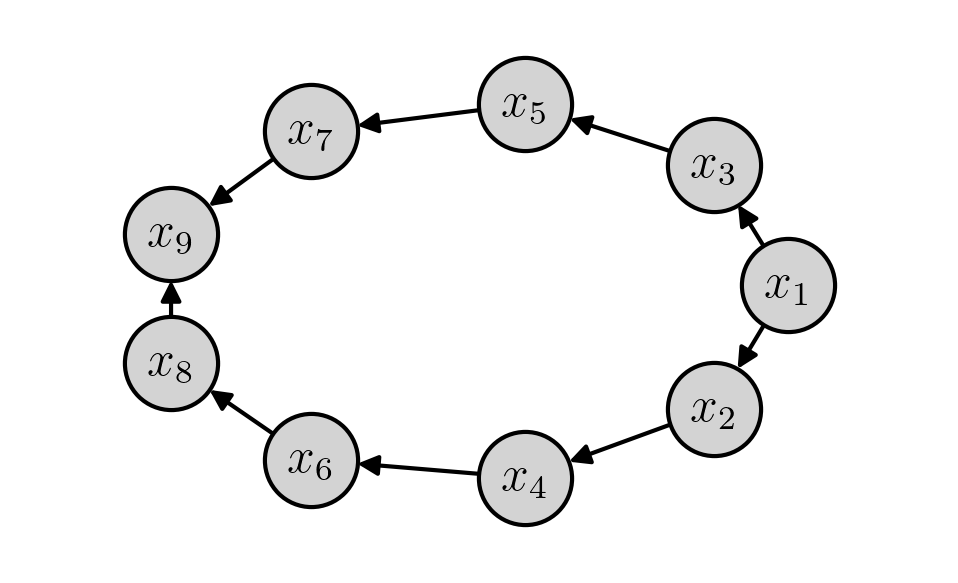}
     \caption{\largebd}
  \end{subfigure}
    \caption{Causal graph of the different SCMs considered in \cref{sec:experiments,app:extra_results}.}
    \label{figapp:causal_graphs}
\end{figure}

\subsection{Ablation: Time complexity}
\label{app:time_complexity}

\begin{figure}
    \centering
     \includegraphics[width=0.37\textwidth]{figures/ablation_legend_style.pdf}  
    \includegraphics[width=0.62\textwidth]{figures/ablation_legend_color.pdf}   \\
    \includegraphics[width=0.4\textwidth,keepaspectratio]{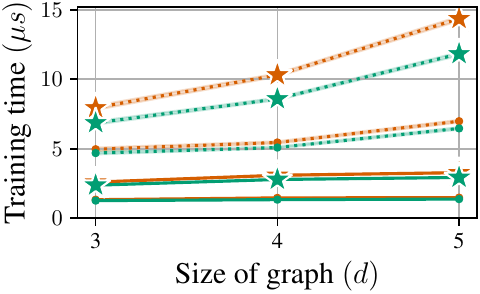}
    \caption{Time complexity comparison between the different design choices discussed in \cref{sec:designing}.} %
    \label{figapp:time_complexity}
\end{figure}

As the first additional ablation study, we evaluate the time complexity of the design choices introduced in \cref{sec:designing}. \Cref{figapp:time_complexity} summarizes the results, where the x-axis represents the number of nodes in the dataset, $\sizex$, and the y-axis indicates the time (in microseconds) required for a single forward pass of the normalizing flow during training.
We can find several interesting insights:

\paragraph{Results}
First, abductive models ($\abddir$, solid lines) train significantly faster compared to generative models ($\gendir$, dotted lines). %
This emphasizes the computational cost associated with inverting autoregressive flows during training. %
Furthermore, it is worth noting that the time complexity of abductive models remains constant regardless of the number of nodes $d$, whereas the complexity of generative models increases linearly with $d$, as indicated in the right-most column of \cref{tab:summary-models}.
Note that we should see the exact opposite behaviour when sampling.

Second, the inclusion of Jacobian regularization, represented by a star marker, introduces a significant time overhead. This observation is clearly depicted in \cref{figapp:time_complexity}, where plot lines with the star marker are consistently above their counterparts.

Finally, leveraging causal graph information or relying solely on ordering (represented in orange and green, respectively) has minimal impact on computational time. This is once again clear in \cref{figapp:time_complexity}, as green and orange pairs largely coincide. %

\subsection{Ablation: Base distribution}
\label{secapp:base}

We now assess to which extent a mismatch of the distribution \distribution[\rvu] between the SCM and the causal NF negatively affects performance. 
To this end, we consider two more complex SCMs---\simpson~\citep{Geffner2022DeepEC} and \triangl~\citep{SnchezMartn2021VACA}---and distributions---Normal and Laplace---for which we either fix or learn their parameters during training. Both SCMs use a standard Normal distribution for \distribution[\rvu].

\paragraph{Hyperparameter tuning}  
While we fixed the flow to have a single MAF~\citep{NIPS2017_6c1da886} layer with ELU~\citep{Clevert2015FastAA} activation functions, we determined through cross-validation the optimal number of layers and hidden units of the MLP network within MAF. Specifically, we considered the following combinations ($[a, b]$ represents two layers with $a$ and $b$ hidden units): ${[16, 16, 16, 16], [32, 32, 32], [16, 16, 16], [32, 32], [32], [64]}$. 
As discussed at the start of the section, we report test results for the configuration with the best validation performance at the last epoch.

\paragraph{Results}
The results, shown in \cref{fig:base_distr}, reveal a notable distinction between Normal and Laplace distributions in terms of density estimation.
However, this discrepancy appears to have minimal implications for Average Treatment Effect (ATE) and counterfactual estimation.
We hypothesize that this disparity originates from dissimilarities in their tails, as it can be inferred by the slight edge of Normal over Laplace on the last column---which measures \emph{per sample} differences---where bigger errors happen at the outliers elements which are, by definition, scarce. 
Interestingly, with this particular architecture of causal NF, every model struggles to model the denser \triangl SCM.

\begin{figure}[h]
  \centering
  \begin{subfigure}[b]{0.99\textwidth}
  \centering
    \includegraphics[width=0.5\textwidth]{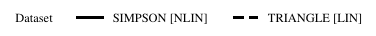}
  \end{subfigure}
  \\
  \begin{subfigure}[b]{0.49\textwidth}
  \centering
    \includegraphics[width=\textwidth]{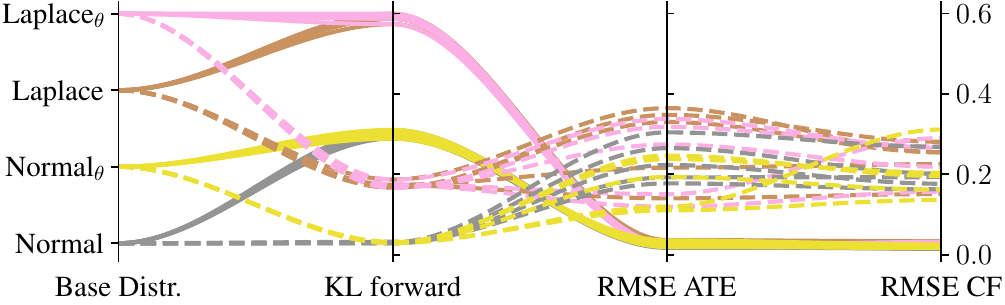}
    \caption{Ablation on the base distribution}
    \label{fig:base_distr}
  \end{subfigure}
  \begin{subfigure}[b]{0.49\textwidth}
  \centering
    \includegraphics[width=\textwidth]{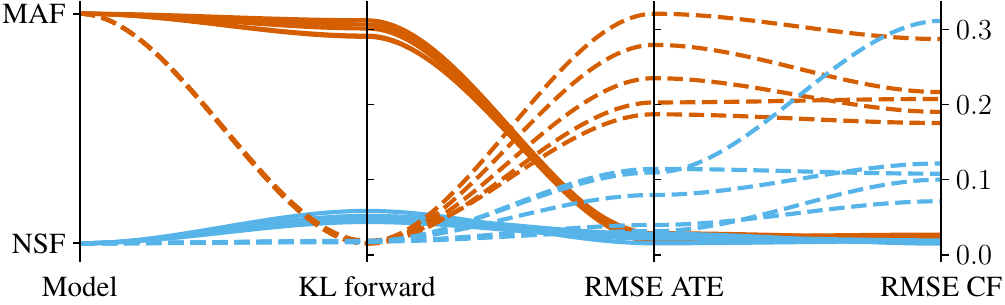}
    \caption{Ablation on the flow architecture}
    \label{figapp:maf_vs_nsf}
  \end{subfigure} 
    \caption{Performance on the \simpson[NLIN] and \triangl[LIN] datasets of causal NFs with a)~different base distributions (Normal and Laplace), where $\thetab$ indicates that we learn the parameters of the base distribution; and  b)~flow architectures. Differences in base distribution affect KL divergence, while the choice of flow architecture influences the overall performance.}
    \label{fig:ablation-bases}
\end{figure}

\subsection{Ablation: Flow architecture}

Considering the observed challenges faced by the Masked Autoregressive Flow (MAF) \citep{NIPS2017_6c1da886} layer in accurately modelling the \triangl SCM in the previous experiment, we further investigate the potential impact of flow architecture on performance.

\paragraph{Hyperparameter Tuning}
We cross-validate again the optimal number of layers and hidden units of the MLP internally used by the unique layer of the \ours. We consider the following values  ($[a, b]$ represents two layers with $a$ and $b$ hidden units): $[32, 32, 32]$, $[16, 16, 16]$, $[32, 32]$, $[32]$, and $[64]$. 
As before, test results are reported for the configuration that achieved the best performance on the validation set at the final epoch.

\paragraph{Results}
\Cref{figapp:maf_vs_nsf} summarizes our results, where we consider a \ours with one MAF~\citep{NIPS2017_6c1da886} layer (depicted in orange), and a \ours with a Neural Spline Flow~(NSF) \citep{durkan2019neural} as layer (depicted in blue). 
Note that, NSFs are built on top of MAFs and abandoned the realm of affine ANFs, and are thus expected to outperform MAFs in general.
We employed the same set of SCMs as in \cref{secapp:base}. %

Our empirical analysis reveals that the NSF consistently outperforms the MAF across the three metrics: observational distribution (measured by the KL divergence), ATE estimation, and counterfactual estimation. 
Whilst expected, these findings highlight the practical implications of selecting an appropriate flow architecture, which should be taken into consideration by practitioners.

\subsection{Comparison: Extra non-linear SCMs}

In this section, we complement the results from \cref{subsec:comparisons} and provide a more extensive comparison of the proposed \ours, along with CAREFL~\citep{pmlr-v130-khemakhem21a} and VACA~\citep{SnchezMartn2021VACA}, on additional datasets.

\paragraph{Hyperparameter Tuning}
For VACA, we cross-validated the dropout rate with values $\{0.0, 0.1\}$, the GNN layer architecture with $\{\text{GIN}, \text{PNA}, \text{PNADisjoint}\}$, (see \citep{SnchezMartn2021VACA} for details), and the number of layers in the MLP prior to the GNN with choices $\{1, 2\}$.
For CAREFL, we cross-validated the number of layers in the flow, $\{1, \diam{\graph}\}$, and the number of layers and hidden units in the MLP composing the flow layers (same format as before), $\{ [ 16, 16, 16 ], [ 32, 32 ], [ 32 ], [ 64 ] \}$.
For \ours, we used the abductive model with a single layer, and cross-validated the number of layers and hidden units in the MLP composing the layer of the flow  with values $\{[ 16, 16, 16, 16 ], [ 32, 32, 32 ], [ 16, 16, 16 ], [ 32, 32 ], [ 32 ], [ 64 ]\}$.
We report test results for the configuration with the best validation performance at the final epoch.

\paragraph{Results}
\cref{tabapp:model_performance} shows the performance of each model for all the considered datasets, further validating the conclusions drawn in the main manuscript: the proposed \ours consistently outperforms both CAREFL and VACA in terms of performance and computational efficiency.
The performance of VACA is notably inferior, and its computation time is significantly longer, primarily due to the complexity of graph neural networks (GNNs).
Our \ours achieves similar performance to CAREFL in terms of observational fitting, while surpassing it on interventional and counterfactual estimation tasks.
Additionally, \ours\ outperforms CAREFL in computational speed. This is to be expected since the optimal CAREFL architectures often have multiple layers, resulting in increased computation time. In contrast, \ours\ has a single layer, reducing computational complexity.

\cref{figapp:pair_plot_simpson} qualitative proofs the effectiveness of the proposed \ours\ in accurately modelling both observational and interventional distributions for the \simpson[nlin] dataset.
In this plot, blue represent the real distribution/samples, while orange represents the ones generated by \ours. 
\cref{figapp:pair_plot_simpson_obs} clearly shows that the model successfully captured the correlations among all variables in the observational distribution. 
Furthermore, \cref{figapp:pair_plot_simpson_int} displays the interventional distribution obtained when we do $\doop(\ervx_3 = -1.09)$, \ie, when we intervene on the \nth{25} empirical percentile of $\ervx_3$. 
Remarkably, \ours accurately learns the distribution of descendant variables, \ie, $\ervx_4$, and effectively breaks any dependency between the ancestors of the intervened variable and $\ervx_4$.
Additionally, \cref{figapp:pair_plot_chain5} shows a similar analysis for \chainfive[lin], when we perform $\doop(\ervx_3 = 2.18)$---which corresponds to intervening on the \nth{75} percentile of $\ervx_3$---clearly showing that the correlations not involving the intervened path ($\ervx_1\rightarrow\ervx_2$ and $\ervx_4\rightarrow\ervx_5$) are preserved.

\begin{table}[t]
    \setlength{\tabcolsep}{8pt} %
    \centering
    \caption{Comparison, on different SCMs, of the proposed Causal NF, VACA~\citep{SnchezMartn2021VACA}, and CAREFL~\citep{pmlr-v130-khemakhem21a} with the do-operator proposed in \cref{sec:interventions}. Results averaged over five runs.} 
    \label{tabapp:model_performance}
    \resizebox{\textwidth}{!}{
        \begin{tabular}{cl RRR RRR } \toprule
        & & \multicolumn{3}{c}{\text{Performance}} & \multicolumn{3}{c}{\text{Time Evaluation (\si{\microsecond})}} \\ \cmidrule(lr){3-5} \cmidrule(lr){6-8}
        Dataset & Model & \multicolumn{1}{c}{$\KLoperator$} & \multicolumn{1}{c}{ATE$_{\text{RMSE}}$} & \multicolumn{1}{c}{CF$_{\text{RMSE}}$} & \multicolumn{1}{c}{Training} & \multicolumn{1}{c}{Evaluation} & \multicolumn{1}{c}{Sampling} \\
        \midrule
\multirow{3}{*}{\shortstack{\chainthree \\ \textsc{lin} \\ \citep{SnchezMartn2021VACA} }}& \ours & \mathbf{ 0.00_{0.00}} & \mathbf{ 0.05_{0.01}} & \mathbf{ 0.04_{0.01}} & \mathbf{ 0.41_{0.06}} & \mathbf{ 0.48_{0.10}} & \mathbf{ 0.76_{0.06}} \\
& CAREFL$^\dag$ & \mathbf{ 0.00_{0.00}} & 0.20_{0.13} & 0.20_{0.09} & 0.68_{0.24} & 0.97_{0.33} & 1.94_{0.77} \\
& VACA & 4.44_{1.03} & 5.76_{0.07} & 4.98_{0.10} & 36.19_{1.54} & 28.33_{0.72} & 75.34_{4.58} \\
\midrule
\multirow{3}{*}{\shortstack{\chainthree \\ \textsc{nlin} \\ \citep{SnchezMartn2021VACA} }}& \ours & \mathbf{ 0.00_{0.00}} & \mathbf{ 0.03_{0.01}} & \mathbf{ 0.02_{0.01}} & \mathbf{ 0.52_{0.06}} & \mathbf{ 0.56_{0.03}} & \mathbf{ 1.02_{0.05}} \\
& CAREFL$^\dag$ & \mathbf{ 0.00_{0.00}} & \mathbf{ 0.05_{0.02}} & 0.04_{0.02} & \mathbf{ 0.60_{0.22}} & 0.84_{0.22} & 1.66_{0.41} \\
& VACA & 12.82_{1.00} & 1.54_{0.03} & 1.32_{0.02} & 39.45_{4.12} & 30.93_{2.30} & 84.36_{9.60} \\
\midrule
\multirow{3}{*}{\shortstack{\chainfour \\ \textsc{lin}}}& \ours & \mathbf{ 0.00_{0.00}} & \mathbf{ 0.07_{0.02}} & \mathbf{ 0.04_{0.01}} & \mathbf{ 0.56_{0.08}} & \mathbf{ 0.62_{0.15}} & \mathbf{ 1.54_{0.40}} \\
& CAREFL$^\dag$ & \mathbf{ 0.00_{0.00}} & 0.16_{0.07} & 0.14_{0.04} & \mathbf{ 0.70_{0.28}} & 0.99_{0.20} & 2.85_{0.54} \\
& VACA & 13.14_{0.73} & 3.82_{0.01} & 3.72_{0.05} & 61.85_{5.06} & 49.31_{4.11} & 92.06_{7.93} \\
\midrule
\multirow{3}{*}{\shortstack{\chainfive \\ \textsc{lin}}}& \ours & 0.01_{0.00} & \mathbf{ 0.12_{0.02}} & \mathbf{ 0.08_{0.01}} & \mathbf{ 0.62_{0.19}} & \mathbf{ 0.69_{0.15}} & \mathbf{ 1.91_{0.44}} \\
& CAREFL$^\dag$ & \mathbf{ 0.00_{0.00}} & 0.47_{0.23} & 0.46_{0.22} & \mathbf{ 0.79_{0.41}} & 1.19_{0.25} & 4.21_{0.87} \\
& VACA & 17.31_{0.84} & 5.95_{0.05} & 6.06_{0.08} & 103.75_{10.04} & 80.81_{11.06} & 124.52_{20.86} \\
\midrule
\multirow{3}{*}{\shortstack{\collider \\ \textsc{lin}\\ \citep{SnchezMartn2021VACA}}}& \ours & \mathbf{ 0.00_{0.00}} & \mathbf{ 0.02_{0.01}} & \mathbf{ 0.01_{0.00}} & \mathbf{ 0.46_{0.12}} & 0.56_{0.11} & 0.95_{0.19} \\
& CAREFL$^\dag$ & \mathbf{ 0.00_{0.00}} & \mathbf{ 0.02_{0.01}} & \mathbf{ 0.01_{0.00}} & \mathbf{ 0.39_{0.07}} & \mathbf{ 0.45_{0.05}} & \mathbf{ 0.74_{0.07}} \\
& VACA & 13.45_{0.43} & 0.22_{0.01} & 0.86_{0.02} & 37.22_{3.55} & 28.77_{4.22} & 71.21_{6.73} \\
\midrule
\multirow{3}{*}{\shortstack{\fork \\ \textsc{lin}\\ \citep{Chao2023InterventionalAC}}}& \ours & \mathbf{ 0.00_{0.00}} & \mathbf{ 0.03_{0.01}} & \mathbf{ 0.01_{0.00}} & \mathbf{ 0.52_{0.05}} & \mathbf{ 0.59_{0.08}} & \mathbf{ 1.57_{0.57}} \\
& CAREFL$^\dag$ & \mathbf{ 0.00_{0.00}} & \mathbf{ 0.04_{0.01}} & 0.02_{0.00} & \mathbf{ 0.60_{0.17}} & 0.78_{0.16} & \mathbf{ 2.39_{1.06}} \\
& VACA & 8.75_{0.73} & 0.87_{0.02} & 1.43_{0.02} & 45.84_{4.64} & 34.66_{2.39} & 73.29_{4.70} \\
\midrule
\multirow{3}{*}{\shortstack{\fork \\ \textsc{nlin}\\ \citep{Chao2023InterventionalAC}}}& \ours & \mathbf{ 0.00_{0.00}} & \mathbf{ 0.07_{0.02}} & \mathbf{ 0.07_{0.00}} & \mathbf{ 0.63_{0.16}} & \mathbf{ 0.74_{0.31}} & \mathbf{ 1.84_{0.84}} \\
& CAREFL$^\dag$ & 0.01_{0.01} & 0.11_{0.04} & 0.18_{0.07} & \mathbf{ 0.57_{0.17}} & \mathbf{ 0.77_{0.08}} & \mathbf{ 1.96_{0.17}} \\
& VACA & 5.09_{0.60} & 2.01_{0.03} & 3.19_{0.06} & 49.22_{5.48} & 42.13_{2.95} & 101.02_{18.94} \\
\midrule
\multirow{3}{*}{\shortstack{\largebd \\ \textsc{nlin}\\ \citep{Geffner2022DeepEC}}} & \ours & \mathbf{1.51_{0.04}} & \mathbf{0.02_{0.00}} & \mathbf{0.01_{0.00}} & \mathbf{0.52_{0.10}} & \mathbf{0.60_{0.17}} & \mathbf{3.05_{0.66}} \\
&CAREFL$^\dag$ & \bfseries 1.51_{0.05} & 0.05_{0.01} & 0.08_{0.01} & \bfseries 0.84_{0.47} & 1.18_{0.17} & 8.25_{1.29} \\
& VACA & 53.66_{2.07} & 0.39_{0.00} & 0.82_{0.02} & 164.92_{11.10} & 137.88_{15.72} & 167.94_{25.75} \\

\midrule
\multirow{3}{*}{\shortstack{\simpson \\ \textsc{nlin}\\ \citep{Geffner2022DeepEC}}}& \ours & \mathbf{ 0.29_{0.01}} & \mathbf{ 0.04_{0.01}} & \mathbf{ 0.02_{0.00}} & \mathbf{ 0.58_{0.18}} & \mathbf{ 0.63_{0.26}} & \mathbf{ 1.57_{0.64}} \\
& CAREFL$^\dag$ & \mathbf{ 0.29_{0.01}} & \mathbf{ 0.04_{0.01}} & 0.03_{0.00} & \mathbf{ 0.69_{0.32}} & 1.02_{0.26} & 2.95_{0.79} \\
& VACA & 18.97_{0.66} & 0.60_{0.00} & 1.19_{0.02} & 54.76_{7.46} & 43.69_{7.92} & 87.01_{16.33} \\

\midrule
\multirow{3}{*}{\shortstack{\simpson \\ \textsc{symprod}\\ \citep{Geffner2022DeepEC}}}& \ours & \mathbf{ 0.00_{0.00}} & \mathbf{ 0.07_{0.01}} & \mathbf{ 0.12_{0.02}} & \mathbf{ 0.59_{0.17}} & \mathbf{ 0.60_{0.11}} & \mathbf{ 1.51_{0.30}} \\
& CAREFL$^\dag$ & \mathbf{ 0.00_{0.00}} & 0.10_{0.02} & 0.17_{0.04} & \mathbf{ 0.49_{0.15}} & 0.81_{0.19} & 1.91_{0.33} \\
& VACA & 13.85_{0.64} & 0.89_{0.00} & 1.50_{0.04} & 49.26_{4.09} & 37.78_{3.41} & 79.20_{14.60} \\

\midrule
\multirow{3}{*}{\shortstack{\triangl \\ \textsc{lin}\\ \citep{SnchezMartn2021VACA}}}& \ours & 0.00_{0.00} & 0.24_{0.05} & 0.21_{0.05} & \mathbf{ 0.54_{0.05}} & \mathbf{ 0.56_{0.04}} & \mathbf{ 1.05_{0.07}} \\
& CAREFL$^\dag$ & \mathbf{ 0.00_{0.00}} & \mathbf{ 0.15_{0.06}} & \mathbf{ 0.14_{0.03}} & \mathbf{ 0.60_{0.20}} & 0.75_{0.05} & 1.50_{0.10} \\
& VACA & 3.82_{0.69} & 7.49_{0.07} & 7.22_{0.17} & 27.46_{1.53} & 21.61_{1.00} & 67.00_{6.23} \\
\midrule
\multirow{3}{*}{\shortstack{\triangl \\ \textsc{nlin}\\ \citep{SnchezMartn2021VACA}}}& \ours & \mathbf{ 0.00_{0.00}} & \mathbf{ 0.12_{0.03}} & \mathbf{ 0.13_{0.02}} & \mathbf{ 0.52_{0.07}} & \mathbf{ 0.58_{0.07}} & \mathbf{ 1.07_{0.12}} \\
& CAREFL$^\dag$ & \mathbf{ 0.00_{0.00}} & \mathbf{ 0.12_{0.03}} & 0.17_{0.03} & \mathbf{ 0.57_{0.18}} & \mathbf{ 0.83_{0.26}} & \mathbf{ 1.68_{0.62}} \\
& VACA & 7.71_{0.60} & 4.78_{0.01} & 4.19_{0.04} & 28.82_{1.21} & 23.00_{0.55} & 70.65_{3.70} \\

        \bottomrule
        \end{tabular}
    }
    \vspace{-\baselineskip}
\end{table}

\begin{figure}[h]
  \centering
  \begin{subfigure}[b]{0.48\textwidth}
  \centering
    \includegraphics[width=0.95\textwidth]{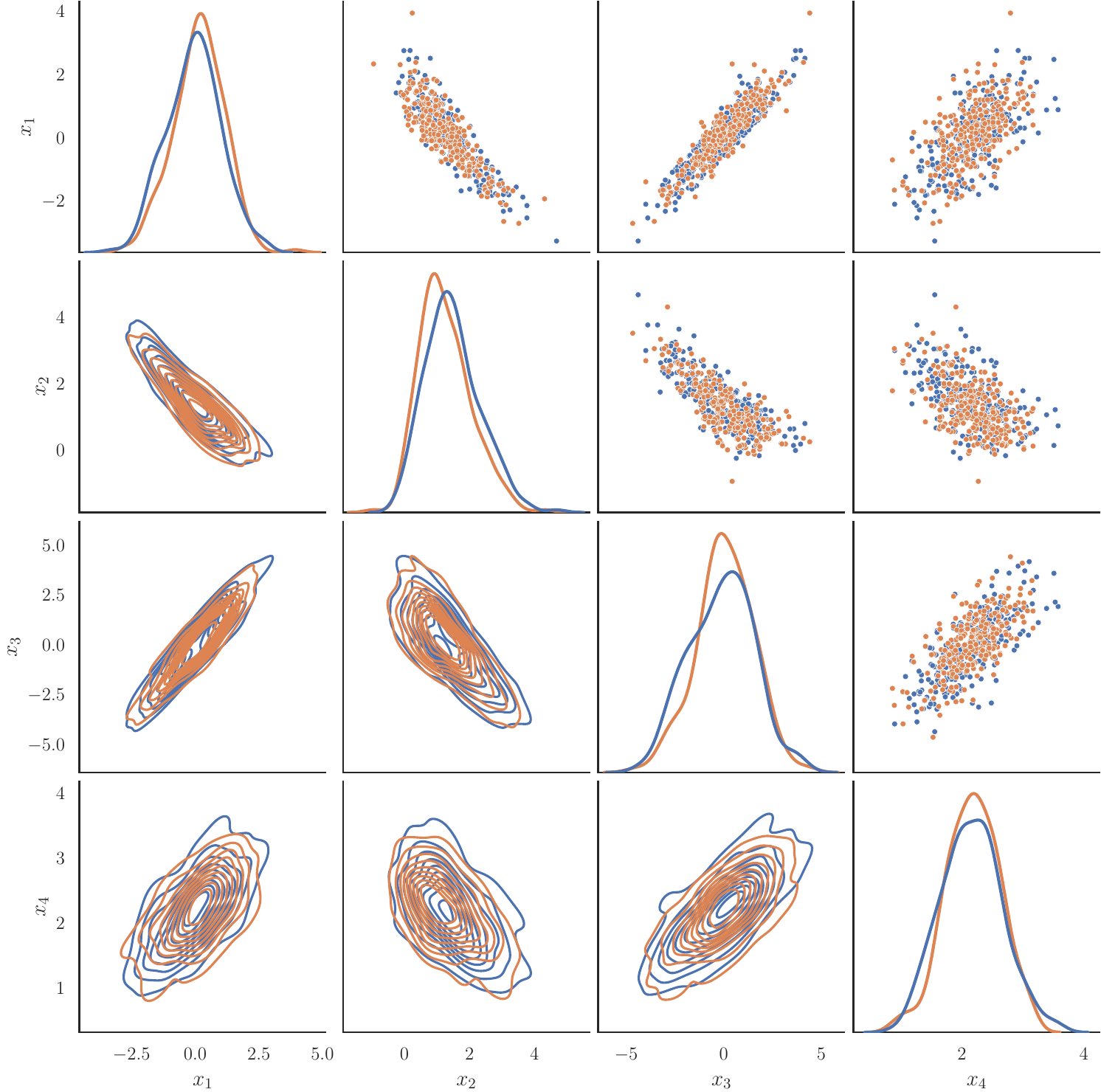}
    \caption{Observational distribution.}
    \label{figapp:pair_plot_simpson_obs}
  \end{subfigure}
  \begin{subfigure}[b]{0.48\textwidth}
  \centering
    \includegraphics[width=0.95\textwidth]{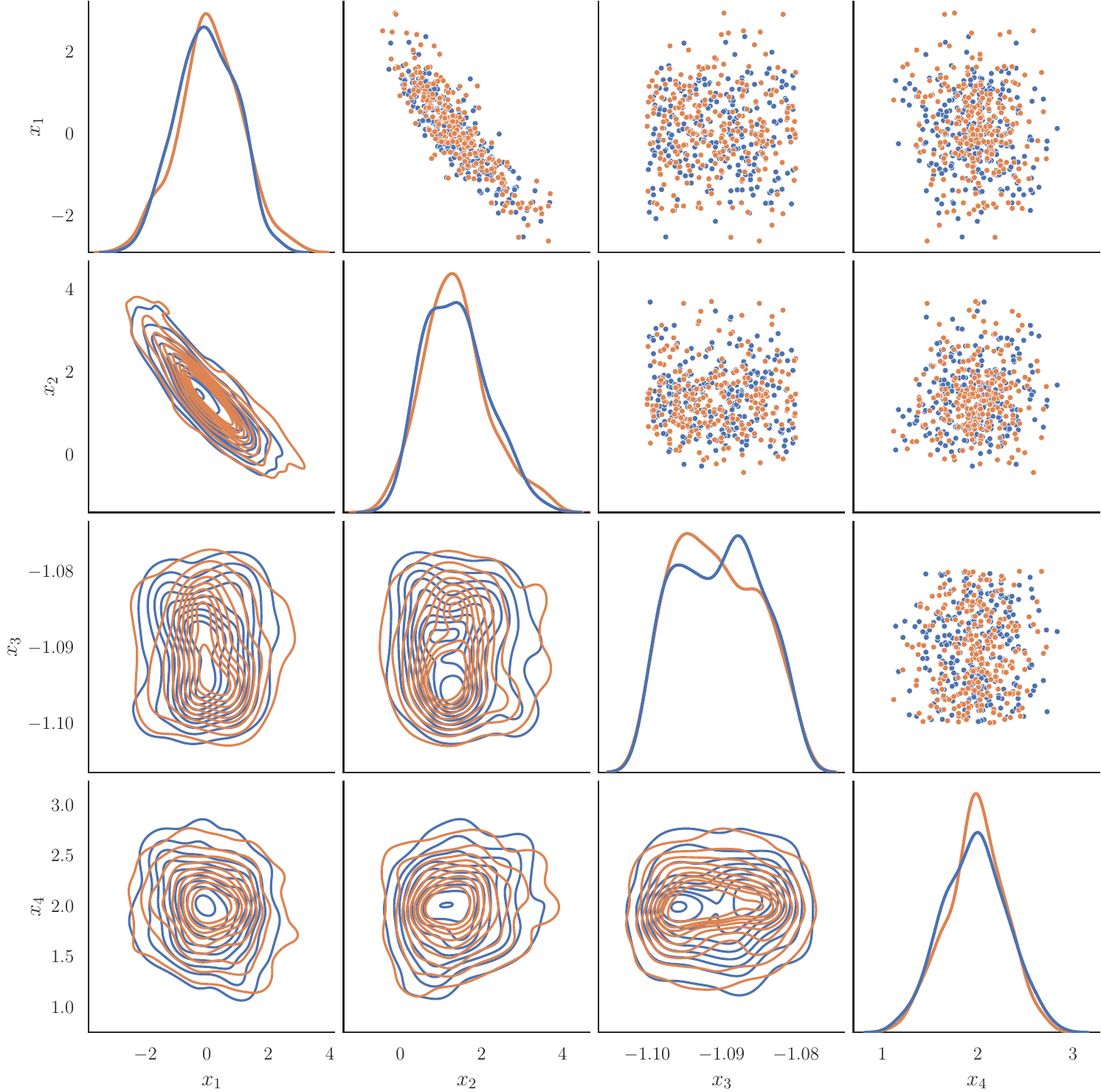}
    \caption{Interventional distribution  $\doop(\ervx_3 = -1.09)$. }
    \label{figapp:pair_plot_simpson_int}
  \end{subfigure} 
    \caption{Pair plot of real (in blue) and generated (in orange) data of \simpson[nlin]. On the left are samples from the true and learnt observational distribution. On the right are samples from the true and learnt interventional distribution when $\doop(\ervx_3 = -1.09)$. The plot illustrates that the dependency of $\ervx_4$ on the ancestors of $\ervx_3$, namely $\ervx_1$ and $\ervx_2$, is effectively broken. }
    \label{figapp:pair_plot_simpson}
\end{figure}

\begin{figure}[h]
  \centering
  \begin{subfigure}[b]{0.48\textwidth}
  \centering
    \includegraphics[width=0.95\textwidth]{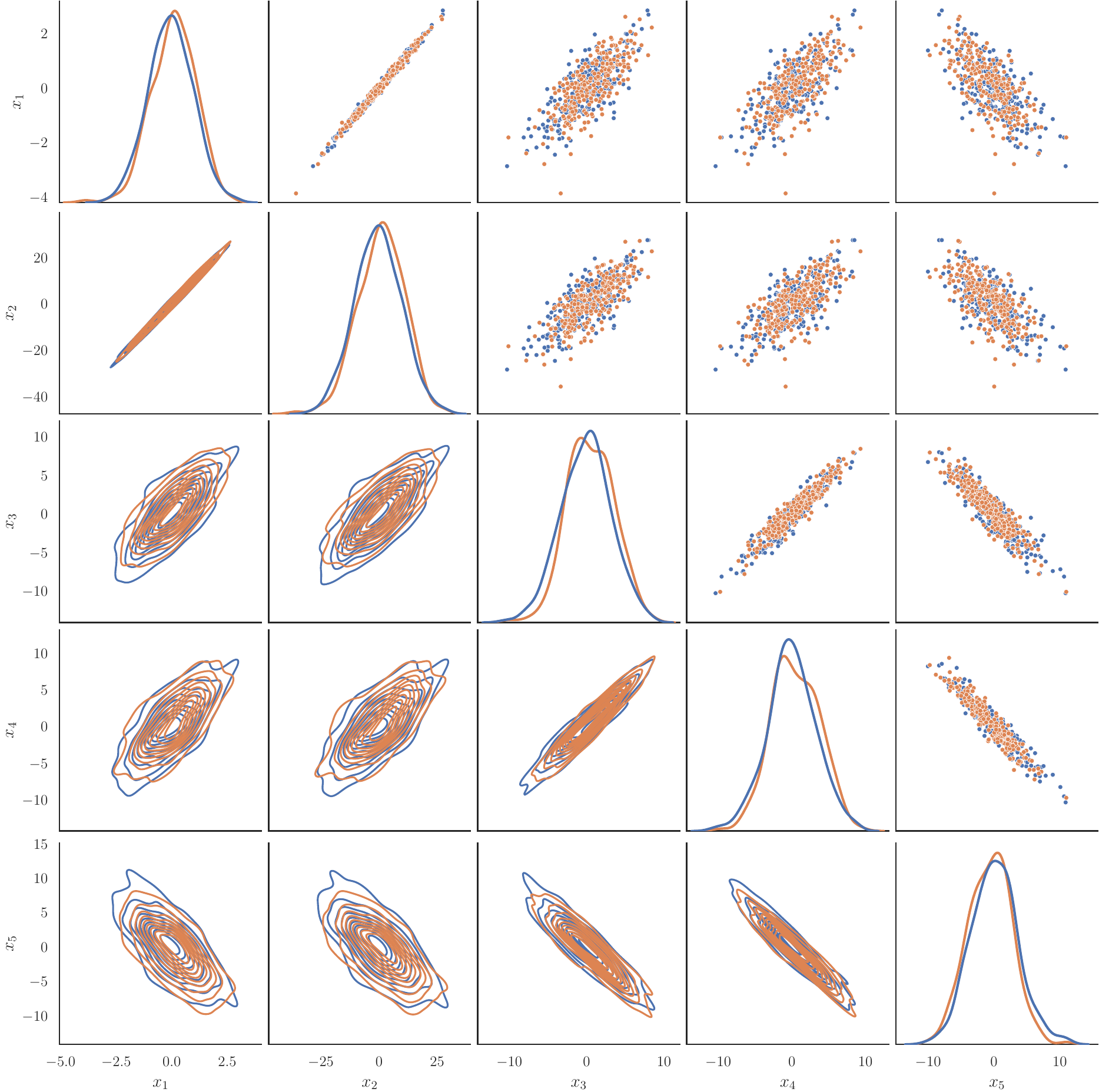}
    \caption{Observational distribution.}
    \label{figapp:pair_plot_chain5_obs}
  \end{subfigure}
  \begin{subfigure}[b]{0.48\textwidth}
  \centering
    \includegraphics[width=0.95\textwidth]{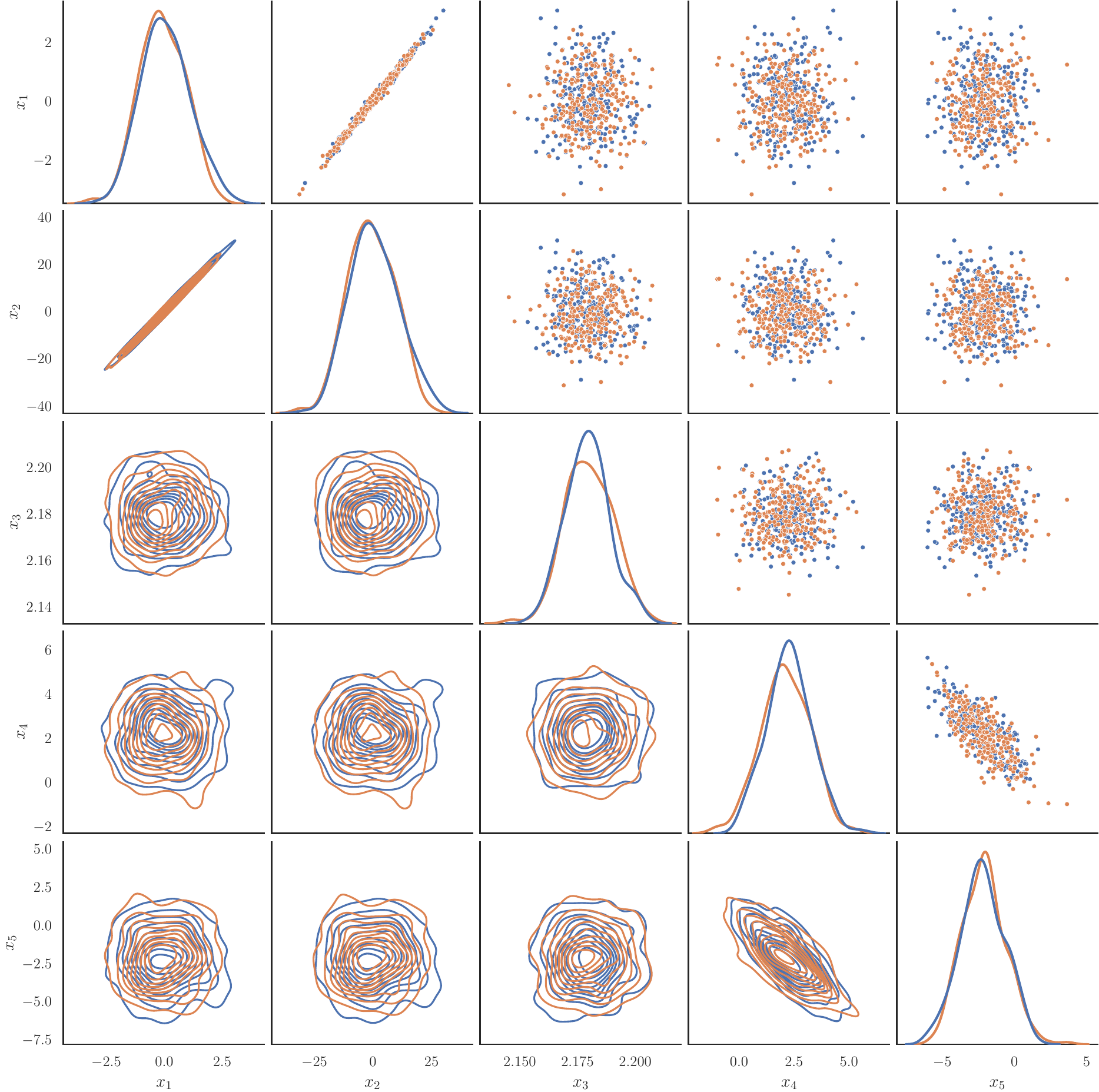}
    \caption{Interventional distribution  $\doop(\ervx_3 = 2.18)$. }
    \label{figapp:pair_plot_chain5_int}
  \end{subfigure} 
    \caption{Pair plot of real (in blue) and generated (in orange) data of \chainfive[lin]. On the left are samples from the true and learnt observational distribution. On the right are samples from the true and learnt interventional distribution when $\doop(\ervx_3 = 2.18)$. The plot illustrates that the dependency of $\ervx_4$ and $\ervx_5$ on the ancestors of $\ervx_3$, namely $\ervx_1$ and $\ervx_2$, is effectively broken. }
    \label{figapp:pair_plot_chain5}
\end{figure}
    \section{Details on the fairness use-case}  %
\label{app:use_case}

In this section, we provide additional details on the use-case of fairness auditing and classification using the German dataset \citep{GermanData}, whose causal graph is shown in \cref{fig:causal-graph-german}.

\tikzstyle{German}=[draw,circle,text width=45pt, text height=12pt, text centered, inner sep=1pt, thick, anchor=center]

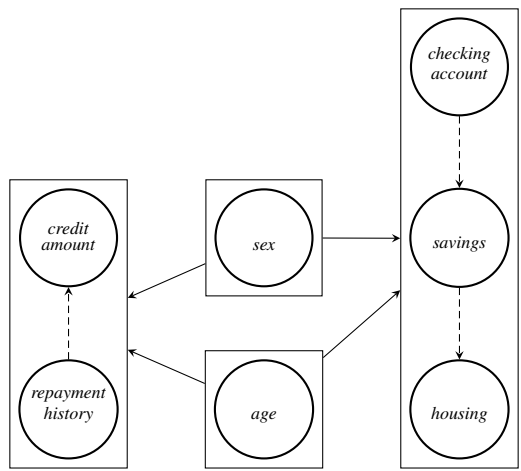
\begin{figure}[t]
    \centering
    \begin{tikzcd}[every matrix/.append style={name=m}, 
        row sep=large, column sep=large,
        scale cd=0.75,
        /tikz/execute at end picture={
            \node (R) [rectangle, draw, fit=(R1) (R2)] {};
            \node (S) [rectangle, draw, fit=(S1)] {};
            \node (A) [rectangle, draw, fit=(A1)] {};
            \node (H) [rectangle, draw, fit=(H1) (H2) (H3)] {};
            \draw[->] (S) -- (R);
            \draw[->] (S) -- (H);
            \draw[->] (A) -- (R);
            \draw[->] (A) -- (H);
        }
    ]
        & & |[German,alias=H1]| {\footnotesize \shortstack{\textit{checking}\\\textit{account}}} \arrow[d, dashed]\\
        |[German,alias=R1]| {\footnotesize \shortstack{\textit{credit}\\\textit{amount}}} & |[German,alias=S1]| {\footnotesize \shortstack{\textit{sex}}} & |[German, alias=H2]| {\footnotesize \shortstack{\textit{savings}}} \arrow[d, dashed]\\
        |[German, alias=R2]| {\footnotesize \shortstack{\textit{repayment}\\\textit{history}}} \arrow[u, dashed] & |[German,alias=A1]| {\footnotesize \textit{age}} & |[German,alias=H3]| {\footnotesize \textit{housing}}
    \end{tikzcd}
    \caption{Partial causal graph used for the German Credit dataset~\citep{GermanData}. Rectangles show the strongly connected components (SCCs) grouping different variables. Solid arrows represent causal relationships between SCCs, and dashed arrows represent an arbitrary order picked to learn the joint distribution of each SCC with an ANF. See \cref{app:partial-knowledge} for an in-depth explanation on the proposed method to deal with partial causal graphs using causal NFs.}
    \label{fig:causal-graph-german}
\end{figure}

\paragraph{Training}
For this section, we performed minimal hyperparameter tuning, and only tested a few combinations by hand. 
We decided to use a Neural Spline Flow (NSF)~\citep{durkan2019neural} for the single layer of the \ours, which internally uses an MLP with 3 layers, and 32 hidden units each. 
We use Adam~\citep{Kingma2014AdamAM} as the optimizer, with a learning rate of \num{0.01}, along with a plateau scheduler with a decay factor of \num{0.9} and a patience parameter of 60 epochs. 
The training is performed for \num{1000} epochs, and the results are reported using 5-fold cross-validation with a $80-10-10$ split for train, validation, and test data.

\paragraph{Results}
On addition to the results from \cref{sec:use-cases}, \cref{figapp:pair_plot_german} shows two pair plots from one of the \num{5} runs, chosen at random. 
The true empirical distribution is shown in blue, and the learnt distribution by \ours\ is depicted in orange. 
Specifically, \cref{figapp:pair_plot_german_obs} illustrates the observational distribution, and \cref{figapp:pair_plot_german_int} the interventional distribution, obtained when we intervene on the \textit{sex} variable and set it to 1, \ie, $\doop(\ervx_1 = 1)$. 
We can observe that \ours\ achieves a remarkable fit in both cases, demonstrating its capability to handle discrete data, and partial knowledge of the causal graph.

\begin{figure}[h]
  \centering

  \begin{subfigure}[b]{0.7\textwidth}
  \centering
    \includegraphics[width=0.99\textwidth]{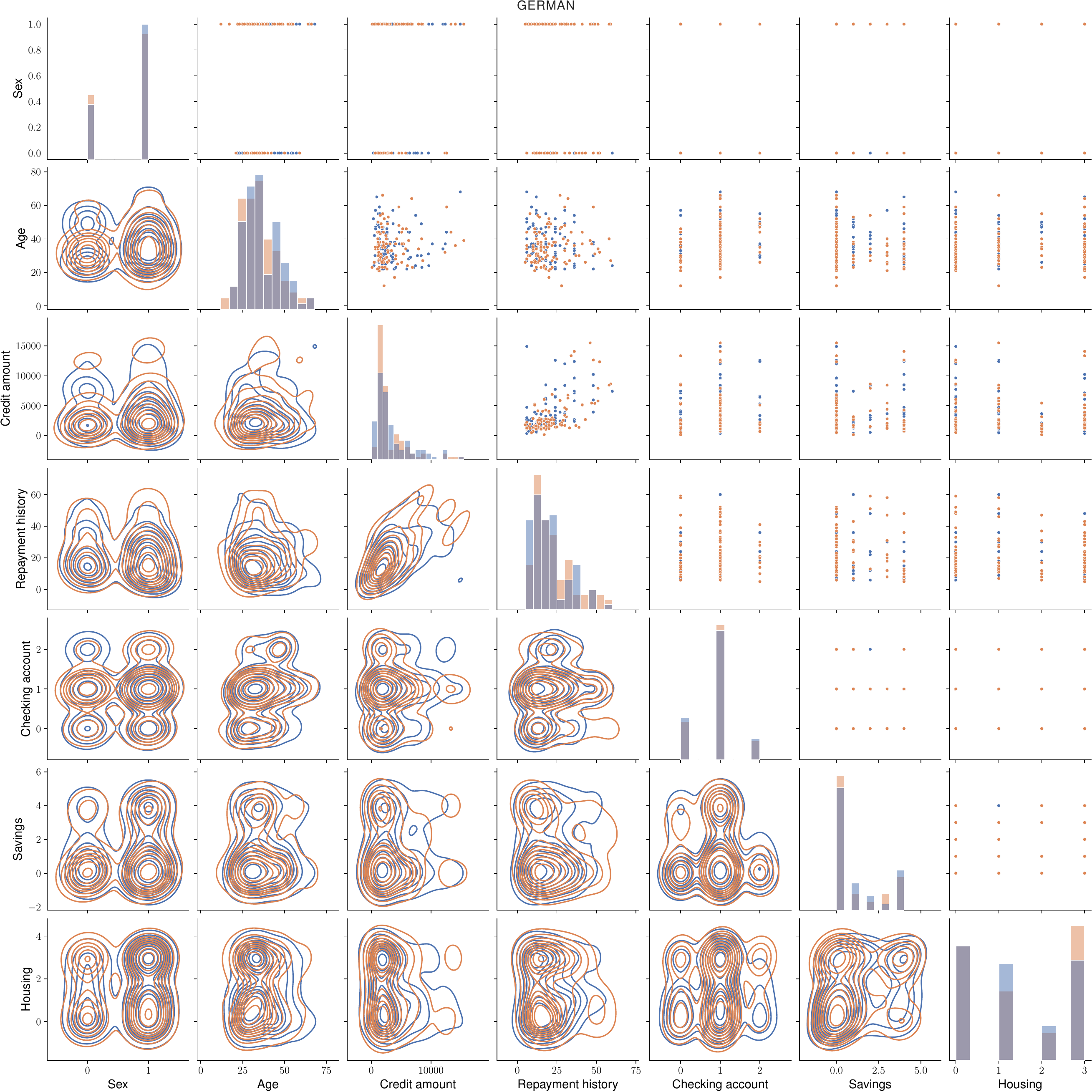}
    \caption{Observational distribution.}
    \label{figapp:pair_plot_german_obs}
  \end{subfigure}
  \begin{subfigure}[b]{0.7\textwidth}
  \centering
    \includegraphics[width=0.99\textwidth]{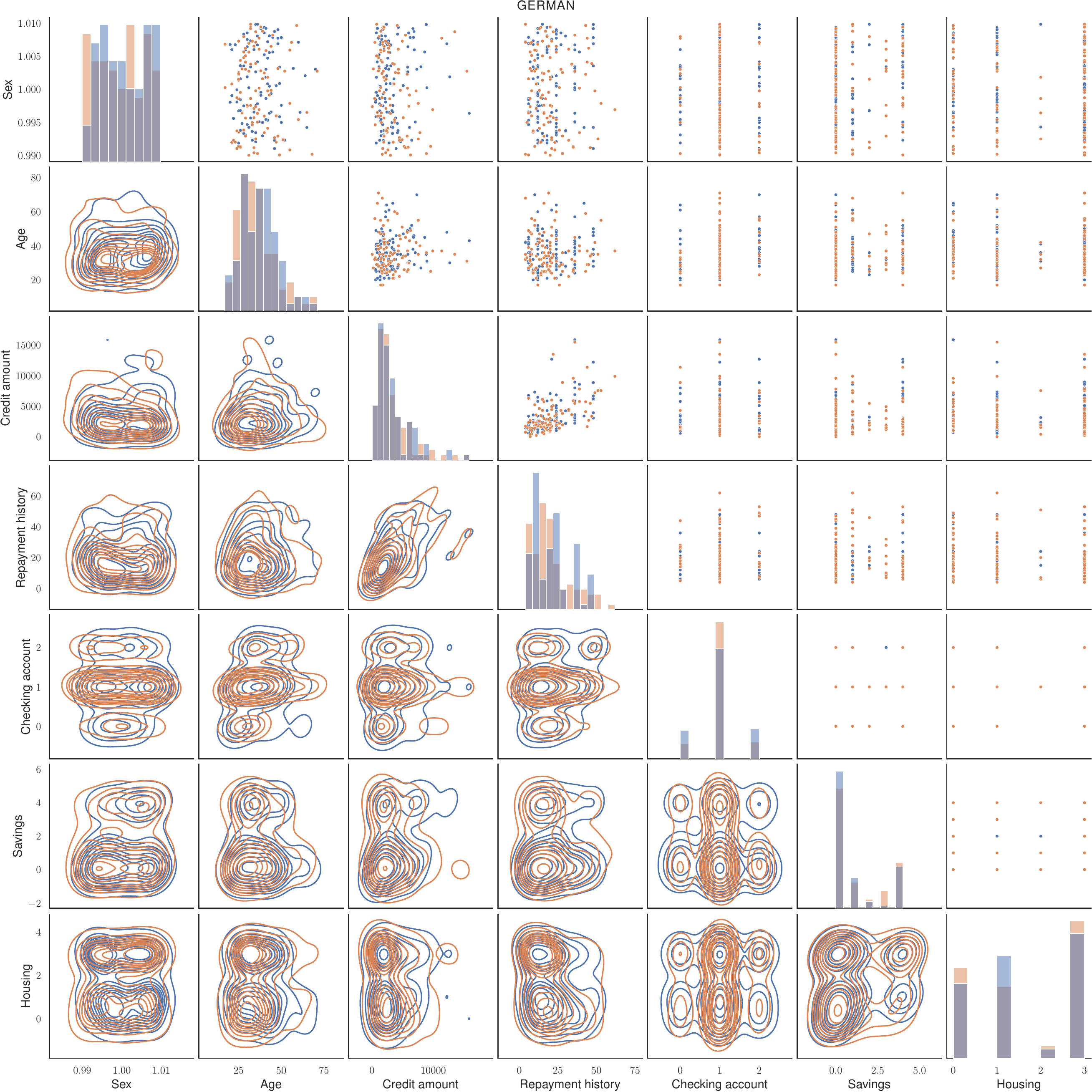}
    \caption{Interventional distribution  $\doop(\ervx_1 = 1)$. }
    \label{figapp:pair_plot_german_int}
  \end{subfigure} 
    \caption{Pair plot of real (in blue) and generated (in orange) data of German dataset. Above, the samples from the true and learnt observational distributions. Below, the samples from the true and learnt interventional distributions when $\doop(\ervx_1 = 1)$. The plot illustrates that \ours\ is able to handle discrete data and correctly intervene. }
    \label{figapp:pair_plot_german}
\end{figure}

\end{document}